\theoremstyle{plain}
\newtheorem{theorem}{Theorem}[section]
\newtheorem{lemma}[theorem]{Lemma}
\theoremstyle{definition}
\newtheorem{assumption}[theorem]{Assumption}
\theoremstyle{remark}
\icmltitlerunning{Averaging $n$-step Returns Reduces Variance}
\def\eqref#1{equation~\ref{#1}}
\def\1{\bm{1}}
\def\vone{{\bm{1}}}
\def\vc{{\bm{c}}}
\def\vd{{\bm{d}}}
\def\mD{{\bm{D}}}
\def\mP{{\bm{P}}}
\def\mPhi{{\bm{\Phi}}}
\def\mPi{{\bm{\Pi}}}
\DeclareMathAlphabet{\mathsfit}{\encodingdefault}{\sfdefault}{m}{sl}
\SetMathAlphabet{\mathsfit}{bold}{\encodingdefault}{\sfdefault}{bx}{n}
\newcommand{\E}{\mathbb{E}}
\newcommand{\R}{\mathbb{R}}
\newcommand{\Var}{\mathrm{Var}}
\newcommand{\Cov}{\mathrm{Cov}}
\begin{document}

\twocolumn[
\icmltitle{Averaging $n$-step Returns Reduces Variance in Reinforcement Learning}



\icmlsetsymbol{equal}{*}

\begin{icmlauthorlist}
\icmlauthor{Brett Daley}{uofa,amii}
\icmlauthor{Martha White}{uofa,amii,cifar}
\icmlauthor{Marlos C.~Machado}{uofa,amii,cifar}
\end{icmlauthorlist}

\icmlaffiliation{uofa}{Department of Computing Science, University of Alberta, Edmonton, AB, Canada}
\icmlaffiliation{amii}{Alberta Machine Intelligence Institute}
\icmlaffiliation{cifar}{Canada CIFAR AI Chair}

\icmlcorrespondingauthor{Brett Daley}{brett.daley@ualberta.ca}

\icmlkeywords{Reinforcement Learning, Temporal-Difference Learning, Credit Assignment, Variance Reduction, Sample Efficiency}

\vskip 0.3in
]



\printAffiliationsAndNotice{}  

\begin{abstract}
    Multistep returns, such as $n$-step returns and $\lambda$-returns, are commonly used to improve the sample efficiency of reinforcement learning (RL) methods.
    The variance of the multistep returns becomes the limiting factor in their length;
    looking too far into the future increases variance and reverses the benefits of multistep learning.
    In our work, we demonstrate the ability of compound returns---weighted averages of $n$-step returns---to reduce variance.
    We prove for the first time that any compound return with the same contraction modulus as a given $n$-step return has strictly lower variance.
    We additionally prove that this variance-reduction property improves the finite-sample complexity of temporal-difference learning under linear function approximation.
    Because general compound returns can be expensive to implement, we introduce two-bootstrap returns which reduce variance while remaining efficient, even when using minibatched experience replay.
    We conduct experiments showing that compound returns often increase the sample efficiency of $n$-step deep RL agents like DQN and PPO.
    \looseness=-1
\end{abstract}

\section{Introduction}

Efficiently learning value functions is critical for reinforcement learning (RL) algorithms.
Value-based RL methods \citep[e.g.,][]{watkins1989learning,rummery1994line,mnih2015human} encode policies implicitly in a value function, and policy evaluation is the principal mechanism of learning.
Even when RL methods instead learn parametric policies, accurate value functions are needed to guide policy improvement \citep[e.g.,][]{silver2014deterministic,lillicrap2016continuous,fujimoto2018addressing,haarnoja2018soft} or to serve as a baseline
\citep[e.g.,][]{barto1983neuronlike,sutton1984temporal,williams1992simple,schulman2015trust}.
Quicker policy evaluation is therefore useful to many RL algorithms.

One way to achieve faster value-function learning is through multistep returns, in which more than one reward following an action is used to construct a prediction target.
Multistep returns are used extensively in deep RL
\citep[e.g.,][]{schulman2015high,mnih2016asynchronous,munos2016safe,schulman2017proximal,hessel2018rainbow,schrittwieser2020mastering,wurman2022outracing,chebotar2023q,schwarzer2023bigger},
where the value function is approximated by a neural network.
In theory, multistep returns incorporate more information regarding future outcomes, leading to faster credit assignment and, in turn, faster learning.
However, faster learning is not guaranteed in practice because looking farther into the future increases variance and can end up requiring more samples to estimate the mean.
These two opposing factors must be balanced to achieve fast and stable learning.
The most common multistep returns are $n$-step returns and $\lambda$-returns,
both of which span between standard temporal-difference \citep[TD;][]{sutton1988learning} learning ($n=1$ or $\lambda=0$) and Monte Carlo learning ($n \to \infty$ or $\lambda = 1$).
\looseness=-1

Implementation has generally been the main consideration in choosing which of these return estimators to use.
When the value function is a lookup table or a linear parametric function, the $\lambda$-return is preferred for its efficient implementation using TD($\lambda$) with eligibility traces \citep{sutton1988learning}.
However, in off-policy deep RL, the value function is a neural network trained with an experience replay buffer \citep{lin1992self}, so the extra bootstrapping performed by the $\lambda$-return becomes costly and $n$-step returns are more common.
Although recent work has explored ways to efficiently train neural networks using replayed $\lambda$-returns \citep[e.g.,][]{munos2016safe,harb2016investigating,daley2019reconciling}, $\lambda$-returns generally remain more expensive or complex to implement than $n$-step returns.

Despite its more complicated implementation in deep RL, the $\lambda$-return is interesting because it averages many $n$-step returns together.
In this paper, we show that this averaging reduces variance compared to $n$-step returns and that the variance reduction leads to faster learning.
More generally, the $\lambda$-return is an example of a \emph{compound} return, or a weighted average of two or more $n$-step returns \citep[][Sec.~12.1]{sutton2018reinforcement}.
The $\lambda$-return is the most prevalent compound return, but others such as the $\gamma$-return \citep{konidaris2011td_gamma} and the $\Omega$-return \citep{thomas2015policy} have also been shown to be effective.
While compound returns are known to be theoretically sound \citep[][Sec.~7.2]{watkins1989learning}, they have not been extensively analyzed.
Compound returns are ideally suited for deep RL because experiences are already stored in a replay memory, making it easy to compute several $n$-step returns cheaply and then average them together in any desired way.

In this paper, we formally investigate the variance of compound returns, deriving a variance model for arbitrary compound returns.
Additionally, we prove that compound returns have a \emph{variance-reduction property}:
any compound return with the same contraction modulus as an $n$-step return has strictly lower variance if the TD errors are not perfectly correlated.
As a corollary, there exists a $\lambda$-return for every $n$-step return with the same contraction modulus but lower variance, implying a better bias-variance trade-off.
\looseness=-1

Our theoretical results suggest one should prefer $\lambda$-returns over $n$-step returns, but these are expensive to implement for deep RL.
Thus, we also introduce an efficient approximation called \textbf{Pi}ecewise $\bm{\lambda}$-\textbf{R}eturn (PiLaR or Pilar).
Pilar is computed by averaging just two $n$-step returns together---the most efficient compound return possible.
The lengths of the $n$-step returns are chosen to put weights on the TD errors that are close to those assigned by TD($\lambda$), thereby achieving similar variance reduction as the $\lambda$-return.
We show that Pilars improve sample efficiency compared to $n$-step returns when used to train Deep Q-Network \citep[DQN;][]{mnih2015human}, which is the foundation of many other off-policy deep RL methods.
In on-policy methods like PPO \citep{schulman2017proximal} where such an approximation is not necessary, we also show similar improvements for $\lambda$-returns, also known as Generalized Advantage Estimation in this context \citep{schulman2015high}, over $n$-step returns.
\looseness=-1

\section{Background}

Value-based RL agents interact with their environments to iteratively improve estimates of their expected cumulative reward.
By convention, the agent-environment interaction is modeled as a Markov decision process (MDP) described by the tuple $(\mathcal{S},\mathcal{A},p,r)$.
At time $t$, the agent observes the environment state, $S_t \in \mathcal{S}$, and selects an action, $A_t \in \mathcal{A}$, accordingly.
The environment then transitions to a new state, $S_{t+1} \in \mathcal{S}$, with probability $p(S_{t+1} \mid S_t,A_t)$, and returns a scalar reward, $\smash{R_{t+1} \defeq r(S_t,A_t,S_{t+1})}$.
We assume the agent samples each action with probability $\pi(A_t \mid S_t)$, where $\pi$ is a stochastic policy.

In the standard prediction problem, the agent's goal is to learn a value function
$v \colon \mathcal{S} \to \mathbb{R}$
that estimates the expected discounted return $v_\pi(s)$ attainable in each state~$s$.
Letting
$\smash{G_t \defeq \sum_{i=0}^\infty \gamma^i R_{t+1+i}}$
be the observed Monte Carlo return at time $t$, where $\gamma \in [0,1]$ is a discount factor and actions are implicitly sampled from $\pi$, then
$\smash{v_\pi(s) \defeq \mathbb{E} \! \left[G_t \mid S_t = s \right]}$.

The basic learning operation of value-based RL is a \emph{backup}, which has the general form
\begin{equation}
    \label{eq:backup}
    v(S_t) \gets v(S_t) + \alpha_t \left( \hat{G}_t - v(S_t) \right)
    \,,
\end{equation}
where $\smash{\hat{G}}_t$ is a return estimate (the \emph{target}) and $\alpha_t \in (0,1]$ is the step size.
Substituting various estimators for $\smash{\hat{G}}_t$ leads to different learning properties.
For instance, the Monte Carlo return could be used, but it suffers from high variance, and delays learning until the end of an episode.
To reduce the variance and delay, return estimates can \emph{bootstrap} from the value function.
Bootstrapping is the fundamental mechanism underlying TD learning \citep{sutton1988learning}, and the extent to which the chosen estimator, $\hat{G}_t$, bootstraps is a major factor in learning performance.

The most basic multistep version of TD learning uses the $n$-step return as its target in \Cref{eq:backup}:
\begin{equation}
    \label{eq:nstep_return}
    \nstep{n}_t
    \defeq \sum_{i=0}^{n-1} \gamma^i R_{t+1+i} + \gamma^n v(S_{t+n})
    \,.
\end{equation}
Bootstrapping introduces bias in the update, since
generally
$\mathbb{E} \left[v(S_{t+n}) \mid S_t \right] \neq v_\pi(S_{t+n})$
before convergence, but it greatly reduces variance.
The case of $n=1$ corresponds to the classic TD(0) algorithm,
$v(S_t) \gets v(S_t) + \alpha_t \delta_t$,
where
$\smash{\delta_t \defeq R_{t+1} + \gamma v(S_{t+1}) - v(S_t)}$
is the TD error.
However, bootstrapping after just one time step is slow because long-term reward information must propagate indirectly through the value function, requiring many behavioral repetitions before $v$ approximates $v_\pi$ well.
Larger values of $n$ consider more rewards per update and assign credit faster, but at the price of increased variance, with $n \to \infty$ reverting to the Monte Carlo return.
The choice of $n$ thus faces a bias-variance trade-off \citep{kearns2000bias}, with intermediate values typically performing best in practice \citep[][Sec.~7.1]{sutton2018reinforcement}.

Another type of multistep return is the $\lambda$-return, used by TD($\lambda$) algorithms \citep{sutton1988learning}, which is equivalent to an exponentially weighted average of $n$-step returns:
\halfglue
\begin{equation*}
    G^\lambda_t \defeq (1-\lambda) \sum_{n=1}^\infty \lambda^{n-1} \nstep{n}_t
    \,,
\end{equation*}
where $\lambda \in [0,1]$.
The $\lambda$-return is just one way to average $n$-step returns together, but any weighted average is possible.
Such averages are known as compound returns \citep[][Sec.~12.1]{sutton2018reinforcement}, formally expressed as
\halfglue
\begin{equation}
    \label{eq:compound_return}
    G^\vc_t \defeq \sum_{n=1}^\infty c_n \nstep{n}_t
    \,,
\end{equation}
where $\vc$ is an infinite sequence of nonnegative weights with the constraint $\sum_{n=1}^\infty c_n = 1$.
\Cref{eq:compound_return} is a strict generalization of all of the aforementioned return estimators;
however, it technically constitutes a \emph{compound} return if and only if at least two of the weights are nonzero---otherwise, it reduces to an $n$-step return.
All choices of weights that sum to $1$ are equally valid in the sense that using their compound returns as the target in \Cref{eq:backup} will converge to $v_\pi$ under the standard step-size criteria of \citet{robbins1951stochastic}.
However, in practice, the choice of weights greatly affects the bias and variance of the estimator and, as a consequence, the empirical rate of convergence to $v_\pi$.
Furthermore, different choices of weights will vary in the amount of computation needed;
sparse weights require less bootstrapping and therefore less computation.
A principal contribution of this paper is to shed light on the trade-offs between these factors and to develop new compound returns that favorably balance these considerations (see \Cref{sec:approximate}).

Many RL agents are control agents, meaning they do not just predict $v_\pi$ for a fixed policy, but also use these estimates to update the behavior policy during training.
One way to do this is Q-learning \citep{watkins1989learning}.
Rather than learning a state-value function $v$, the agent learns an action-value function
$q \colon \mathcal{S} \times \mathcal{A} \to \mathbb{R}$
that estimates the expected return $q_*(s,a)$ earned by following an optimal policy after taking action $a$ in state $s$.
The estimated value of a state is therefore $v(s) = \max_{a \in \mathcal{A}} q(s,a)$, and all of the previously discussed return estimators apply after making this substitution.
Backups are conducted as before, but now operate on $q(S_t,A_t)$ instead of $v(S_t)$.

Learning is off-policy in this setting, since the agent now predicts returns for a greedy policy that differs from the agent's behavior.
Any multistep return is therefore biased unless the expectation is explicitly corrected, e.g., by importance sampling \citep{kahn1951estimation}.
However, it is common practice to ignore this bias in deep RL, and recent research has even suggested that doing so is more effective with both $n$-step returns \citep{hernandez2019understanding} and $\lambda$-returns \citep{daley2019reconciling,kozuno2021revisiting}, the latter of which become Peng's Q($\lambda$) \citep{peng1996incremental}.
For these reasons, we also forgo off-policy corrections in our work.

\section{Variance Analysis}
\label{sec:analysis}

Our main goal in this section is to derive conditions for when a compound return reduces variance compared to a given $n$-step return.
We call this the \emph{variance-reduction property} (see \Cref{theorem:variance_reduction}).
An important consequence of this property is that when these conditions are met and both returns are chosen to have the same contraction modulus, the compound return needs fewer samples than the $n$-step return to converge (see \Cref{theorem:ft_analysis}).

The outline of this section is as follows.
We first derive variance models for $n$-step returns and compound returns under generalized assumptions extending prior work (\Cref{subsec:characterizing_variance}) and then develop a way to pair different return families by equating their worst-case bias (\Cref{subsec:erp}).
This sets up our principal theoretical result, the variance-reduction property of compound returns (\Cref{subsec:vrp}).
We also conduct a finite-time analysis of TD learning with compound returns to show that this variance reduction does improve theoretical sample complexity (\Cref{subsec:ft_analysis}).
Finally, we experimentally demonstrate that $\lambda$-returns---an instance of compound returns---outperform $n$-step returns in a prediction task as indicated by our theory (\Cref{subsec:lambda_case_study}).

\subsection{Characterizing Compound-Return Variance}
\label{subsec:characterizing_variance}

\halfglue

We start by developing reasonable models for the variance of $n$-step returns and compound returns, building on the work of \citet{konidaris2011td_gamma}.
We note that our real quantity of interest in this section is the conditional variance of the backup error
$\smash{\hat{G}_t - v(S_t)}$.
The degree of this quantity's deviation from its expected value is what ultimately impacts the performance of value-based RL methods in \Cref{eq:backup}.
Nevertheless, this turns out to be the same as the conditional variance of the return $\smash{\hat{G}_t}$, since $v(S_t)$ contributes no randomness:
$\smash{\Var[\hat{G}_t - v(S_t) \mid S_t]} = \smash{\Var[\hat{G}_t \mid S_t]}$.
This equivalence allows us to interchange the variances of a return and its error, depending on which is more computationally convenient.

Modeling a compound return's variance is challenging because it typically requires making assumptions about how the variance of $n$-step returns increase as a function of $n$, as well as how strongly correlated different $n$-step returns are.
If these assumptions are too strong, the derived variances fail to reflect reality and lead to poorly informed algorithmic choices.
For instance, consider the following compound return, which we call a \emph{two-bootstrap return}:
\halfglue
\begin{equation}
    \label{eq:two-bootstrap}
    (1-c) \nstep{n_1}_t + c \nstep{n_2}_t
    \!,\,
    c \in (0,1)
    ,\,
    n_1 < n_2
    \,.
\end{equation}
Let $\smash{\sigma_n^2 \defeq \Var[\nstep{n}_t \mid S_t]}$.
The variance of the above is
\halfglue
\begin{equation*}
    (1-c)^2 \sigma_1^2 + c^2 \sigma_2^2 + 2c(1-c)\sigma_1 \sigma_2 \rho
    \,,
\end{equation*}
where $\rho = \Corr[\nstep{n_1}_t,\nstep{n_2}_t \mid S_t]$.
To evaluate this expression, it is tempting to assume either $\rho = 0$ to remove the covariance term, or $\rho = 1$
because both returns are generated from the same trajectory.
However, neither assumption is fully correct.
We can see this more clearly by decomposing the returns into their constituent TD errors:
\begin{align*}
    \nstep{n_1}_t &= v(S_t) + \sum_{i=0}^{n_1-1} \gamma^i \delta_{t+i}
    \,,\\*  
    \nstep{n_2}_t &= v(S_t) + \sum_{i=0}^{n_1-1} \gamma^i \delta_{t+i} + \sum_{i=n_1}^{n_2-1} \gamma^i \delta_{t+i}
    \,.
\end{align*}
Because the returns share the first $n_1$ TD errors,
averaging them as in \Cref{eq:two-bootstrap} can only reduce the variance from the last $n_2-n_1$ TD errors.
Two different $n$-step returns are therefore neither uncorrelated nor perfectly correlated;
they consist of various random elements, some of which are shared and cannot be averaged, and others which are not shared and can be averaged.
Models that fail to account for this partial averaging lead to poor variance predictions.

To be accurate, our variance analysis should therefore start with the TD error as the fundamental unit of randomness within a return.
In the following analysis, we generalize the $n$-step return variance assumptions made by \citet[][Sec.~3]{konidaris2011td_gamma} to obtain an expression for the covariance between two TD errors.
We assume that there exist $\kappa \geq 0$ and $\rho \in [0,1]$ such that the following hold:
\begin{restatable}{assumption}{tdvar}
    \label{assump:td_var}
    Each TD error has uniform variance:
    $\Var[\delta_{t+i} \mid S_t] = \kappa$,\enskip $\forall~i \geq 0$.
\end{restatable}
\begin{restatable}{assumption}{tdcovar}
    \label{assump:td_cov}
    TD errors are uniformly correlated:
    $\Corr[\delta_{t+i},\delta_{t+j} \mid S_t] = \rho$,\enskip $\forall~i \geq 0$, $\forall~j \neq i$.
\end{restatable}
We must make some assumptions about the TD errors because not much can be said otherwise about the variance of a return estimate for an arbitrary MDP.
For example, it is possible to contrive problems in which $1$-step TD learning actually has much higher variance than Monte Carlo estimation.
Consider an MDP with reward zero everywhere and a value function initialized with random numbers;
the Monte Carlo return would not have variance but TD bootstrapping would.
However, it is generally \emph{expected} that $1$-step TD has much lower variance than Monte Carlo, because summing more TD errors (i.e., random variables) together will very likely increase practical variance.
This intuition motivates our assumptions, as we want a variance model that captures this behavior while remaining tractable for analysis.
We further discuss the justification and historical context of these assumptions in \Cref{app:assumptions}.

Now, we can unify \Cref{assump:td_var,assump:td_cov} as
\begin{equation}
    \label{eq:assumps_combined}
    \Cov[\delta_{t+i},\delta_{t+j} \mid S_t]
    = ((1-\rho) \vone_{i=j} + \rho) \kappa
    \,,
\end{equation}
where $\vone_{i=j} = 1$ if $i=j$, and is $0$ otherwise.
In the proposition below, we derive a variance model for the $n$-step return by decomposing the return into a sum of discounted TD errors and then adding up the pairwise covariances given by \Cref{eq:assumps_combined}.
For brevity, we define a function $\gammafunc{k}{n}$ to represent partial sums of a geometric series with common ratio~$\gamma^k$.
That is,
$\gammafunc{k}{n} = (1 - \gamma^{kn}) \mathbin{/} (1 - \gamma^k)$ for $\gamma < 1$\,,
and $\gammafunc{k}{n} = n$ for $\gamma = 1$\,.

\begin{restatable}{proposition}{propnstepvariance}
    \label{prop:nstep_variance}
    The variance of an $n$-step return is
    \begin{equation*}
        \Var[\nstep{n}_t \mid S_t]
        = (1-\rho) \gammafunc{2}{n} \kappa + \rho \gammafunc{1}{n}^2 \kappa
        \,.
    \end{equation*}
\end{restatable}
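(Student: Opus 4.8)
The plan is to reduce the variance of $\nstep{n}_t$ to a double sum of pairwise TD-error covariances and then evaluate that sum term by term using the combined identity \Cref{eq:assumps_combined}. First I would use the telescoping decomposition $\nstep{n}_t = v(S_t) + \sum_{i=0}^{n-1} \gamma^i \delta_{t+i}$ noted earlier in this section. Since $v(S_t)$ is determined by $S_t$ and hence contributes no conditional randomness, $\Var[\nstep{n}_t \mid S_t] = \Var\big[\sum_{i=0}^{n-1} \gamma^i \delta_{t+i} \mid S_t\big]$, which I would expand into the fully bilinear form $\sum_{i=0}^{n-1}\sum_{j=0}^{n-1} \gamma^{i+j}\, \Cov[\delta_{t+i},\delta_{t+j} \mid S_t]$.

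Next I would substitute $\Cov[\delta_{t+i},\delta_{t+j}\mid S_t] = ((1-\rho)\vone_{i=j}+\rho)\kappa$ from \Cref{eq:assumps_combined} and split the double sum into two pieces according to the two terms in this expression. The $\vone_{i=j}$ part collapses to the diagonal, giving $(1-\rho)\kappa \sum_{i=0}^{n-1}\gamma^{2i}$, while the constant $\rho$ part factors as $\rho\kappa\big(\sum_{i=0}^{n-1}\gamma^i\big)\big(\sum_{j=0}^{n-1}\gamma^j\big) = \rho\kappa\big(\sum_{i=0}^{n-1}\gamma^i\big)^2$. The final step is to recognize the two geometric sums: $\sum_{i=0}^{n-1}\gamma^{2i}$ is the partial sum of a geometric series with common ratio $\gamma^2$, hence equals $\gammafunc{2}{n}$ (with the $\gamma=1$ case giving $n$ as in the definition), and $\sum_{i=0}^{n-1}\gamma^i = \gammafunc{1}{n}$. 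Combining the two pieces yields exactly $(1-\rho)\gammafunc{2}{n}\kappa + \rho\, \gammafunc{1}{n}^2\kappa$.

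I do not expect a genuine obstacle here; the argument is elementary bookkeeping. The only points needing care are (i) justifying the telescoping identity $\nstep{n}_t = v(S_t) + \sum_{i=0}^{n-1}\gamma^i \delta_{t+i}$, which is standard and can be checked by a short induction on $n$ using the definitions of $\nstep{n}_t$ and $\delta_t$; and (ii) keeping the degenerate $\gamma=1$ case consistent with the definition of $\gammafunc{k}{n}$, so that a single closed form covers both $\gamma<1$ and $\gamma=1$. The factor-of-two exponent appearing in the diagonal sum is precisely what the superscript in $\gammafunc{2}{n}$ is designed to track, so the stated expression drops out without further manipulation.
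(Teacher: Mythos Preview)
Your proposal is correct and follows essentially the same route as the paper: decompose $\nstep{n}_t - v(S_t)$ into discounted TD errors, expand the conditional variance as a double sum of pairwise covariances, substitute \Cref{eq:assumps_combined}, and identify the diagonal and off-diagonal pieces with $\gammafunc{2}{n}$ and $\gammafunc{1}{n}^2$. The only minor difference is that the paper first derives the slightly more general covariance $\Cov[\nstep{n_1}_t,\nstep{n_2}_t \mid S_t] = (1-\rho)\gammafunc{2}{\min(n_1,n_2)}\kappa + \rho\,\gammafunc{1}{n_1}\gammafunc{1}{n_2}\kappa$ and then specializes to $n_1=n_2=n$; this generality is reused later in the proof of \Cref{theorem:variance_reduction}, but for \Cref{prop:nstep_variance} itself your direct computation is entirely adequate.
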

\glue
\begin{proof}
    See \Cref{app:prop_nstep_variance}.
\end{proof}
\glue

Our $n$-step variance model linearly interpolates between an optimistic case where TD errors are uncorrelated ($\rho=0$) and a pessimistic case where TD errors are maximally correlated ($\rho=1$).
In the maximum-variance scenario of $\gamma = 1$, we have
$\gammafunc{1}{n} = \gammafunc{2}{n} = n$,
so the model becomes
$(1-\rho) n \kappa + \rho n^2 \kappa$\,, i.e., it interpolates between linear and quadratic functions.
In \Cref{app:how_real_assumptions}, we demonstrate that these bounds are consistent with empirical data in real environments, even when our assumptions do not fully hold.

\Cref{eq:assumps_combined} enables us to go beyond $n$-step returns and calculate variances for arbitrary compound returns.
We accomplish this by again decomposing the return into a weighted sum of TD errors (see the next lemma) and then applying our assumptions to derive a compound variance model in the following proposition.

\begin{restatable}{lemma}{lemmacompounderror}
    \label{lemma:compound_error}
    A compound error can be written as a weighted summation of TD errors:
    \begin{equation*}
        \label{eq:compound_error}
        G^\vc_t - v(S_t)
        = \sum_{i=0}^\infty \gamma^i h_i \delta_{t+i}
        \,,
        \text{ where } h_i \defeq \smashoperator{\sum_{n=i+1}^\infty} c_n
        \,.
    \end{equation*}
\end{restatable}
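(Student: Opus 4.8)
The plan is to reduce the compound error to a weighted sum of $n$-step errors and then exchange the order of summation. The key preliminary fact is the telescoping identity $\nstep{n}_t - v(S_t) = \sum_{i=0}^{n-1} \gamma^i \delta_{t+i}$, which is the same decomposition already written out for $n_1$ and $n_2$ in \Cref{subsec:characterizing_variance}: expanding each $\delta_{t+i} = R_{t+1+i} + \gamma v(S_{t+1+i}) - v(S_{t+i})$ and collapsing the telescoping value terms leaves $\sum_{i=0}^{n-1} \gamma^i R_{t+1+i} + \gamma^n v(S_{t+n}) - v(S_t)$, which equals $\nstep{n}_t - v(S_t)$ by \Cref{eq:nstep_return}.

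First I would use the normalization $\sum_{n=1}^\infty c_n = 1$ to write $G^\vc_t - v(S_t) = \sum_{n=1}^\infty c_n \nstep{n}_t - v(S_t) = \sum_{n=1}^\infty c_n \bigl( \nstep{n}_t - v(S_t) \bigr)$, and then substitute the telescoping identity to obtain $G^\vc_t - v(S_t) = \sum_{n=1}^\infty c_n \sum_{i=0}^{n-1} \gamma^i \delta_{t+i}$. Next I would swap the two summations: the index set $\{(n,i) : n \geq 1,\ 0 \leq i \leq n-1\}$ is the same as $\{(n,i) : i \geq 0,\ n \geq i+1\}$, so the coefficient of $\gamma^i \delta_{t+i}$ after the swap is exactly $\sum_{n=i+1}^\infty c_n = h_i$, which gives the claimed expression $G^\vc_t - v(S_t) = \sum_{i=0}^\infty \gamma^i h_i \delta_{t+i}$.

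The only point that needs care — and the main obstacle — is justifying the interchange of the two infinite sums. The cleanest route is to work with the finite partial sums $\sum_{n=1}^N c_n \bigl( \nstep{n}_t - v(S_t) \bigr)$, rewrite each (a finite double sum, so reindexing is trivial) as $\sum_{i=0}^{N-1} \gamma^i \bigl( \sum_{n=i+1}^N c_n \bigr) \delta_{t+i}$, and then let $N \to \infty$, using $\sum_{n=i+1}^N c_n \uparrow h_i$ together with the fact that $G^\vc_t$ is a well-defined limit of the $n$-step returns under the paper's standing assumptions (the same ones that make \Cref{prop:nstep_variance} meaningful). Since the weights $c_n$ are nonnegative, no conditional-convergence subtleties arise. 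The remainder of the argument is immediate.
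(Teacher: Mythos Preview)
Your proposal is correct and follows essentially the same approach as the paper: decompose the compound error into a weighted average of $n$-step errors via $\sum_n c_n = 1$, substitute the telescoping identity $\nstep{n}_t - v(S_t) = \sum_{i=0}^{n-1}\gamma^i\delta_{t+i}$, and then regroup by TD-error index to read off the coefficient $h_i$. The paper simply writes out the regrouping term by term without discussing the interchange of infinite sums, so your partial-sum justification is, if anything, slightly more careful than the original.
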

\begin{restatable}{proposition}{propcompoundvariance}
    \label{prop:compound_variance}
    The variance of a compound return is
    \begin{equation*}
        \label{eq:compound_variance}
        \Var[G^\vc_t \mid S_t]
        = (1-\rho) \sum_{i=0}^\infty \gamma^{2i} h_i^2 \kappa + \rho \sum_{i=0}^\infty \sum_{j=0}^\infty \gamma^{i+j} h_i h_j \kappa
        \,.
    \end{equation*}
\end{restatable}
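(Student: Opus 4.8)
The plan is to invoke \Cref{lemma:compound_error} directly and then compute the variance of the resulting weighted sum of TD errors term by term. Since conditioning on $S_t$ makes $v(S_t)$ deterministic, we have $\Var[G^\vc_t \mid S_t] = \Var[G^\vc_t - v(S_t) \mid S_t]$, as noted at the start of \Cref{subsec:characterizing_variance}, so it suffices to evaluate the variance of $\sum_{i=0}^\infty \gamma^i h_i \delta_{t+i}$.

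First I would expand this variance using bilinearity of covariance,
\[
    \Var\Bigl[\,\textstyle\sum_{i=0}^\infty \gamma^i h_i \delta_{t+i} \,\Big|\, S_t\Bigr]
    = \sum_{i=0}^\infty \sum_{j=0}^\infty \gamma^{i+j} h_i h_j \, \Cov[\delta_{t+i}, \delta_{t+j} \mid S_t],
\]
and then substitute the unified covariance identity \Cref{eq:assumps_combined}, $\Cov[\delta_{t+i},\delta_{t+j}\mid S_t] = \bigl((1-\rho)\vone_{i=j} + \rho\bigr)\kappa$. Splitting the double sum along the two terms in the bracket, the $\rho\kappa$ piece is simply $\rho\kappa \sum_{i=0}^\infty \sum_{j=0}^\infty \gamma^{i+j} h_i h_j$ over all index pairs, while the $(1-\rho)\kappa\,\vone_{i=j}$ piece retains only the diagonal $i=j$ and contributes $(1-\rho)\kappa \sum_{i=0}^\infty \gamma^{2i} h_i^2$. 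Summing the two pieces gives exactly the stated formula, so the algebra is short once \Cref{lemma:compound_error} is available.

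The step I expect to be the main obstacle is not this algebra but justifying the interchange of the doubly infinite summation with the variance operator. For this I would use that $0 \le h_i = \sum_{n=i+1}^\infty c_n \le \sum_{n=1}^\infty c_n = 1$ for every $i$; when $\gamma < 1$ this, together with the uniform second-moment bounds implied by \Cref{assump:td_var,assump:td_cov}, makes the partial sums $\sum_{i=0}^N \gamma^i h_i \delta_{t+i}$ a Cauchy sequence in $L^2$ conditional on $S_t$, so the variance of the limit equals the limit of the variances and the resulting double series converges absolutely and may be rearranged by Fubini. For $\gamma = 1$ the compound return is only well defined when the weights $\vc$ have finite support (or decay quickly enough), in which case all the sums terminate and the interchange is immediate; I would either restrict the formal statement to $\gamma < 1$ or note this finiteness condition explicitly. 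Modulo this standard bookkeeping, the proposition follows immediately from \Cref{lemma:compound_error} and \Cref{eq:assumps_combined}.
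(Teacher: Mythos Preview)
Your proposal is correct and follows essentially the same route as the paper's own proof: invoke \Cref{lemma:compound_error}, expand the variance as a double sum of covariances, substitute \Cref{eq:assumps_combined}, and split into the diagonal and full-sum pieces. The only difference is that you add a justification for interchanging the infinite sums with the variance operator, which the paper omits entirely.
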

\glue
\begin{proof}[Proofs]
    See \Cref{app:lemma_compound_error,app:prop_compound_variance}.
\end{proof}
\glue

The cumulative weights $(h_i)_{i=0}^\infty$ fully specify the variance of a compound return.
For instance, the $\lambda$-return assigns a cumulative weight of
$h_i = {\sum_{n=i+1}^\infty (1-\lambda) \lambda^{n-1} = \lambda^i}$
to the TD error $\delta_{t+i}$, which matches the TD($\lambda$) algorithm \citep{sutton1988learning}.
Substituting this weight into \Cref{eq:compound_variance} and solving the geometric series yields the variance for the $\lambda$-return, which we show in \Cref{app:lambda_variance}.

\subsection{Error-Reduction Property and Effective $n$-step}
\label{subsec:erp}

\halfglue

\Cref{prop:compound_variance} provides a method for calculating variance, but we would still like to show that compound returns reduce variance relative to $n$-step returns.
To do this, we first need a way to relate two returns in terms of their expected performance.
This is because low variance by itself is not sufficient for fast learning;
for example, the $1$-step return has very low variance, but learns slowly.

In the discounted setting, a good candidate for expected learning speed is the return's contraction modulus.
The contraction modulus is the constant factor by which the maximum value-function error between $v$ and $v_\pi$ is guaranteed to be reduced.
When the contraction modulus is less than~$1$, the return estimator exhibits an \emph{error-reduction property}:
i.e., the maximum error decreases on average with every backup iteration of \Cref{eq:backup}.
This property is commonly used in conjunction with the Banach fixed-point theorem to prove that $v$ eventually converges to $v_\pi$ \citep[see, e.g.,][Sec.~4.3]{bertsekas1996neuro}.
The error-reduction property of compound returns was first identified by \citet[][Sec.~7.2]{watkins1989learning} and is expressed formally as
\begin{align}
    \nonumber
    &\max_{s \in \mathcal{S}} \abs{\mathbb{E}\left[ G^\vc_t \mid S_t = s \right] - v(s)} \\*
    \label{eq:compound_erp}
    &\qquad\leq \left(\sum_{k=1}^\infty c_k \gamma^k \right) \max_{s \in \mathcal{S}} \abs{v(s) - v_\pi(s)}
    \,.
\end{align}
The contraction modulus is the coefficient on the right-hand side:
$\sum_{k=1}^\infty c_k \gamma^k$\,,
a weighted average of each individual $n$-step return's contraction modulus, $\gamma^n$.
We can compare compound returns to $n$-step returns that have equivalent error-reduction properties by solving the equation
${\gamma^n = \sum_{k=1}^\infty c_k \gamma^k}$
for $n$.
We call this the \emph{effective $n$-step} of the compound return, since the compound return reduces the worst-case error as though it were an $n$-step return whose length is the solution to the previous equation (see \Cref{prop:effective_nstep} below).
In undiscounted settings, we cannot directly equate contraction moduli like this because they all become
${\sum_{k=1}^\infty c_k = 1}$,
but we can still solve the limit as $\gamma \to 1$ to define the effective $n$-step.

\begin{restatable}[Effective $n$-step of compound return]{proposition}{propeffectivenstep}
    \label{prop:effective_nstep}
    Let $G^\vc_t$ be any compound return and let
    \begin{equation}
        \label{eq:effective_nstep}
        n = \begin{cases}
            \log_\gamma \left(\sum_{k=1}^\infty c_k \gamma^k\right) \,, & \text{if } 0 < \gamma < 1 \,, \\
            \sum_{k=1}^\infty c_k k \,, & \text{if } \gamma = 1 \,.
        \end{cases}
    \end{equation}
    When $n$ is an integer, $G^\vc_t$ shares the same bound in \Cref{eq:compound_erp} as the $n$-step return $\smash{\nstep{n}_t}$.
\end{restatable}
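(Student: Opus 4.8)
The plan is to treat the two branches of \Cref{eq:effective_nstep} separately, since the discounted case ($0<\gamma<1$) is essentially immediate from the error-reduction property already stated, while the undiscounted case ($\gamma=1$) requires a short limiting argument. For the discounted case I would first note that the right-hand side of \Cref{eq:compound_erp} depends on the weights $\vc$ only through the contraction modulus $\sum_{k=1}^\infty c_k \gamma^k$, and that the $n$-step return $\nstep{n}_t$ is itself the special compound return with $c_n=1$ and all other weights zero, so its contraction modulus is exactly $\gamma^n$. Hence $G^\vc_t$ and $\nstep{n}_t$ satisfy identical instances of \Cref{eq:compound_erp} precisely when $\gamma^n=\sum_{k=1}^\infty c_k\gamma^k$. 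Since $\gamma^k\in(0,\gamma]$ for every $k\geq 1$, the convex combination $\sum_k c_k\gamma^k$ lies in $(0,\gamma]$ (assuming the weights do not entirely escape to infinity), so its base-$\gamma$ logarithm is well-defined, and because $\log_\gamma$ is decreasing it is at least $\log_\gamma\gamma=1$; solving for $n$ and using monotonicity of $x\mapsto\gamma^x$ for uniqueness gives the first branch. When that value is a positive integer, $\nstep{n}_t$ is a genuine return estimator and shares the claimed bound.

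For $\gamma=1$ every contraction modulus degenerates to $\sum_k c_k=1$, so the bound in \Cref{eq:compound_erp} is trivially the same for all estimators; the real content of the second branch is that $\sum_k c_k k$ is the continuous extension of the first branch as $\gamma\to 1^-$. I would write $f(\gamma)=\sum_{k=1}^\infty c_k\gamma^k$ and express $\log_\gamma f(\gamma)=\ln f(\gamma)\,/\,\ln\gamma$, which is a $0/0$ indeterminate form as $\gamma\to 1^-$ because $f(\gamma)\to\sum_k c_k=1$. Applying L'Hôpital's rule (equivalently, a first-order Taylor expansion of $\ln$ about $1$) reduces the limit to $\lim_{\gamma\to 1^-}\gamma f'(\gamma)/f(\gamma)$, and since $f'(\gamma)=\sum_{k=1}^\infty c_k k\,\gamma^{k-1}$ and $f(\gamma)\to 1$, this collapses to $\lim_{\gamma\to 1^-} f'(\gamma)=\sum_{k=1}^\infty c_k k$ by Abel's theorem, matching the second branch.

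The main obstacle I anticipate is making the $\gamma\to 1^-$ limit rigorous: both the term-by-term differentiation of the power series and the passage of the limit inside the infinite sum need justification. The cleanest route is to observe that the effective $n$-step is only meaningful when $\sum_k c_k k<\infty$ (otherwise the matching $n$-step return would be infinitely long), and under that hypothesis Abel's theorem applied to the power series $\sum_k c_k k\,x^{k-1}$, whose coefficients are summable, directly yields $\lim_{x\to 1^-}\sum_k c_k k\,x^{k-1}=\sum_k c_k k$, with an easier instance giving $f(\gamma)\to 1$; the identity $f'(\gamma)=\sum_k c_k k\,\gamma^{k-1}$ is justified on any compact subinterval of $(0,1)$ by uniform convergence of the differentiated series. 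Everything else is routine algebra.
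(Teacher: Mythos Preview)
Your proposal is correct and follows essentially the same approach as the paper: the discounted case is handled by equating contraction moduli and taking $\log_\gamma$, and the undiscounted case is obtained as the $\gamma\to 1^-$ limit via L'H\^opital's rule on $\ln f(\gamma)/\ln\gamma$. You are in fact more careful than the paper in invoking Abel's theorem and justifying term-by-term differentiation, whereas the paper simply computes the limit formally.
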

\glue
\begin{proof}
    See \Cref{app:prop_effective_nstep}.
\end{proof}
\glue

With or without discounting, an $n$-step return and a compound return that satisfy \Cref{eq:effective_nstep} have the same error-reduction property.
We refer to the quantity $\sum_{k=1}^\infty c_k k$ as the \emph{center of mass} (COM) of a return, since it is the first moment of the weight distribution over $n$-step returns.
Intuitively, this represents the average length into the future considered by the return, and is the undiscounted analog of the log contraction modulus.

\subsection{The Variance-Reduction Property}
\label{subsec:vrp}

\halfglue

With the previous definitions, we are now ready to formalize the \emph{variance-reduction property} of compound returns.

\begin{restatable}[Variance-reduction property of compound returns]{theorem}{theoremvariancereduction}
    \label{theorem:variance_reduction}
    Let $\smash{\nstep{n}_t}$ be any $n$-step return and let $\smash{G^\vc_t}$ be any compound return with the same effective $n$-step:
    i.e., $\vc$ satisfies \Cref{prop:effective_nstep}.
    The inequality
    $\smash{\Var[G^\vc_t \mid S_t] \leq \Var[\nstep{n}_t \mid S_t]}$
    always holds, and is strict whenever TD errors are not perfectly correlated ($\rho < 1$).
\end{restatable}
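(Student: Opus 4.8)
The plan is to route both variances through \Cref{prop:compound_variance}, which already subsumes the $n$-step return: by \Cref{lemma:compound_error}, $\nstep{n}_t$ is the compound return whose cumulative weights are $h_i = \vone_{i<n}$. Writing $y_i \defeq \gamma^i h_i$ for the discounted cumulative weights of $G^\vc_t$ (so $0 \le y_i \le \gamma^i$ since $0 \le h_i \le 1$), \Cref{prop:compound_variance} gives $\Var[G^\vc_t \mid S_t] = (1-\rho)\kappa \sum_{i=0}^{\infty} y_i^2 + \rho\kappa \big( \sum_{i=0}^{\infty} y_i \big)^2$, while \Cref{prop:nstep_variance} is the same expression with $y_i$ replaced by $\gamma^i \vone_{i<n}$.

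The key observation is that the effective-$n$-step hypothesis forces the two covariance ($\rho$) terms to be equal. Interchanging summations in the definition of $h_i$ shows $\sum_{i=0}^{\infty} \gamma^i h_i = \sum_{k=1}^{\infty} c_k \sum_{i=0}^{k-1} \gamma^i$, which evaluates to $(1 - \sum_k c_k \gamma^k)/(1-\gamma)$ when $\gamma < 1$ and to $\sum_k c_k k$ when $\gamma = 1$; by \Cref{prop:effective_nstep} this is exactly $\sum_{i=0}^{n-1} \gamma^i$. Hence $\sum_i y_i = \sum_{i<n}\gamma^i$, the $\rho$ terms cancel, and
\begin{equation*}
    \Var[\nstep{n}_t \mid S_t] - \Var[G^\vc_t \mid S_t] = (1-\rho)\kappa \Big( \sum_{i=0}^{n-1}\gamma^{2i} - \sum_{i=0}^{\infty} y_i^2 \Big) .
\end{equation*}
So it remains to establish the bathtub-type inequality $\sum_{i=0}^{\infty} y_i^2 \le \sum_{i=0}^{n-1}\gamma^{2i}$ under the constraints $0 \le y_i \le \gamma^i$ and $\sum_i y_i = \sum_{i<n}\gamma^i$.

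I would prove this in two lines. First, $y_i^2 \le \gamma^i y_i$ coordinatewise (from $y_i \le \gamma^i$), so $\sum_i y_i^2 \le \sum_i \gamma^i y_i$. Second, I bound the linear term using the greedy structure of the constraint set: putting $z_i \defeq \gamma^i - y_i \ge 0$ for $i < n$, the budget constraint forces $\sum_{i<n} z_i = \sum_{i\ge n} y_i =: m$, and then
\begin{equation*}
    \sum_i \gamma^i y_i - \sum_{i=0}^{n-1}\gamma^{2i} = -\sum_{i<n}\gamma^i z_i + \sum_{i\ge n}\gamma^i y_i \le -\gamma^{n-1} m + \gamma^{n} m = -\gamma^{n-1}(1-\gamma)m \le 0 ,
\end{equation*}
using $\gamma^i \ge \gamma^{n-1}$ for $i \le n-1$ and $\gamma^i \le \gamma^n$ for $i \ge n$. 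Chaining the two inequalities gives $\Var[G^\vc_t \mid S_t] \le \Var[\nstep{n}_t \mid S_t]$.

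For strictness when $\rho < 1$ (and $\kappa > 0$): since $h_i = \vone_{i<n}$ holds only when $c_n = 1$, a genuine compound return (two or more nonzero weights) never has those weights. If $\gamma < 1$ this gives $m > 0$, so the second display above is strict; if $\gamma = 1$ then $h_0 = 1$, monotonicity of $(h_i)$, and $\sum_i h_i = n$ together imply some $h_{i_0} \in (0,1)$ with $i_0 < n$, making the first inequality strict at $i_0$. Either way the bracket is positive. I expect the only real obstacle to be isolating and bounding the linear functional $\sum_i \gamma^i y_i$; but once one notices that the effective-$n$-step condition pins the total weight to exactly the $n$ largest box bounds $\gamma^0,\dots,\gamma^{n-1}$, this rearrangement collapses to the one-line estimate above, and the same identity disposes of the covariance terms for free.
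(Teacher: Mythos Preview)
Your argument is correct. The identity $\sum_i \gamma^i h_i = \Gamma_1(n)$ is exactly the paper's \Cref{eq:Gamma_invariant}, so the $\rho$ terms cancel just as they do there. Your bathtub inequality $\sum_i y_i^2 \le \Gamma_2(n)$ is valid: $y_i^2 \le \gamma^i y_i$ is immediate from $y_i \le \gamma^i$, and the rearrangement step is sound because $m=0$ would force $z_i=0$ for all $i<n$ and $y_i=0$ for all $i\ge n$, i.e., $h_i=\vone_{i<n}$, contradicting compoundness. The strictness case split is also fine, including the $\gamma=1$ branch where monotonicity of $(h_i)$ and $\sum_i h_i=n$ indeed force some $h_{i_0}\in(0,1)$.

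The paper takes a different route for the $(1-\rho)$ term. Rather than working with the cumulative weights $h_i$ via \Cref{prop:compound_variance}, it expands $\Var[G^\vc_t\mid S_t]$ as the double sum $\sum_{i,j} c_i c_j \Cov[\nstep{i}_t,\nstep{j}_t\mid S_t]$ using the $n$-step covariance formula, and then bounds $\sum_{i,j}c_i c_j\,\Gamma_2(\min(i,j))$ by the single trick $\min(i,j)\le(i+j)/2$ together with the closed form of $\Gamma_2$ (and Jensen on the concave $\min$ when $\gamma=1$). Your rearrangement argument is more elementary in that it never invokes the closed form of $\Gamma_2$ or treats $\gamma<1$ and $\gamma=1$ separately for the non-strict inequality; the paper's argument, on the other hand, makes the source of slack more transparent (it is exactly the gap $\min(i,j)<(i+j)/2$ for $i\ne j$) and avoids the two-stage chaining.
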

\glue
\begin{proof}
    See \Cref{app:theorem_variance_reduction}.
\end{proof}
\glue

\Cref{theorem:variance_reduction} shows that whenever a compound return has the same contraction modulus ($\gamma < 1$) or COM ($\gamma = 1$) as an $n$-step return, it has lower variance as long as the TD errors are not perfectly correlated.
Perfect correlation between all TD errors would be unlikely to occur except in contrived, maximum-variance MDPs;
thus, compound returns reduce variance in most cases.
Crucially, variance reduction is achieved for any type of weighted average---although the magnitude of reduction does depend on the specific choice of weights.
The exact amount, in terms of $\kappa$, can be calculated by subtracting the compound variance from the $n$-step variance for a given contraction modulus or COM.
As an example, we bound the variance reduction of a $\lambda$-return in the following corollary.

\begin{restatable}[Variance reduction of $\lambda$-return]{corollary}{corolambdavariancereduction}
    \label{coro:lambda_variance_reduction}
    The magnitude of variance reduction for a $\lambda$-return is bounded by
    \begin{equation*}
        0
        \leq \Var[G^n_t \mid S_t] - \Var[G^\lambda_t \mid S_t]
        \leq (1-\rho) \frac{\lambda}{1-\lambda^2} \kappa
        \,.
    \end{equation*}
    This magnitude is monotonic in $\gamma$ and maximized at $\gamma = 1$.
\end{restatable}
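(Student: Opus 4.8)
The lower bound $0 \le \Var[\nstep{n}_t \mid S_t] - \Var[G^\lambda_t \mid S_t]$ is immediate from \Cref{theorem:variance_reduction}: the $\lambda$-return is the compound return with weights $c_n = (1-\lambda)\lambda^{n-1}$, and these are chosen precisely so that it shares the effective $n$-step of $\nstep{n}_t$. The work lies in the upper bound and the monotonicity claim, and the plan is to obtain closed forms for both variances and subtract.

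First I would make both variances explicit. Substituting the cumulative weights $h_i = \lambda^i$ into \Cref{prop:compound_variance} and summing the two geometric series gives
\[
    \Var[G^\lambda_t \mid S_t] = (1-\rho)\frac{\kappa}{1-\gamma^2\lambda^2} + \rho\frac{\kappa}{(1-\gamma\lambda)^2}\,.
\]
For the $n$-step return I would invoke \Cref{prop:effective_nstep}: the $\lambda$-return's contraction modulus is $\gamma^n = \sum_{k\ge 1}(1-\lambda)\lambda^{k-1}\gamma^k = (1-\lambda)\gamma/(1-\lambda\gamma)$, whence a one-line simplification yields $\gammafunc{1}{n} = (1-\gamma^n)/(1-\gamma) = 1/(1-\gamma\lambda)$ (treating $\gammafunc{k}{n}$ as the natural continuous function of the contraction modulus when $n$ is not an integer). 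Plugging this into \Cref{prop:nstep_variance} shows the $\rho$-terms of the two variances coincide exactly, so the entire difference lives in the $(1-\rho)$-terms:
\[
    \Var[\nstep{n}_t \mid S_t] - \Var[G^\lambda_t \mid S_t] = (1-\rho)\kappa\left(\gammafunc{2}{n} - \frac{1}{1-\gamma^2\lambda^2}\right)\,.
\]
Writing $\gammafunc{2}{n} = (1-\gamma^{2n})/(1-\gamma^2)$ with $\gamma^{2n} = (1-\lambda)^2\gamma^2/(1-\lambda\gamma)^2$, factoring $1-\gamma^{2n}$ as a difference of squares, and combining the two fractions over the common denominator $(1+\gamma)(1-\lambda\gamma)^2(1+\lambda\gamma)$ collapses the parenthetical to the single rational function
\[
    D(\gamma) \defeq \frac{2\lambda(1-\lambda)\gamma^2}{(1+\gamma)(1-\lambda\gamma)^2(1+\lambda\gamma)}\,.
\]

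It then remains to bound $D$ on $[0,1]$. Its value at $\gamma = 1$ is $D(1) = \frac{2\lambda(1-\lambda)}{2(1-\lambda)^2(1+\lambda)} = \frac{\lambda}{1-\lambda^2}$, exactly the claimed ceiling (and the $\gamma = 1$ case can be checked directly from \Cref{prop:nstep_variance,prop:compound_variance} using the COM $n = 1/(1-\lambda)$ and the limits $\gammafunc{1}{n},\gammafunc{2}{n}\to n$). To finish I would show $D$ is strictly increasing on $(0,1)$ via the logarithmic derivative
\[
    \frac{D'(\gamma)}{D(\gamma)} = \left(\frac{2}{\gamma} - \frac{1}{1+\gamma}\right) + \left(\frac{2\lambda}{1-\lambda\gamma} - \frac{\lambda}{1+\lambda\gamma}\right) = \frac{2+\gamma}{\gamma(1+\gamma)} + \frac{\lambda(1+3\lambda\gamma)}{(1-\lambda\gamma)(1+\lambda\gamma)}\,,
\]
each grouping being manifestly positive. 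Hence $D' > 0$ on $(0,1)$, so $0 \le D(\gamma) \le D(1) = \lambda/(1-\lambda^2)$; multiplying by $(1-\rho)\kappa$ gives the two-sided bound, and monotonicity of $D$ gives the claim that the variance reduction is monotonic in $\gamma$ and maximized at $\gamma = 1$.

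The main obstacle I anticipate is purely mechanical: the simplification of $\gammafunc{2}{n} - 1/(1-\gamma^2\lambda^2)$ into the clean form $D(\gamma)$ requires expanding a product of three linear factors so that the spurious terms cancel and the factor $2\lambda(1-\lambda)\gamma^2$ emerges. Once that identity and the cancellation of the $\rho$-terms are in hand, everything else—the $\gamma = 1$ evaluation and the sign analysis of $D'/D$—is routine.
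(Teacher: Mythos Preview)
Your proposal is correct and follows essentially the same route as the paper: both arguments observe that the $\rho$-weighted terms of the two variances coincide (the paper via the general identity $\sum_i \gamma^i h_i = (1-\beta)/(1-\gamma)$, you via the direct computation $\gammafunc{1}{n}=1/(1-\gamma\lambda)$), reduce the difference to $(1-\rho)\kappa\big(\gammafunc{2}{n}-1/(1-\gamma^2\lambda^2)\big)$, and then evaluate this at $\gamma=1$ to obtain $\lambda/(1-\lambda^2)$. The one substantive difference is that the paper merely asserts monotonicity in $\gamma$ with a footnote hint, whereas you push the algebra further to the closed form $D(\gamma)=2\lambda(1-\lambda)\gamma^2/[(1+\gamma)(1-\lambda\gamma)^2(1+\lambda\gamma)]$ and establish $D'>0$ via the logarithmic derivative; this makes your monotonicity argument more complete than the paper's.
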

\glue
\begin{proof}
    See \Cref{app:lambda_variance_reduction}.
\end{proof}
\glue

The magnitude of this variance reduction also increases monotonically as $\lambda \to 1$, showing that the potential for variance reduction improves as the effective $n$-step of the return increases.
Interestingly, this is reflected in our later random-walk experiments in \Cref{subsec:lambda_case_study} (observe the performance gaps in \Cref{fig:rw19} when $\alpha \approx 1$).
Future work can derive the weights that maximize the variance reduction in \Cref{theorem:variance_reduction}, but this will likely require more advanced techniques such as functional analysis and is beyond the scope of this paper.

\subsection{Finite-Time Analysis}
\label{subsec:ft_analysis}

\halfglue

It might not be obvious that reducing variance leads to faster learning;
indeed, in other settings such as direct policy optimization, this is not always the case \citep[see][]{chung2021beyond}.
We conduct a finite-time analysis of compound TD learning to prove that lower variance does lead to faster learning in this setting.
We consider linear function approximation, where
$v_\theta(s) = \phi(s)\tran \theta$
for features $\phi(s) \in \R^d$ and parameters $\theta \in \R^d$;
note that tabular methods can be recovered using one-hot features.
The parameters are iteratively updated according to
\begin{align}
    \label{eq:td_lfa}
    &\theta_{t+1}
    = \theta_t + \alpha g^\vc_t(\theta_t)
    \,,\\*  
    \nonumber
    &\qquad \text{where }
    g^\vc_t(\theta) \defeq \left(G^\vc_t - \phi(S_t)\tran \theta\right) \mathop{\phi(S_t)}
    \,.
\end{align}
Our theorem below generalizes recent analysis of $1$-step TD learning \citep[][Theorem~2]{bhandari2018finite}.

\begin{restatable}[Finite-Time Analysis]{theorem}{theoremftanalysis}
    \label{theorem:ft_analysis}
    Suppose TD learning with linear function approximation is applied under an i.i.d.\ state model with stationary distribution $\smash{\vd \in \R^{|\mathcal{S}|}}$ (see \Cref{assump:iid} in \Cref{app:ft_analysis}) using the compound return estimator $G^\vc_t$ as its target.
    Let $\beta \in [0, 1)$ be the contraction modulus of the estimator and let $\sigma^2 \geq 0$ be the variance of the estimator.
    Assume that the features are normalized such that
    $\smash{\sqnorm{\phi(s)} \leq 1}$,
    ${\forall~s \in \mathcal{S}}$.
    Define
    $\smash{C \defeq (\norm{r}_\infty + (1+\gamma) \norm{\theta^*}_\infty) \mathbin{/} (1-\gamma)}$,
    where $\theta^*$ is the minimizer of the projected Bellman error for $G^\vc_t$.
    For any $T \geq (4 \mathbin{/} (1-\beta))^2$ and a constant step size $\alpha = 1 \mathbin{/} \sqrt{T}$,
    \begin{align*}
        &\expect{\norm{v_{\theta^*} - v_{\bar{\theta}_T}}_\mD^2}
        \leq \frac{\sqnorm{\theta^* - \theta_0} + 2 (1-\beta)^2 C^2 + 2 \sigma^2}{(1-\beta) \sqrt{T}}
        \,,\\
        &\qquad \text{where }
        \mD \defeq \diag(\vd)
        \text{ and }
        \bar{\theta}_T \defeq \frac{1}{T} \sum_{t=0}^{T-1} \theta_t
        \,.
    \end{align*}
\end{restatable}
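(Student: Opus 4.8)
The plan is to follow the template of Bhandari et al.\ (2018, Theorem 2) for $1$-step TD, but track the generic contraction modulus $\beta$ and generic return variance $\sigma^2$ in place of the $1$-step-specific constants. The key structural facts I would establish first are: (i) the expected update direction $\bar g^\vc(\theta) \defeq \expect{g^\vc_t(\theta)}$ has a fixed point $\theta^*$, namely the solution of the projected Bellman equation for $G^\vc_t$, and (ii) at $\theta^*$ the compound Bellman operator is a $\beta$-contraction in the $\mD$-weighted norm, which gives the one-sided inequality $\langle \bar g^\vc(\theta) - \bar g^\vc(\theta^*),\, \theta - \theta^*\rangle \le -(1-\beta)\,\norm{v_\theta - v_{\theta^*}}_\mD^2$ (this is the analogue of Bhandari's Lemma 3, now with $\gamma$ replaced by $\beta = \sum_k c_k \gamma^k$). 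Because the features are normalized, $\bar g^\vc$ is also Lipschitz, so $\langle \bar g^\vc(\theta^*), \theta-\theta^*\rangle$ and the pseudo-gradient norm are controlled.

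Next I would run the standard projected-SGD-style descent argument. Expand $\sqnorm{\theta_{t+1}-\theta^*} = \sqnorm{\theta_t - \theta^*} + 2\alpha\langle g^\vc_t(\theta_t),\theta_t-\theta^*\rangle + \alpha^2\sqnorm{g^\vc_t(\theta_t)}$, take conditional expectation using the i.i.d.\ state model (\Cref{assump:iid}), and insert the contraction inequality from step (ii). The cross term yields $-2\alpha(1-\beta)\norm{v_{\theta_t}-v_{\theta^*}}_\mD^2$; the second-order term needs a bound of the form $\expect{\sqnorm{g^\vc_t(\theta_t)}} \le A\sqnorm{\theta_t-\theta^*} + B$, where $B$ collects $\sigma^2$ (through $\Var[G^\vc_t\mid S_t]$) together with the bias/offset term measured by $C$, and $A = O(1)$ because $\sqnorm{\phi(s)}\le 1$. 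Here is where $\sigma^2$ and $C$ enter: $G^\vc_t - \phi(S_t)\tran\theta_t = (G^\vc_t - \phi(S_t)\tran\theta^*) + \phi(S_t)\tran(\theta^*-\theta_t)$, the first piece has second moment bounded by $\sigma^2$ plus a squared-bias term of order $C^2$, and the second piece is $\le \sqnorm{\theta_t-\theta^*}$. Rearranging, summing over $t=0,\dots,T-1$, telescoping $\sqnorm{\theta_t-\theta^*}$, dividing by $2\alpha(1-\beta)T$, and applying Jensen to move the average inside $\norm{\cdot}_\mD^2$ (to get $\bar\theta_T$) gives
\[
\expect{\norm{v_{\theta^*}-v_{\bar\theta_T}}_\mD^2} \le \frac{\sqnorm{\theta^*-\theta_0}}{2\alpha(1-\beta)T} + \frac{\alpha}{2(1-\beta)}\bigl(2(1-\beta)^2 C^2 + 2\sigma^2\bigr).
\]
Substituting $\alpha = 1/\sqrt T$ and collecting terms yields the stated bound; the condition $T \ge (4/(1-\beta))^2$ is exactly what is needed to absorb the $A\sqnorm{\theta_t-\theta^*}$ part of the second-order term into the negative drift (i.e.\ to ensure $\alpha^2 A \le \alpha(1-\beta)$ with room to spare), so that the telescoping stays valid.

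The main obstacle I anticipate is \textbf{establishing the contraction inequality (ii) for an arbitrary compound return} with the \emph{correct} modulus $\beta = \sum_k c_k \gamma^k$, rather than a crude $\gamma$. Bhandari's argument uses the specific structure of the $1$-step TD operator; for a compound return one must write $\expect{G^\vc_t \mid S_t=s}$ as $(\mathcal{T}^\vc v_\theta)(s)$ for the compound Bellman operator $\mathcal{T}^\vc = \sum_k c_k \mathcal{T}^{(k)}$ (a convex combination of the $k$-step Bellman operators), invoke \Cref{eq:compound_erp} / the error-reduction property to get $\norm{\mathcal{T}^\vc v - \mathcal{T}^\vc v'}_\mD \le \beta\,\norm{v-v'}_\mD$ in the \emph{weighted} norm (which requires the standard fact that each $\mathcal{T}^{(k)}$ is a $\gamma^k$-contraction in $\norm{\cdot}_\mD$ under the stationary distribution, plus the projection $\Pi$ being a nonexpansion), and only then convert this into the inner-product form via the usual "variational" lemma. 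A secondary technical point is verifying the Lipschitz/linear-growth bound on $g^\vc_t$ with constants that do not blow up with the horizon of the return; this is where the feature normalization $\sqnorm{\phi(s)}\le1$ and the definition of $C$ via $\norm{r}_\infty/(1-\gamma)$ do the work, since the (possibly infinite-horizon) reward sum inside $G^\vc_t$ is almost surely bounded by $\norm{r}_\infty/(1-\gamma)$. Everything else is bookkeeping along the lines of the cited $1$-step analysis.
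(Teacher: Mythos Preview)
Your overall template---expand $\sqnorm{\theta_{t+1}-\theta^*}$, use a one-sided contraction inequality for the cross term, bound the second moment of the update, telescope, and apply Jensen---is exactly what the paper does, following Bhandari et al.\ with $\gamma$ replaced by $\beta$. Two of the details you sketch would not go through as written, though.

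First, the growth term in your second-moment bound must come out as a multiple of $\norm{v_{\theta_t}-v_{\theta^*}}_\mD^2$, \emph{not} $\sqnorm{\theta_t-\theta^*}$. The negative drift you obtain is $-2\alpha(1-\beta)\norm{v_{\theta_t}-v_{\theta^*}}_\mD^2$, and you cannot absorb $\alpha^2 A\sqnorm{\theta_t-\theta^*}$ into it because $\sqnorm{\theta_t-\theta^*}$ can be arbitrarily larger than $\norm{v_{\theta_t}-v_{\theta^*}}_\mD^2$ when the feature matrix is ill-conditioned; the step-size condition $T\ge(4/(1-\beta))^2$ does not fix this. The paper avoids the issue by writing $g_t^\vc(\theta)-g_t^\vc(\theta^*)=\sum_n c_n\,\phi_0(\gamma^n\phi_n-\phi_0)^\top(\theta-\theta^*)$ and then taking the expectation of the square directly, which yields $2(1+\beta)\norm{v_{\theta^*}-v_\theta}_\mD^2$ via stationarity of $\xi_i\defeq\phi_i^\top(\theta^*-\theta)$. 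Note that this decomposition requires tracking the $\theta$-dependence of $G_t^\vc$ through the bootstrap term $\gamma^n\phi_n^\top\theta$; your split ``$G_t^\vc-\phi_0^\top\theta_t=(G_t^\vc-\phi_0^\top\theta^*)+\phi_0^\top(\theta^*-\theta_t)$'' treats $G_t^\vc$ as $\theta$-independent and therefore loses the $\gamma^n\xi_n$ piece.

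Second, the $(1-\beta)^2$ in front of $C^2$ does not come from the crude bound $|G_t^\vc|\le\norm{r}_\infty/(1-\gamma)$ you mention; that route gives only $C^2$. The paper gets the extra $(1-\beta)$ from the identity $\sum_{n\ge1}c_n\,\Gamma_1(n)=(1-\beta)/(1-\gamma)$, which bounds the \emph{expected} compound error at $\theta^*$ by $(1-\beta)C$ (each TD error at $\theta^*$ is at most $\Delta^*=\norm{r}_\infty+(1+\gamma)\norm{\theta^*}_\infty$ in absolute value, and the discounted weights sum to $\Gamma_1(n)$). Squaring gives the stated constant.

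On your anticipated ``main obstacle'': the paper does not go through the $\mD$-norm contraction of the compound Bellman operator at all. It proves the inner-product inequality directly by writing $(\theta^*-\theta)^\top\bar g^\vc(\theta)=\expect{\xi_0^2}-\sum_n c_n\gamma^n\expect{\xi_0\xi_n}$ and applying Cauchy--Schwarz together with stationarity ($\expect{\xi_0\xi_n}\le\expect{\xi_0^2}$), which immediately gives $(1-\beta)\norm{v_{\theta^*}-v_\theta}_\mD^2$. Your operator-theoretic route would also work, but the paper's argument is shorter and sidesteps the need to verify the $\mD$-norm contraction for each $n$-step operator.
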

\glue
\begin{proof}
    See \Cref{app:ft_analysis}.
\end{proof}
\glue

With a constant step size, compound TD learning (and hence $n$-step TD learning as a special case) reduces the value-function error at the same asymptotic rate of $O(1 / \sqrt{T})$ for any return estimator.
However, both the contraction modulus $\beta$ and the return variance $\sigma^2$ greatly influence the magnitude of the constant that multiplies this rate.
Given an $n$-step return and a compound return with the same contraction modulus, the compound return has lower variance by \Cref{theorem:variance_reduction} and therefore converges faster to its respective TD fixed point.
Although these two fixed points may generally be different, we can show that the bound on their solution quality is the same by generalizing Lemma~6 of \citet{tsitsiklis1997analysis} for an arbitrary return.

\begin{figure*}[t]
    \centering
    \includegraphics[width=0.24\textwidth]{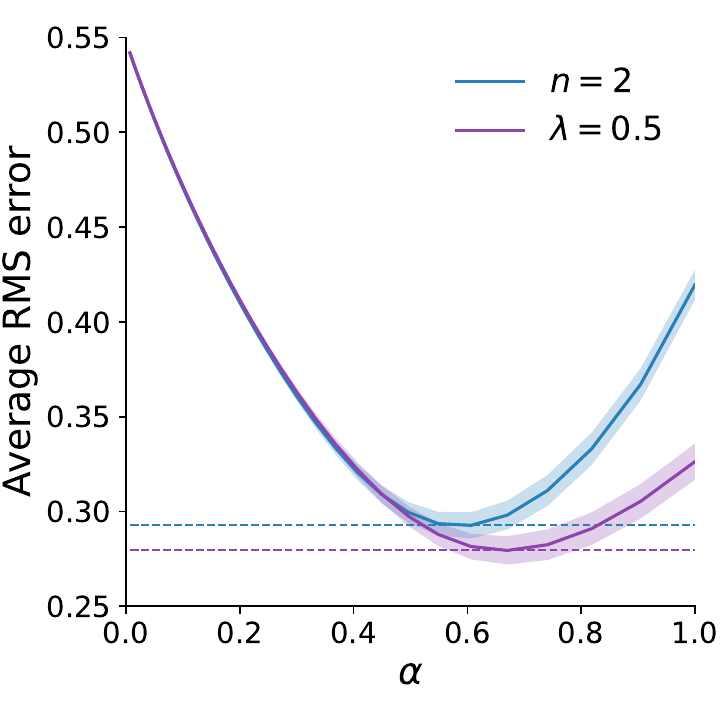}
    \hfill
    \includegraphics[width=0.24\textwidth]{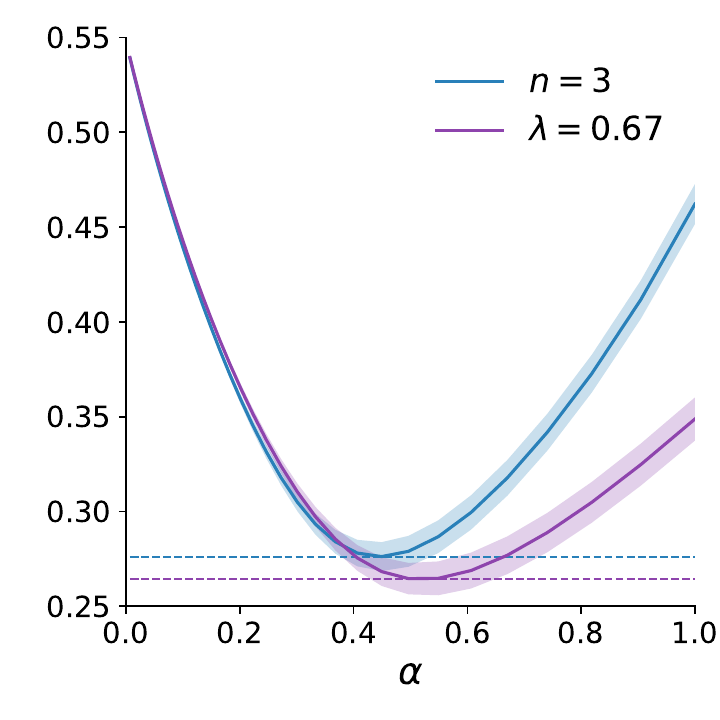}
    \hfill
    \includegraphics[width=0.24\textwidth]{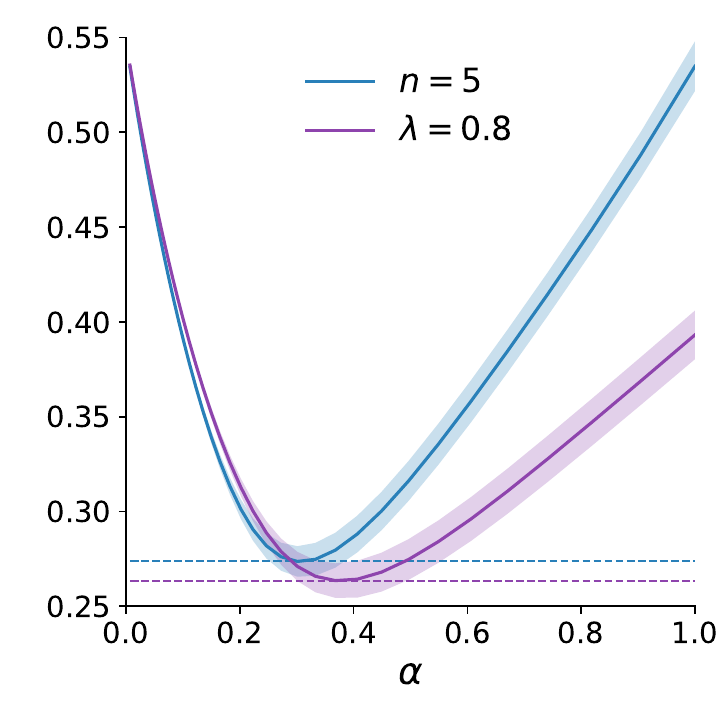}
    \hfill
    \includegraphics[width=0.24\textwidth]{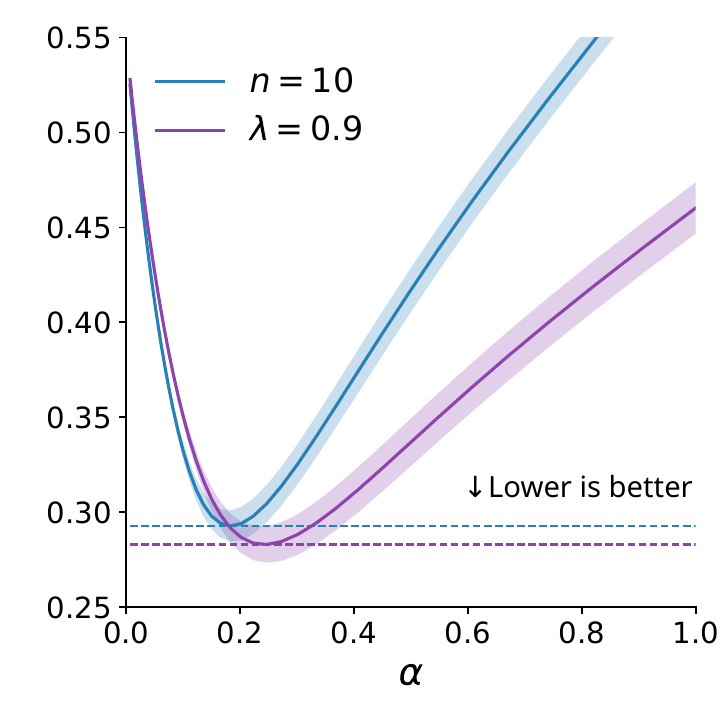}
    \captionvspace
    \caption{
        Comparing $n$-step returns and $\lambda$-returns, paired by COM, in a random walk.
        Dashed lines indicate the lowest errors attained.
    }
    \captionvspace
    \label{fig:rw19}
\end{figure*}

\begin{restatable}[TD Solution Quality]{proposition}{proptdsolutionquality}
    \label{prop:td_solution_quality}
    Let $\theta^*$ be the minimizer of the projected Bellman error under linear function approximation for any $n$-step or compound return with contraction modulus $\beta$.
    The following bound always holds:
    \begin{equation*}
        \norm{\mPhi \theta^* - v_\pi}_\mD \leq \frac{1}{1-\beta} \norm{\mPi v_\pi - v_\pi}_\mD
        \,.
    \end{equation*}
\end{restatable}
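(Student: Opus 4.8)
The plan is to generalize Tsitsiklis and Van Roy's projected-fixed-point error bound (their Lemma~6) from the $\lambda$-return to an arbitrary compound return; since an $n$-step return is the special case of a compound return with a single nonzero weight, this handles both cases at once. Write $P$ for the state-to-state transition matrix induced by $\pi$ and $r_\pi$ for the vector of expected immediate rewards, and let $\mathcal{T}^\vc$ denote the expected compound-return operator, $(\mathcal{T}^\vc v)(s) \defeq \mathbb{E}[G^\vc_t \mid S_t = s]$. Expanding $G^\vc_t$ into $n$-step returns via \Cref{eq:compound_return,eq:nstep_return} gives $\mathcal{T}^\vc v = \sum_{n=1}^\infty c_n \big( \sum_{i=0}^{n-1} \gamma^i P^i r_\pi + \gamma^n P^n v \big)$, so that the reward terms cancel in a difference and $\mathcal{T}^\vc v - \mathcal{T}^\vc v' = \big( \sum_{n=1}^\infty c_n \gamma^n P^n \big)(v - v')$. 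In particular $v_\pi$ is the unique fixed point of $\mathcal{T}^\vc$, since it is the fixed point of every $n$-step Bellman operator.

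The one nontrivial ingredient is upgrading the contraction from the sup-norm of \Cref{eq:compound_erp} to the $\mD$-weighted norm used in the statement. Because $\vd$ is the stationary distribution of $P$, we have $\vd\tran P = \vd\tran$, and Jensen's inequality applied to each coordinate of $Pv$ yields $\norm{Pv}_\mD^2 \le \vd\tran P v^{\circ 2} = \vd\tran v^{\circ 2} = \norm{v}_\mD^2$, where $v^{\circ 2}$ is the elementwise square; iterating, $\norm{P^n v}_\mD \le \norm{v}_\mD$ for all $n \ge 0$. Combined with the difference formula and the triangle inequality, this gives $\norm{\mathcal{T}^\vc v - \mathcal{T}^\vc v'}_\mD \le \big( \sum_{n=1}^\infty c_n \gamma^n \big) \norm{v - v'}_\mD = \beta \norm{v - v'}_\mD$, so $\mathcal{T}^\vc$ is a $\beta$-contraction in $\norm{\cdot}_\mD$. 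Since the $\mD$-orthogonal projection $\mPi$ onto the column space of $\mPhi$ is a non-expansion, $\mPi \mathcal{T}^\vc$ is also a $\beta$-contraction and hence has a unique fixed point; that fixed point is exactly where the projected Bellman error vanishes, so it equals $\mPhi \theta^*$, i.e.\ $\mPhi \theta^* = \mPi \mathcal{T}^\vc(\mPhi \theta^*)$. Also, $\mathcal{T}^\vc v_\pi = v_\pi$ gives $\mPi v_\pi = \mPi \mathcal{T}^\vc v_\pi$.

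Finally I would chain the estimates. By the triangle inequality, $\norm{\mPhi \theta^* - v_\pi}_\mD \le \norm{\mPhi \theta^* - \mPi v_\pi}_\mD + \norm{\mPi v_\pi - v_\pi}_\mD$. Substituting the two fixed-point identities into the first term and using that $\mPi$ is a non-expansion and $\mathcal{T}^\vc$ a $\beta$-contraction, $\norm{\mPhi \theta^* - \mPi v_\pi}_\mD = \norm{\mPi \mathcal{T}^\vc(\mPhi \theta^*) - \mPi \mathcal{T}^\vc v_\pi}_\mD \le \norm{\mathcal{T}^\vc(\mPhi \theta^*) - \mathcal{T}^\vc v_\pi}_\mD \le \beta \norm{\mPhi \theta^* - v_\pi}_\mD$; rearranging yields $(1-\beta)\norm{\mPhi \theta^* - v_\pi}_\mD \le \norm{\mPi v_\pi - v_\pi}_\mD$, which is the claim. (Replacing the triangle inequality by the Pythagorean identity $\norm{\mPhi \theta^* - v_\pi}_\mD^2 = \norm{\mPhi \theta^* - \mPi v_\pi}_\mD^2 + \norm{\mPi v_\pi - v_\pi}_\mD^2$ sharpens the constant to $1/\sqrt{1-\beta^2}$, which still implies the stated bound since $\sqrt{1-\beta^2} \ge 1-\beta$; I would present the simpler version.) The main obstacle is really just the second step---establishing the $\mD$-norm contraction---which is where stationarity of $\vd$ enters; the remainder is the standard projected-fixed-point manipulation and is routine.
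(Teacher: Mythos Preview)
Your proposal is correct and follows essentially the same argument as the paper's proof: define the compound Bellman operator, use the triangle inequality $\norm{\mPhi\theta^*-v_\pi}_\mD\le\norm{\mPhi\theta^*-\mPi v_\pi}_\mD+\norm{\mPi v_\pi-v_\pi}_\mD$, then apply the projected-fixed-point identity together with nonexpansiveness of $\mPi$ and the $\beta$-contraction of the operator in $\norm{\cdot}_\mD$ to absorb the first term and rearrange. If anything, you are more thorough---the paper simply cites \citet{tsitsiklis1997analysis} for the $\mD$-norm nonexpansion of $P$, whereas you spell out the Jensen-plus-stationarity argument, and your parenthetical Pythagorean sharpening to $1/\sqrt{1-\beta^2}$ is a nice bonus the paper does not mention.
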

\glue
\begin{proof}
    See \Cref{app:prop_td_solution_quality}.
\end{proof}
\glue

This shows that the fixed-point error for an arbitrary return is always within $1 \mathbin{/} (1-\beta)$ times the optimal error.
Since $\beta$ is equalized for the two returns in \Cref{theorem:ft_analysis}, the compound return converges to its fixed point faster than the $n$-step return converges to its fixed point, and the quality bound of these two fixed points is the same.

\subsection{Case Study: \texorpdfstring{$\lambda$}{λ}-returns}
\label{subsec:lambda_case_study}

\halfglue

Although the $\lambda$-return is often motivated by its efficient implementation using TD($\lambda$) and eligibility traces, our theory indicates that $\lambda$-returns can also promote faster learning via variance reduction.
We provide empirical support for this by demonstrating faster learning in the random-walk experiment from \citet[][Sec.~12.1]{sutton2018reinforcement}.
In this environment, the agent begins in the center of a linear chain of 19 connected states and can move either left or right.
The agent receives a reward only if it reaches one of the far ends of the chain ($-1$ for the left, $+1$ for the right), in which case the episode terminates.
The agent's policy randomly moves in either direction with equal probability.
We train the agents for 10 episodes, updating the value functions after each episode with offline backups like \Cref{eq:backup}.
To pair the $n$-step returns and $\lambda$-returns together, we derive the effective $\lambda$ for an $n$-step return in the following proposition.

\begin{restatable}[Effective $\lambda$ of $n$-step return]{proposition}{propeffectivelambda}
    \label{prop:lambda_effective_nstep}
    For any ${n \geq 1}$, when $\gamma < 1$, the $\lambda$-return with
    $\lambda = (1 - \gamma^{n-1}) \mathbin{/} (1 - \gamma^n)$
    has the same contraction modulus as the $n$-step return.
    When $\gamma = 1$, the $\lambda$-return with $\lambda = (n-1) \mathbin{/} n$ has the same COM as the $n$-step return.
\end{restatable}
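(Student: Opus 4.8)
The plan is to read \Cref{prop:effective_nstep} ``in reverse'': that proposition expresses the effective $n$-step as a function of the weight sequence $\vc$, and here we simply invert it for the specific one-parameter family of weights defining the $\lambda$-return, $c_k = (1-\lambda)\lambda^{k-1}$. For the discounted case $\gamma < 1$, the contraction modulus appearing on the right-hand side of \Cref{eq:compound_erp} is $\sum_{k=1}^\infty c_k \gamma^k = (1-\lambda)\gamma \sum_{k=1}^\infty (\lambda\gamma)^{k-1}$; since $\lambda,\gamma \in [0,1)$ in the relevant regime, the geometric series converges and this equals $(1-\lambda)\gamma / (1-\lambda\gamma)$. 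I would then set this equal to the $n$-step return's contraction modulus $\gamma^n$ and solve the resulting linear equation in $\lambda$: clearing denominators gives $\gamma - \lambda\gamma = \gamma^n - \lambda\gamma^{n+1}$, hence $\lambda\gamma(1-\gamma^n) = \gamma(1-\gamma^{n-1})$, so $\lambda = (1-\gamma^{n-1})/(1-\gamma^n)$, exactly the claimed formula. A quick sanity check confirms $\lambda \in [0,1)$ for all $n \geq 1$ and $\gamma \in (0,1)$, with $n=1$ recovering $\lambda = 0$ as it should.

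For the undiscounted case $\gamma = 1$, contraction moduli all degenerate to $1$, so I would instead equate centers of mass, matching how \Cref{prop:effective_nstep} handles this boundary. The COM of the $\lambda$-return is $\sum_{k=1}^\infty c_k k = (1-\lambda)\sum_{k=1}^\infty k\lambda^{k-1} = (1-\lambda)/(1-\lambda)^2 = 1/(1-\lambda)$, using the identity $\sum_{k\geq 1} k x^{k-1} = (1-x)^{-2}$. The COM of the $n$-step return is simply $n$ (all weight sits on the single index $n$), so $1/(1-\lambda) = n$ gives $\lambda = (n-1)/n$. As a consistency check one can also obtain this as the $\gamma \to 1$ limit of the discounted answer: the first-order expansion $1 - \gamma^m = m(1-\gamma) + O((1-\gamma)^2)$ near $\gamma = 1$ yields $(1-\gamma^{n-1})/(1-\gamma^n) \to (n-1)/n$, so the two cases of \Cref{prop:lambda_effective_nstep} agree at the boundary, mirroring the two cases of \Cref{prop:effective_nstep}.

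There is essentially no hard step here: the entire content is two geometric-series evaluations and the solution of a linear equation. The only points requiring care are (i) invoking the convergence condition $\lambda\gamma < 1$ before summing the series, which holds since $\lambda, \gamma \in [0,1)$ in the discounted case, and (ii) phrasing the $\gamma = 1$ statement in terms of the COM rather than the (now trivial) contraction modulus, consistently with \Cref{prop:effective_nstep}.
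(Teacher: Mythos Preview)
Your proposal is correct and follows essentially the same approach as the paper: compute the $\lambda$-return's contraction modulus via the geometric series $\sum_k (1-\lambda)\lambda^{k-1}\gamma^k = \gamma(1-\lambda)/(1-\gamma\lambda)$, equate it to $\gamma^n$, and solve the resulting linear equation in $\lambda$; then handle $\gamma=1$ by equating COMs using $\sum_k k(1-\lambda)\lambda^{k-1} = 1/(1-\lambda)$. Your added sanity checks (range of $\lambda$, continuity as $\gamma\to 1$) are extras not in the paper's proof, but the core argument is identical.
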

\glue
\begin{proof}
    See \Cref{app:prop_lambda_effective_nstep}.
\end{proof}
\glue

Because this is an undiscounted task, we use the relationship\footnote{
    Another way to write this relationship is $n = 1 \mathbin{/} (1 - \lambda)$, which makes it clear how the effective $n$-step is affected by $\lambda$.
}
$\lambda = {(n-1) \mathbin{/} n}$ to generate several $(n,\lambda)$-pairs with equal COMs in \Cref{tab:alr}.
For our experiment, we choose four commonly used $n$-step values, $\{2,3,5,10\}$, which correspond to $\lambda \in \{0.5, 0.67, 0.8, 0.9\}$.
In \Cref{fig:rw19}, we plot the average root-mean-square (RMS) value error (with respect to $v_\pi$) as a function of the step size $\alpha$ over 100 trials.
We also indicate 95\% confidence intervals by the shaded regions.
For all of the tested $(n,\lambda)$-pairs, an interesting trend emerges.
In the left half of each plot, variance is not an issue because the step size is small and has plenty of time to average out the randomness in the estimates.
Learning therefore progresses at a nearly identical rate for both the $n$-step return and the $\lambda$-return since they have the same COM (although there is a small discrepancy due to the truncation of the episodic task).
However, as $\alpha \to 1$ in the right half of each plot, variance becomes a significant factor as the step size becomes too large to mitigate the noise in the updates.
This causes the $n$-step return's error to diverge sharply compared to the $\lambda$-return's as the $\lambda$-return manages variance more effectively.
The lowest error attained by each $\lambda$-return is also better than that of its corresponding $n$-step return in all cases.
Notably, neither of our variance assumptions hold perfectly in this environment, demonstrating that our variance model's predictions are useful in practical settings.

\begin{table}[t]
    \centering
    \caption{Common $n$-step returns and $\lambda$-returns with equal COMs.}
    \label{tab:alr}
    \vspace{0.1in}
    \begin{tabular}{l|rrrrrrrrr}
        \toprule
        $n$ & 2 & 3 & 4 & 5 & 10 & 20 & 50 & 100 \\
        \midrule
        $\lambda$ & 0.5 & 0.67 & 0.75 & 0.8 & 0.9 & 0.95 & 0.98 & 0.99 \\
        \bottomrule
    \end{tabular}
    \glue
\end{table}

\section{Piecewise \texorpdfstring{$\lambda$}{λ}-Returns}
\label{sec:approximate}

Although the previous experiment shows that $\lambda$-returns can achieve faster learning, they remain expensive for deep RL.
This is because the $\lambda$-return at time $t$ theoretically bootstraps on every time step after $t$ (until the end of an episode), with each bootstrap requiring a forward pass through the neural network.
Even when truncating the $\lambda$-return to a reasonable length, this is still several times more expensive than the single bootstrap of an $n$-step return.
Previous work has amortized the cost of $\lambda$-returns over long trajectories by exploiting their recursive structure \citep[e.g.,][]{munos2016safe,harb2016investigating,daley2019reconciling},
but the price to pay for this efficiency is the requirement that experiences must be temporally adjacent, which can hurt performance.
Our preliminary experiments confirmed this, indicating that the correlations within replayed trajectories counteract the benefits of $\lambda$-returns when compared to minibatches of $n$-step returns (with the batch size chosen to equalize computation).

We instead seek a compound return that approximates the variance-reduction property of the $\lambda$-return while being computationally efficient for minibatch replay.
There are many ways we could average $n$-step returns together, and so we constrain our search by considering compound returns that
1)~comprise an average of only two $n$-step returns to minimize cost,
2)~preserve the contraction modulus or COM of the $\lambda$-return, and
3)~place weights on the TD errors that are close to those assigned by the $\lambda$-return.

The first property constrains our estimator to have the two-bootstrap form of \Cref{eq:two-bootstrap}:
$\smash{(1-c) \nstep{n_1}_t + c \nstep{n_2}_t}$.
Let $n$ be our targeted effective $n$-step;
the effective $\lambda$ can be obtained from \Cref{prop:lambda_effective_nstep}.
Let us also assume $\gamma < 1$, although we include the case where $\gamma=1$ in \Cref{app:pilar}.
To preserve the contraction modulus as in the second property, we must satisfy
$(1-c) \gamma^{n_1} + c \gamma^{n_2} = \gamma^n$.
Assuming that we have freedom in the choice of $n_1$ and $n_2$, it follows that
$c = (\gamma^n - \gamma^{n_1}) \mathbin{/} (\gamma^{n_2} - \gamma^{n_1})$\,.

\begin{wrapfigure}{R}{0.5\columnwidth}
    \centering
    \includegraphics[width=0.5\columnwidth]{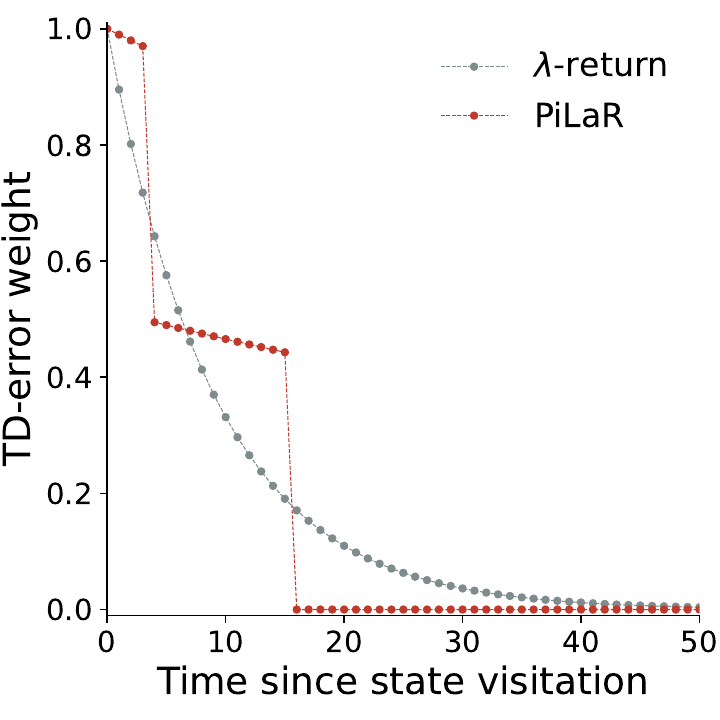}
    \captionvspace
    \caption{
        TD-error weights for Pilar and a $\lambda$-return (${\lambda=0.904}$).
        Both returns have the same contraction modulus as a 10-step return when $\gamma=0.99$.
    }
    \label{fig:pilar}
\end{wrapfigure}

We would thus like to find $n_1$ and $n_2$ such that the weights given to the TD errors optimize some notion of closeness to the TD($\lambda$) weights, in order to fulfill the third and final property.
Although there are many ways we could define the error, we propose to minimize the maximum absolute difference between the weights, since this ensures that no individual weight deviates too far from the TD($\lambda$) weight.
Recall that the cumulative weight given to TD error $\delta_{t+i}$ by $n$-step return $\smash{\nstep{n}_t}$ is $h_i = 1$ if $i < n$, and is $h_i = 0$ otherwise.
It follows that the two-bootstrap average assigns a weight of ${h_i = 1}$ if ${i < n_1}$;
a weight of ${h_i = c}$ if ${n_1 \leq i < n_2}$ since ${(1-c) \cdot 0 + c \cdot 1 = c}$;
and a weight of $h_i = 0$ otherwise.
We then minimize the error
$\max_{i \geq 0} \abs{\gamma^i h_i - (\gamma \lambda)^i}$.

We call our approximation Piecewise $\lambda$-Return because each weight $h_i$ is a piecewise function whose value depends on where $i$ lies in relation to the interval $[n_1,n_2)$.
\Cref{fig:pilar} illustrates how Pilar roughly approximates the TD($\lambda$) decay using a step-like shape.
Although Pilar's TD-error weights do not form a smooth curve, they retain important properties like contraction modulus, monotonicity, and variance reduction.
Crucially, Pilar is much cheaper to compute than the $\lambda$-return, making it more suitable for minibatch experience replay.
In \Cref{app:pilar}, we describe a basic search algorithm for finding the lowest-error $(n_1,n_2)$-pair, along with a reference table of precomputed Pilars for $\gamma=0.99$.

\section{Deep RL Experiments}
\label{sec:experiments}

We consider a multistep generalization of  Deep Q-Network \citep[DQN;][]{mnih2015human}.
The action-value function $q(s,a;\theta)$ is implemented as a neural network to enable generalization over high-dimensional states, where $\theta \in \mathbb{R}^d$ is the learnable parameters.
A stale copy $\theta^-$ of the parameters is used only for bootstrapping and is infrequently updated from $\theta$ in order to stabilize learning.
The agent interacts with its environment and stores each transition $(s,a,r,s')$---where $s$ is the state, $a$ is the action, $r$ is the reward, and $s'$ is the next state---in a replay memory $\mathcal{D}$.
The network's loss function is defined as
\begin{equation*}
    \label{eq:dqn_multistep}
    \mathcal{L}(\theta,\theta^-)
    \defeq \mathbb{E} \! \left[ \frac{1}{2} \left( \hat{G}(r,s',\dots;\theta^-) - q(s,a;\theta) \right)^2 \right]
    \,,
\end{equation*}
where $\hat{G}(r,s',\dots;\theta^-)$ is a return estimator for $q_*(s,a)$ and the expectation is taken over a uniform distribution on~$\mathcal{D}$.
The network is trained by minimizing this loss via stochastic gradient descent on sampled minibatches.
\looseness=-1

\begin{figure*}[t]
    \centering
    \includegraphics[width=0.19\textwidth]{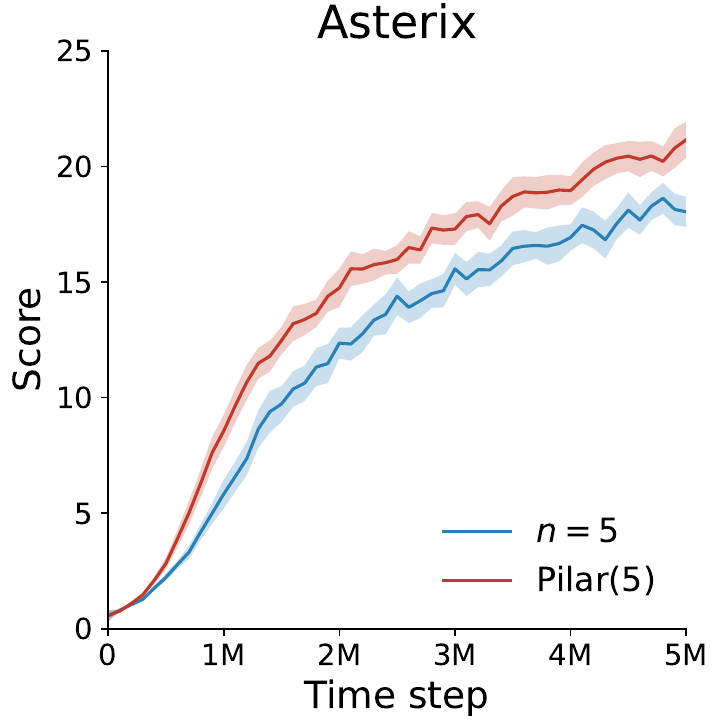}
    \hfill
    \includegraphics[width=0.19\textwidth]{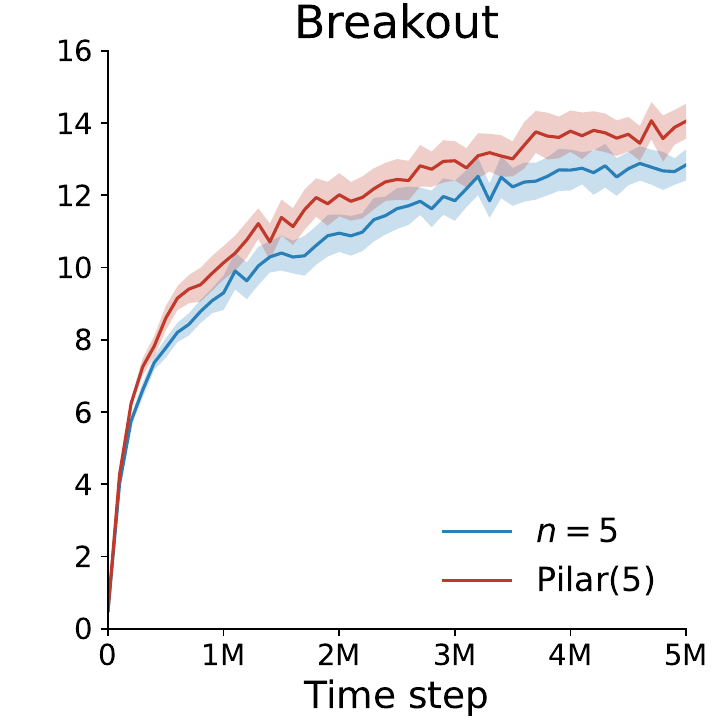}
    \hfill
    \includegraphics[width=0.19\textwidth]{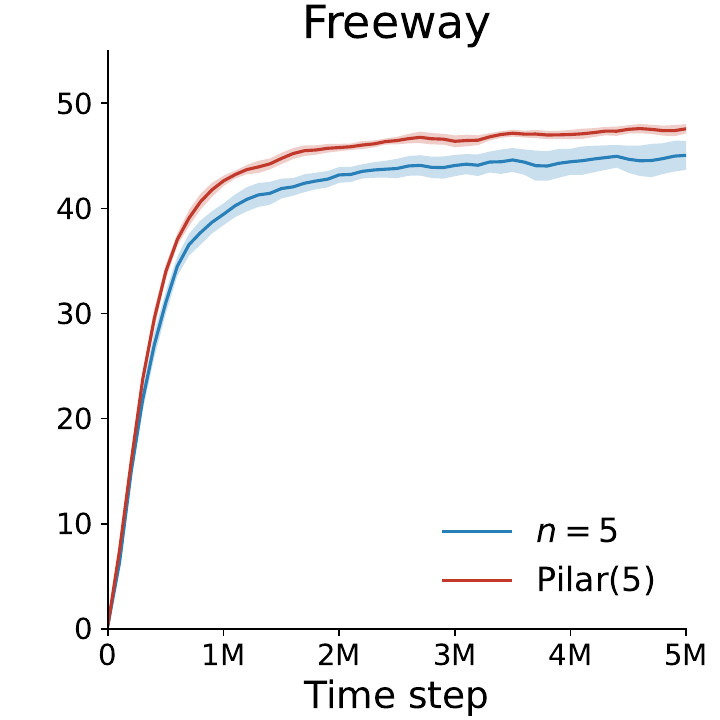}
    \hfill
    \includegraphics[width=0.19\textwidth]{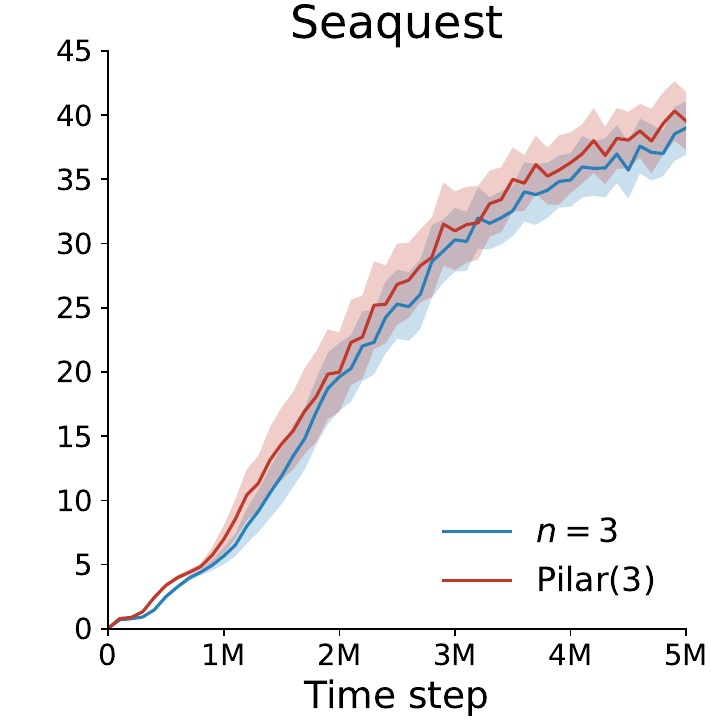}
    \hfill
    \includegraphics[width=0.19\textwidth]{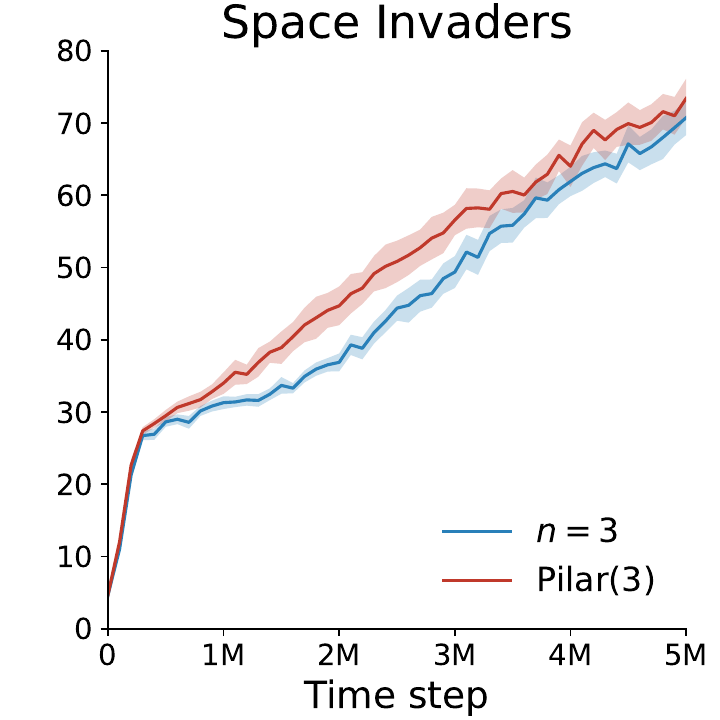}
    \captionvspace
    \caption{
        Learning curves for DQN with $n$-step returns and Pilars in five MinAtar games.
    }
    \label{fig:lcurves_dqn_main}
\end{figure*}

\begin{figure*}[t]
    \centering
    \includegraphics[width=0.19\textwidth]{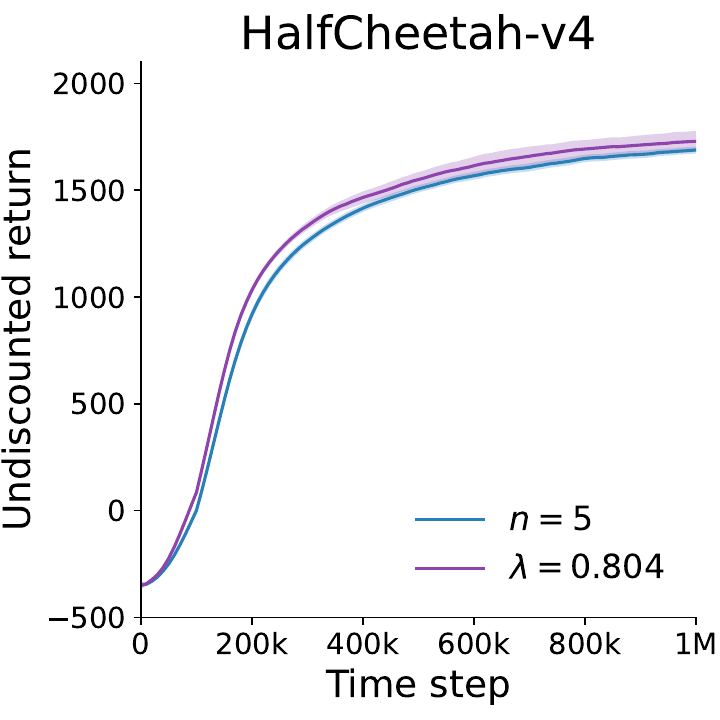}
    \hfill
    \includegraphics[width=0.19\textwidth]{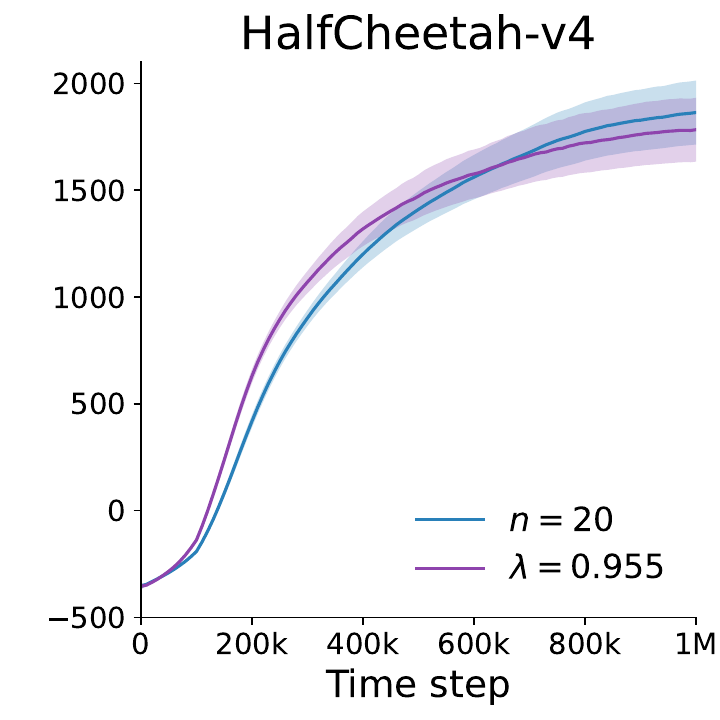}
    \hfill
    \includegraphics[width=0.19\textwidth]{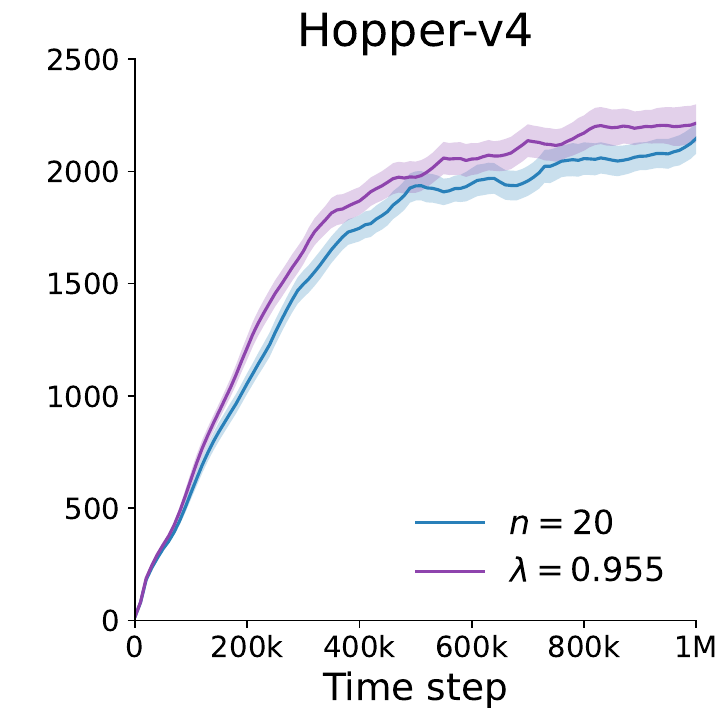}
    \hfill
    \includegraphics[width=0.19\textwidth]{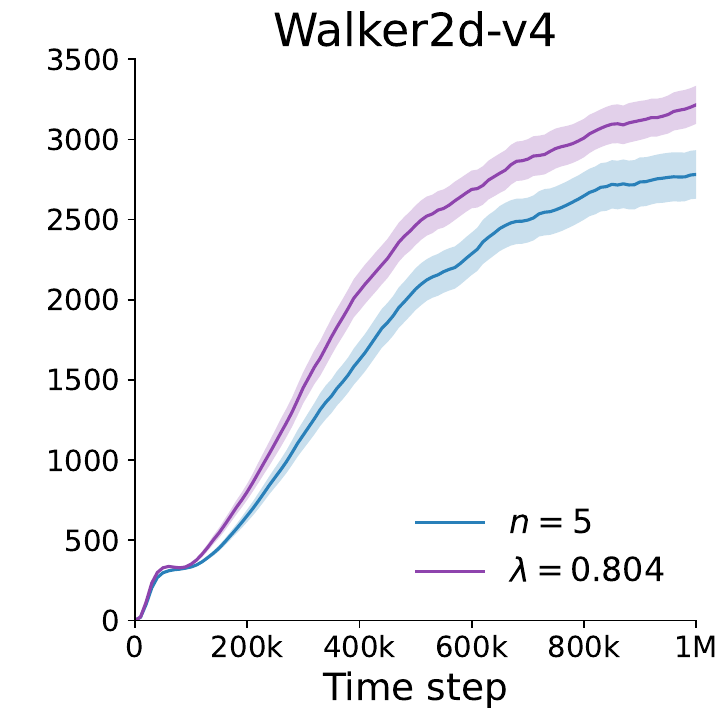}
    \hfill
    \includegraphics[width=0.19\textwidth]{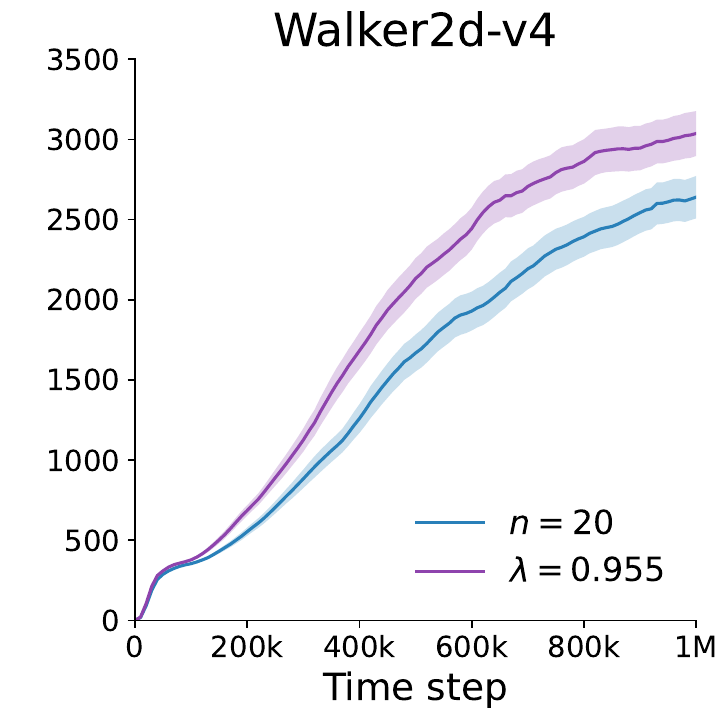}
    \captionvspace
    \caption{
        Learning curves for PPO with $n$-step returns and $\lambda$-returns in three MuJoCo environments.
    }
    \label{fig:lcurves_ppo_main}
\end{figure*}

\begin{wraptable}{R}{0.5\columnwidth}
    \glue
    \glue
    \centering
    \caption{Pilars used for MinAtar.}
    \vspace{0.1in}
    \label{tab:experiment_pilars}
    \begin{tabular}{r|lll}
        \toprule
        $n$-step & $n_1$ & $n_2$ & $c$ \\
        \midrule
        3 & 1 & 6 & 0.406 \\
        5 & 2 & 9 & 0.437 \\
        \bottomrule
    \end{tabular}
    \glue
\end{wraptable}

We test our agents in all five MinAtar games \citep{young2019minatar}:
Asterix, Breakout, Freeway, Seaquest, and Space Invaders.
The states are $10\times 10$ multi-channel images depicting object locations and velocity trails.
Each agent's network is a two-layer convolutional architecture with rectified linear units (ReLUs).
The agents execute an $\epsilon$-greedy policy for 5M time steps (where $\epsilon$ is linearly annealed from $1$ to $0.1$ over the first 100k steps) and conduct a minibatch update of 32 samples on every step.
We provide more details in \Cref{app:experiment_setup}.
Code is available online.\footnote{
    {\tiny \url{https://github.com/brett-daley/averaging-nstep-returns}}
}

For $n = 3$ and $n = 5$, we compare the $n$-step return against the corresponding Pilar of the same contraction modulus with the given discount factor, $\gamma = 0.99$ (see \Cref{tab:experiment_pilars} for specific values of $n_1$, $n_2$, and $c$).
We chose five Adam step sizes over a logarithmic grid search and generated learning curves by plotting the 100-episode moving average of undiscounted return (game score) versus time steps.
We then chose the step size for each game-estimator pair that maximized the area under the curve over the last 1M steps, or 20\% of training.
We showcase the best results for each game in \Cref{fig:lcurves_dqn_main} but include all of the learning curves in \Cref{app:experiment_setup}.
Out of the five games, Seaquest was the only one in which Pilar did not outperform $n$-step returns in a statistically significant way.
The results are averaged over 32 trials, with shading to indicate 95\% confidence intervals.

In several cases, Pilars are able to significantly improve the agent's best performance compared to $n$-step returns.
This is in spite of the fact that each pair of $n$-step return and Pilar have the same contraction modulus, suggesting that the performance increase is partially due to variance reduction.
Overall, Pilars outperform $n$-step returns for 1 in 5 games (20\%) when $n=3$, and 3 in 5 games (60\%) when $n=5$.
In the remaining cases, the performance is not significantly different;
thus, Pilars improve average performance in these games.
Because the average performance gap between the estimators also widens as $n$ increases, Pilar's benefits appears to become more pronounced for longer---and, hence, higher-variance---$n$-step returns.

These results corroborate our theory by showing that averaging $n$-step returns can accelerate learning, even with a nontrivial network architecture.
We do not claim that Pilars are necessarily the best average to achieve variance reduction, but our experiments still demonstrate the practical relevance of such compound returns.

\halfglue

\paragraph{PPO}
To demonstrate that our theory also applies to on-policy methods such as Proximal Policy Optimization \citep[PPO;][]{schulman2017proximal}, we conduct additional experiments in three MuJoCo environments \citep{todorov2012mujoco}: Half Cheetah, Hopper, and Walker~2D.
Because on-policy agents like PPO are trained on relatively short trajectories of recent experiences, it is feasible to compute exact $\lambda$-returns for every experience, often referred to as Generalized Advantage Estimation \citep[GAE;][]{schulman2015high} in this context.
This means that an approximation like Pilar is not necessary in this setting and we can directly compare $\lambda$-returns with $n$-step returns.

We use the CleanRL implementation of PPO \citep{huang2022cleanrl}.
We compare different returns with the values $(n, \lambda) \in {(5, 0.804), (10, 0.905), (20, 0.955)}$.
These pairs were chosen because the respective $\lambda$-returns and $n$-step returns have the same contraction moduli when $\gamma=0.99$.
We specifically include $n=20$ because $\lambda=0.95$ is a common default value for PPO.
We showcase the best results in \Cref{fig:lcurves_ppo_main} but include all of the learning curves in \Cref{app:experiment_setup}.
The results are averaged over 100 trials, with shading to indicate 95\% confidence intervals.
In seven out of nine of the cases, $\lambda$-returns significantly improve sample efficiency over $n$-step returns with respect to the 100-episode moving average of undiscounted return, again demonstrating the benefits of compound returns.

\section{Conclusion}

\halfglue

We have shown that compound returns, including $\lambda$-returns, have a variance-reduction property.
This is the first evidence, to our knowledge, that $\lambda$-returns have a theoretical learning advantage over $n$-step returns in the absence of function approximation;
it was previously believed that both were different yet equivalent ways of interpolating between TD and Monte Carlo learning.
Our random-walk experiments confirm that an appropriately chosen $\lambda$-return performs better than a given $n$-step return across a range of step sizes.
In replay-based deep RL methods like DQN, where $\lambda$-returns are difficult to implement efficiently, we demonstrated with Pilar that a simpler average is still able to train neural networks with fewer samples.
Since the average is formed from only two $n$-step returns, the additional computational cost is negligible---less expensive than adding a second target network, as is often done in recent methods \citep[e.g.,][]{fujimoto2018addressing,haarnoja2018soft}.
\looseness=-1

Although we are able to establish strong theoretical guarantees regarding the convergence rate of multistep TD learning with linear function approximation, we note that this result does not automatically extend to nonlinear approximators.
This is not a limitation of our analysis, but rather of the class of TD algorithms commonly used in deep RL.
Indeed, even $1$-step TD does not converge with certain nonlinear function approximators \citep[see, e.g.,][Sec.~10]{tsitsiklis1997analysis}.
Although our experiments have shown that compound returns often do improve performance for DQN and PPO, it is not guaranteed that these results will generalize to other methods.
Additionally, we did not show that the improved performance for these algorithms is due solely to variance reduction;
it is possible that other favorable properties of the compound returns also contributed to this improvement, such as convergence to a different fixed point (despite the bound that we proved in \Cref{prop:td_solution_quality}) or, as we show in concurrent work, the long-tailed credit-assignment characteristic of certain compound returns \citep{daley2024demystifying}.
The added complexities of nonconvex optimization likely explain why the performance trends for DQN and PPO are not always as clear as our random-walk results.
Nevertheless, we note that the variance-reduction property \emph{is} a general phenomenon, since the TD-error assumptions are agnostic to the choice of function approximator.
This property, coupled with our empirical results, indicates potential for using compound returns in other deep RL methods to improve sample efficiency.
\looseness=-1

A number of interesting extensions to our work are possible.
For instance, we derived Pilar under the assumption that the $\lambda$-return is a good estimator to approximate.
However, the exponential decay of the $\lambda$-return originally arose from the need for an efficient online update rule using eligibility traces, and is not necessarily optimal in terms of the bias-variance trade-off.
With experience replay, we are free to average $n$-step returns in any way we want, even if the average would not be easily implemented online.
This opens up exciting possibilities for new families of return estimators:
e.g., those that minimize variance for a given contraction modulus or COM.
Based on our compound variance model (\Cref{prop:compound_variance}), a promising direction in this regard appears to be weights that initially decay faster than the exponential function, but then slower afterwards.
Minimizing variance becomes even more important for off-policy learning, where the inclusion of importance-sampling ratios greatly exacerbates variance.
Recent works \citep{munos2016safe,daley2023trajectory} have expressed arbitrary off-policy corrections in terms of weighted sums of TD errors, and so our theory could be extended to this setting with only minor modifications.

\section*{Acknowledgments}

\halfglue

This research is supported in part by the Natural Sciences and Engineering Research Council of Canada (NSERC), the Canada CIFAR AI Chair Program, and the Digital Research Alliance of Canada.

\section*{Impact Statement}

This paper presents work whose goal is to advance the field of Machine Learning.
There are many potential societal consequences of our work, none which we feel must be specifically highlighted here.

\bibliography{icml2024}

@article{barto1983neuronlike,
  title={Neuronlike Adaptive Elements that Can Solve Difficult Learning Control Problems},
  author={Barto, Andrew G. and Sutton, Richard S. and Anderson, Charles W.},
  journal={IEEE Transactions on Systems, Man, and Cybernetics},
  number={5},
  volume={13},
  pages={834--846},
  year={1983}
}

@inproceedings{bhandari2018finite,
  title={A Finite Time Analysis of Temporal Difference Learning with Linear Function Approximation},
  author={Bhandari, Jalaj and Russo, Daniel and Singal, Raghav},
  booktitle={Conference on Learning Theory (COLT)},
  year={2018}
}

@book{bertsekas1996neuro,
  title={Neuro-Dynamic Programming},
  author={Bertsekas, Dimitri P. and Tsitsiklis, John N.},
  year={1996},
  publisher={Athena Scientific}
}

@inproceedings{chung2021beyond,
  title={Beyond variance reduction: Understanding the true impact of baselines on policy optimization},
  author={Chung, Wesley and Thomas, Valentin and Machado, Marlos C. and Le Roux, Nicolas},
  booktitle={International Conference on Machine Learning (ICML)},
  year={2021}
}

@inproceedings{chebotar2023q,
  title={{Q}-{T}ransformer: Scalable Offline Reinforcement Learning via Autoregressive {Q}-Functions},
  author={Chebotar, Yevgen and Hausman, Karol and Xia, Fei and Lu, Yao and Irpan, Alex and Kumar, Aviral and Yu, Tianhe and Herzog, Alexander and Pertsch, Karl and Gopalakrishnan, Keerthana and others},
  booktitle={Conference on Robot Learning (CoRL)},
  year={2023}
}

@inproceedings{daley2019reconciling,
  title={Reconciling $\lambda$-Returns with Experience Replay},
  author={Daley, Brett and Amato, Christopher},
  booktitle={Neural Information Processing Systems (NeurIPS)},
  year={2019}
}

@inproceedings{daley2023trajectory,
  title={Trajectory-Aware Eligibility Traces for Off-Policy Reinforcement Learning},
  author={Daley, Brett and White, Martha and Amato, Christopher and Machado, Marlos C.},
  booktitle={International Conference on Machine Learning (ICML)},
  year={2023}
}

@article{daley2024demystifying,
  title={Demystifying the Recency Heuristic in Temporal-Difference Learning},
  author={Daley, Brett and Machado, Marlos C. and White, Martha},
  journal={Reinforcement Learning Journal (RLJ)},
  volume={1},
  issue={1},
  year={2024}
}

@inproceedings{fujimoto2018addressing,
  title={Addressing Function Approximation Error in Actor-critic Methods},
  author={Fujimoto, Scott and Hoof, Herke and Meger, David},
  booktitle={International Conference on Machine Learning (ICML)},
  year={2018}
}

@inproceedings{haarnoja2018soft,
  title={Soft Actor-critic: Off-policy Maximum Entropy Deep Reinforcement Learning with a Stochastic Actor},
  author={Haarnoja, Tuomas and Zhou, Aurick and Abbeel, Pieter and Levine, Sergey},
  booktitle={International Conference on Machine Learning (ICML)},
  year={2018}
}

@inproceedings{harb2016investigating,
  title={Investigating Recurrence and Eligibility Traces in Deep {Q}-Networks},
  author={Harb, Jean and Precup, Doina},
  booktitle={NeurIPS Deep Reinforcement Learning Workshop},
  year={2016}
}

@inproceedings{hernandez2019understanding,
  title={Understanding Multi-step Deep Reinforcement Learning: A Systematic Study of the {DQN} Target},
  author={Hernandez-Garcia, J. Fernando and Sutton, Richard S.},
  booktitle={Neural Information Processing Systems (NeurIPS)},
  year={2019}
}

@inproceedings{hessel2018rainbow,
  title={Rainbow: Combining Improvements in Deep Reinforcement Learning},
  author={Hessel, Matteo and Modayil, Joseph and Van Hasselt, Hado and Schaul, Tom and Ostrovski, Georg and Dabney, Will and Horgan, Dan and Piot, Bilal and Azar, Mohammad and Silver, David},
  booktitle={AAAI Conference on Artificial Intelligence},
  year={2018}
}

@article{huang2022cleanrl,
  author={Shengyi Huang and Rousslan Fernand Julien Dossa and Chang Ye and Jeff Braga and Dipam Chakraborty and Kinal Mehta and João G.M. Araújo},
  title={{CleanRL}: High-quality Single-file Implementations of Deep Reinforcement Learning Algorithms},
  journal={Journal of Machine Learning Research},
  year={2022},
  volume={23},
  number={274},
  pages={1--18},
}

@article{kahn1951estimation,
  title={Estimation of Particle Transmission by Random Sampling},
  author={Kahn, Herman and Harris, Theodore E.},
  journal={National Bureau of Standards: Applied Mathematics Series},
  volume={12},
  pages={27--30},
  year={1951}
}

@inproceedings{kearns2000bias,
  title={Bias-Variance Error Bounds for Temporal Difference Updates},
  author={Kearns, Michael J. and Singh, Satinder},
  booktitle={Conference on Learning Theory (COLT)},
  year={2000}
}

@inproceedings{kingma2015adam,
  title={Adam: A Method for Stochastic Optimization},
  author={Kingma, Diederik P. and Ba, Jimmy},
  booktitle={International Conference on Representation Learning (ICLR)},
  year={2015}
}

@inproceedings{konidaris2011td_gamma,
  title={$\text{TD}_\gamma$: Re-evaluating complex backups in temporal difference learning},
  author={Konidaris, George and Niekum, Scott and Thomas, Philip S.},
  booktitle={Neural Information Processing Systems (NeurIPS)},
  year={2011}
}

@inproceedings{kozuno2021revisiting,
  title={Revisiting {P}eng’s {Q}($\lambda$) for Modern Reinforcement Learning},
  author={Kozuno, Tadashi and Tang, Yunhao and Rowland, Mark and Munos, R{\'e}mi and Kapturowski, Steven and Dabney, Will and Valko, Michal and Abel, David},
  booktitle={International Conference on Machine Learning (ICML)},
  year={2021}
}

@inproceedings{lillicrap2016continuous,
  title={Continuous Control with Deep Reinforcement Learning},
  author={Lillicrap, Timothy P. and Hunt, Jonathan J. and Pritzel, Alexander and Heess, Nicolas and Erez, Tom and Tassa, Yuval and Silver, David and Wierstra, Daan},
  booktitle={International Conference on Learning Representations (ICLR)},
  year={2016}
}

@article{lin1992self,
  title={Self-improving Reactive Agents Based on Reinforcement Learning, Planning and Teaching},
  author={Lin, Long-Ji},
  journal={Machine Learning},
  volume={8},
  pages={293--321},
  year={1992}
}

@inproceedings{munos2016safe,
  title={Safe and Efficient Off-Policy Reinforcement Learning},
  author={Munos, R{\'e}mi and Stepleton, Tom and Harutyunyan, Anna and Bellemare, Marc},
  booktitle={Neural Information Processing Systems (NeurIPS)},
  year={2016}
}

@article{mnih2015human,
  title={Human-level Control through Deep Reinforcement Learning},
  author={Mnih, Volodymyr and Kavukcuoglu, Koray and Silver, David and Rusu, Andrei A and Veness, Joel and Bellemare, Marc G and Graves, Alex and Riedmiller, Martin and Fidjeland, Andreas K and Ostrovski, Georg and others},
  journal={Nature},
  volume={518},
  number={7540},
  pages={529--533},
  year={2015}
}

@inproceedings{mnih2016asynchronous,
  title={Asynchronous Methods for Deep Reinforcement Learning},
  author={Mnih, Volodymyr and Badia, Adria Puigdomenech and Mirza, Mehdi and Graves, Alex and Lillicrap, Timothy and Harley, Tim and Silver, David and Kavukcuoglu, Koray},
  booktitle={International Conference on Machine Learning (ICML)},
  year={2016}
}

@article{peng1996incremental,
  title={Incremental Multi-step {Q}-learning},
  author={Peng, Jing and Williams, Ronald J.},
  journal={Machine Learning},
  volume={22},
  pages={283--290},
  year={1996}
}

@article{robbins1951stochastic,
  title={A Stochastic Approximation Method},
  author={Robbins, Herbert and Monro, Sutton},
  journal={The Annals of Mathematical Statistics},
  pages={400--407},
  year={1951}
}

@techreport{rummery1994line,
  title={On-line {Q}-{L}earning Using Connectionist Systems},
  author={Rummery, Gavin A. and Niranjan, Mahesan},
  year={1994},
  institution={University of Cambridge}
}

@book{russell2010artificial,
  title={Artificial Intelligence: A Modern Approach},
  author={Russell, Stuart J. and Norvig, Peter},
  edition={3rd},
  publisher={Prentice Hall},
  year={2010}
}

@article{schrittwieser2020mastering,
  title={Mastering {A}tari, {G}o, {C}hess and {S}hogi by planning with a learned model},
  author={Schrittwieser, Julian and Antonoglou, Ioannis and Hubert, Thomas and Simonyan, Karen and Sifre, Laurent and Schmitt, Simon and Guez, Arthur and Lockhart, Edward and Hassabis, Demis and Graepel, Thore and others},
  journal={Nature},
  volume={588},
  number={7839},
  pages={604--609},
  year={2020}
}

@inproceedings{schulman2015trust,
  title={Trust Region Policy Optimization},
  author={Schulman, John and Levine, Sergey and Abbeel, Pieter and Jordan, Michael and Moritz, Philipp},
  booktitle={International Conference on Machine Learning (ICML)},
  year={2015}
}

@inproceedings{schulman2015high,
  title={High-dimensional Continuous Control Using Generalized Advantage Estimation},
  author={Schulman, John and Moritz, Philipp and Levine, Sergey and Jordan, Michael and Abbeel, Pieter},
  booktitle={International Conference on Learning Representations (ICLR)},
  year={2015}
}

@article{schulman2017proximal,
  title={Proximal Policy Optimization Algorithms},
  author={Schulman, John and Wolski, Filip and Dhariwal, Prafulla and Radford, Alec and Klimov, Oleg},
  journal={arXiv},
  volume={1707.06347},
  year={2017}
}

@inproceedings{schwarzer2023bigger,
  title={Bigger, Better, Faster: Human-level {A}tari with human-level efficiency},
  author={Schwarzer, Max and Ceron, Johan Samir Obando and Courville, Aaron and Bellemare, Marc G. and Agarwal, Rishabh and Castro, Pablo Samuel},
  booktitle={International Conference on Machine Learning (ICML)},
  year={2023}
}

@inproceedings{silver2014deterministic,
  title={Deterministic Policy Gradient Algorithms},
  author={Silver, David and Lever, Guy and Heess, Nicolas and Degris, Thomas and Wierstra, Daan and Riedmiller, Martin},
  booktitle={International Conference on Machine Learning (ICML)},
  year={2014}
}

@phdthesis{sutton1984temporal,
  title={Temporal Credit Assignment in Reinforcement Learning},
  author={Sutton, Richard S.},
  year={1984},
  school={University of Massachusetts Amherst}
}

@article{sutton1988learning,
  title={Learning to Predict by the Methods of Temporal Differences},
  author={Sutton, Richard S.},
  journal={Machine Learning},
  volume={3},
  pages={9--44},
  year={1988}
}

@book{sutton2018reinforcement,
  title={Reinforcement Learning: An Introduction},
  author={Sutton, Richard S. and Barto, Andrew G.},
  year={2018},
  publisher={MIT Press}
}

@article{thomas2015policy,
  title={Policy evaluation using the {$\Omega$}-return},
  author={Thomas, Philip S. and Niekum, Scott and Theocharous, Georgios and Konidaris, George},
  journal={Neural Information Processing Systems (NeurIPS)},
  year={2015}
}

@inproceedings{todorov2012mujoco,
  title={{MuJoCo}: A physics engine for model-based control},
  author={Todorov, Emanuel and Erez, Tom and Tassa, Yuval},
  booktitle={IEEE/RSJ International Conference on Intelligent Robots and Systems (IROS)},
  year={2012},
}

@article{tsitsiklis1997analysis,
  title={An Analysis of Temporal-Difference Learning with Function Approximation},
  author={Tsitsiklis, John N. and Van Roy, Benjamin},
  journal={IEEE Transactions on Automatic Control},
  volume={42},
  number={5},
  pages={674--690},
  year={1997},
}

@phdthesis{watkins1989learning,
  title={Learning from Delayed Rewards},
  author={Watkins, Christopher J. C. H.},
  year={1989},
  school={University of Cambridge}
}

@article{williams1992simple,
  title={Simple Statistical Gradient-following Algorithms for Connectionist Reinforcement Learning},
  author={Williams, Ronald J.},
  journal={Machine Learning},
  volume={8},
  pages={229--256},
  year={1992}
}

@article{wurman2022outracing,
  title={Outracing champion {G}ran {T}urismo drivers with deep reinforcement learning},
  author={Wurman, Peter R. and Barrett, Samuel and Kawamoto, Kenta and MacGlashan, James and Subramanian, Kaushik and Walsh, Thomas J. and Capobianco, Roberto and Devlic, Alisa and Eckert, Franziska and Fuchs, Florian and others},
  journal={Nature},
  volume={602},
  number={7896},
  pages={223--228},
  year={2022}
}

@article{young2019minatar,
  title={{MinAtar}: An {A}tari-inspired testbed for thorough and reproducible reinforcement learning experiments},
  author={Young, Kenny and Tian, Tian},
  journal={arXiv},
  volume={1903.03176},
  year={2019}
}
\bibliographystyle{icml2024}

\newpage
\appendix
\onecolumn
\appendix

\section{Variance Assumptions}
\label{app:assumptions}

Our variance assumptions are a significant relaxation and generalization of the assumptions made by the $n$-step variance model of \citet[][Sec.~3]{konidaris2011td_gamma}.
We show this by starting from their original assumptions and then describing the steps taken to obtain our new assumptions.

\citeauthor{konidaris2011td_gamma} begin by expanding the variance of an $n$-step return in the following way:
\begin{align}
    \nonumber
    \Var[\nstep{n}_t \mid S_t]
    &= \Var[\nstep{n-1}_t + \gamma^{n-1} \delta_{t+n-1} \mid S_t] \\
    \label{eq:nstep_variance_expanded}
    &= \Var[\nstep{n-1}_t \mid S_t] + \gamma^{2(n-1)} \Var[\delta_{t+n-1} \mid S_t] + 2 \; \Cov[\nstep{n-1}_t, \gamma^{n-1} \delta_{t+n-1} \mid S_t]
    \,.
\end{align}
The covariance term is equivalent to
\begin{align}
    \nonumber
    \Cov[\nstep{n-1}_t, \gamma^{n-1} \delta_{t+n-1} \mid S_t]
    &= \Cov[\nstep{n-1}_t,\nstep{n}_t - \nstep{n-1}_t \mid S_t] \\
    \nonumber
    &= \Cov[\nstep{n-1}_t,\nstep{n}_t \mid S_t] - \Cov[\nstep{n-1}_t,\nstep{n-1}_t \mid S_t] \\
    \label{eq:konidaris_diff}
    &= \Cov[\nstep{n-1}_t,\nstep{n}_t \mid S_t] - \Var[\nstep{n-1}_t \mid S_t]
    \,.
\end{align}
With the rationale that $\smash{\nstep{n-1}_t}$ and $\smash{\nstep{n}_t}$ are generated from the same trajectory and therefore highly correlated, the authors assume that
\begin{equation}
    \label{eq:konidaris_approx_cov}
    \Cov[\nstep{n-1}_t,\nstep{n}_t \mid S_t] \approx \Var[\nstep{n-1}_t \mid S_t]
    \,,
\end{equation}
which makes \Cref{eq:konidaris_diff} approximately zero.
Consequently, \Cref{eq:nstep_variance_expanded} becomes
\begin{equation}
    \label{eq:konidaris_nstep_var_recursive}
    \Var[\nstep{n}_t \mid S_t] \approx \Var[\nstep{n-1}_t \mid S_t] + \gamma^{2(n-1)} \Var[\delta_{t+n-1} \mid S_t]
    \,.
\end{equation}
The authors additionally assume that the variance of each TD error is the same:

\tdvar*

Hence, \Cref{eq:konidaris_nstep_var_recursive} becomes
$\Var[\nstep{n}_t \mid S_t] \approx \Var[\nstep{n-1}_t \mid S_t] + \gamma^{2(n-1)} \kappa$\,.
Since $\kappa = \Var[\delta_t \mid S_t] = \Var[\nstep{1}_t \mid S_t]$, unrolling the recursion gives the final $n$-step variance model:
\begin{equation}
    \label{eq:konidaris_nstep_model}
    \Var[\nstep{n}_t \mid S_t] \approx \sum_{i=0}^{n-1} \gamma^{2i} \kappa
    \,.
\end{equation}
This completes the derivation from \citet[][Sec.~3]{konidaris2011td_gamma}, where the two major assumptions are \Cref{eq:konidaris_approx_cov,assump:td_var}.
Notice, however, that \Cref{eq:konidaris_nstep_model} could be obtained more simply by assuming that TD errors are uncorrelated---in fact, this is equivalent to assuming \Cref{eq:konidaris_approx_cov} holds.
One way to see this is by decomposing the $n$-step return into a sum of TD errors and applying standard variance rules, assuming no correlations between the random variables:
\begin{equation*}
    \Var[\nstep{n}_t \mid S_t]
    = \Var[\nstep{n}_t - v(S_t) \mid S_t]
    = \Var\!\left[\sum_{i=0}^{n-1} \gamma^i \delta_{t+i} \Biggm| S_t\right]
    = \sum_{i=0}^{n-1} \gamma^{2i} \Var[\delta_{t+i} \mid S_t]
    = \sum_{i=0}^{n-1} \gamma^{2i} \kappa
    \,,
\end{equation*}
which is identical to \Cref{eq:konidaris_nstep_model}.
In our work, we directly make this uniform-correlation assumption, while also generalizing it with an arbitrary correlation coefficient $\rho \in [0,1]$.
Note that we must have ${\rho \geq 0}$ because an infinite number of TD errors cannot be negatively correlated simultaneously.

\tdcovar*

To summarize, our variance model makes \Cref{assump:td_var,assump:td_cov}, which are equivalent to the assumptions made by \citet{konidaris2011td_gamma} when $\rho = 0$.
Since
$\smash{\Var[\delta_{t+i} \mid S_t] = \Cov[\delta_{t+i},\delta_{t+i} \mid S_t]}$,
we are able to combine \Cref{assump:td_var,assump:td_cov} into a single, concise expression---see \Cref{eq:assumps_combined} in \Cref{sec:analysis}.

Although the assumptions here may not always hold in practice, they would be difficult to improve without invoking information about the MDP's transition function or reward function.
We show in \Cref{app:how_real_assumptions} that our derived variances based on these assumptions are nevertheless consistent with empirical data in real environments.

\clearpage
\section{How Realistic Is the Proposed Variance Model?}
\label{app:how_real_assumptions}

Our assumptions state that all TD errors have uniform variance and are equally correlated to one another.
Since these assumptions may be violated in practice, it is informative to test how well our $n$-step variance model (\Cref{prop:nstep_variance}) compares to the true $n$-step variances in several examples.

We consider three environments:
the 19-state random walk (see \Cref{sec:analysis}), a $4 \times 3$ gridworld \citep[][Fig.~17.1]{russell2010artificial}, and a $10 \times 8$ gridworld \citep[][Fig.~7.4]{sutton2018reinforcement}.
We choose these environments because they have known dynamics and are small enough to exactly calculate $v_\pi$ with dynamic programming.
The two gridworlds are stochastic because each of the four moves (up, down, left, right) succeeds with only probability 80\%;
otherwise, the move is rotated by 90 degrees in either direction with probability 10\% each.
We let the agent execute a uniform-random behavior policy for all environments.

To make the results agnostic to any particular learning algorithm, we use $v_\pi$ to compute the TD errors.
We apply a discount factor of $\gamma = 0.99$ to the $10 \times 8$ gridworld (otherwise $v_\pi(s)$ would be constant for all $s$ due to the single nonzero reward) and leave the other two environments undiscounted.
We then measure the variance of the $n$-step returns originating from the initial state of each environment, for $n \in \{1,\dots,21\}$.
\Cref{fig:nstep_variance} shows these variances plotted as a function of $n$ and averaged over 10k episodes.
The best-case (optimistic, $\rho=0$) and worst-case (pessimistic, $\rho=1$) variances predicted by the $n$-step model, assuming that $\kappa = \Var[\delta_0 \mid S_0]$, are also indicated by dashed lines.

For all of the environments, the measured $n$-step variances always remain within the lower and upper bounds predicted by \Cref{prop:nstep_variance}.
These results show that our $n$-step variance model can still make useful variance predictions even when our assumptions do not hold.
The variances also grow roughly linearly as a function of $n$, corresponding more closely to the linear behavior of the optimistic, uncorrelated case than the quadratic behavior of the pessimistic, maximally correlated case.
This further suggests that the majority of TD-error pairs are weakly correlated in practice, which makes sense because temporally distant pairs are unlikely to be strongly related.

\begin{figure}[h]
    \centering
    \includegraphics[width=0.32\textwidth]{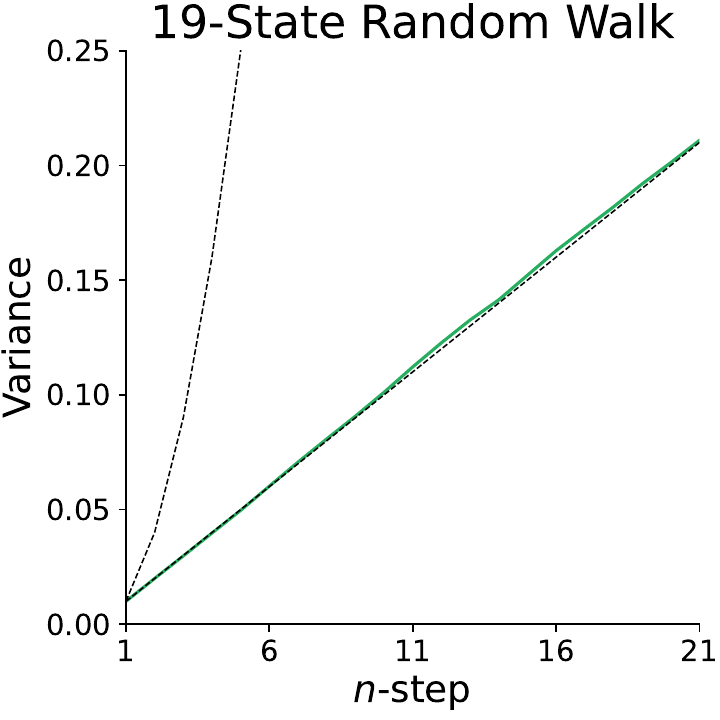}
    \hfill
    \includegraphics[width=0.32\textwidth]{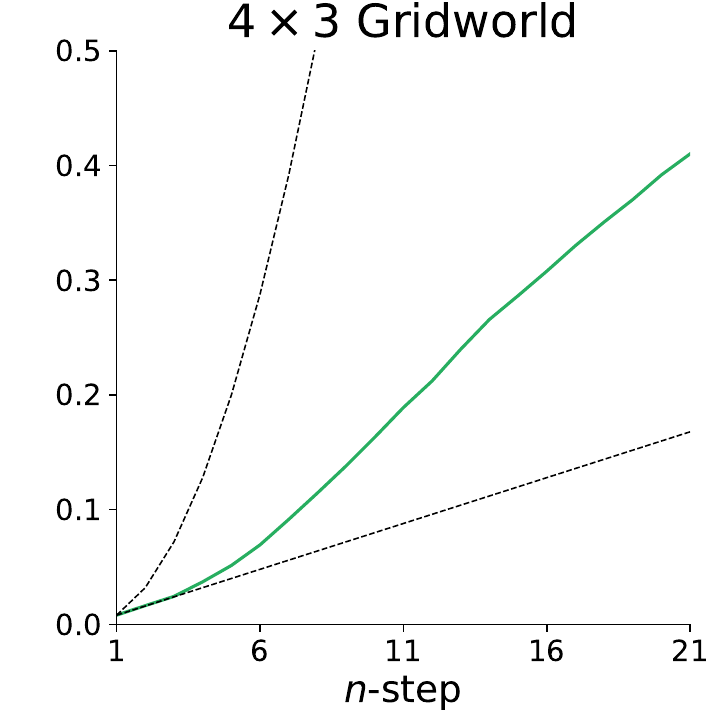}
    \hfill
    \includegraphics[width=0.32\textwidth]{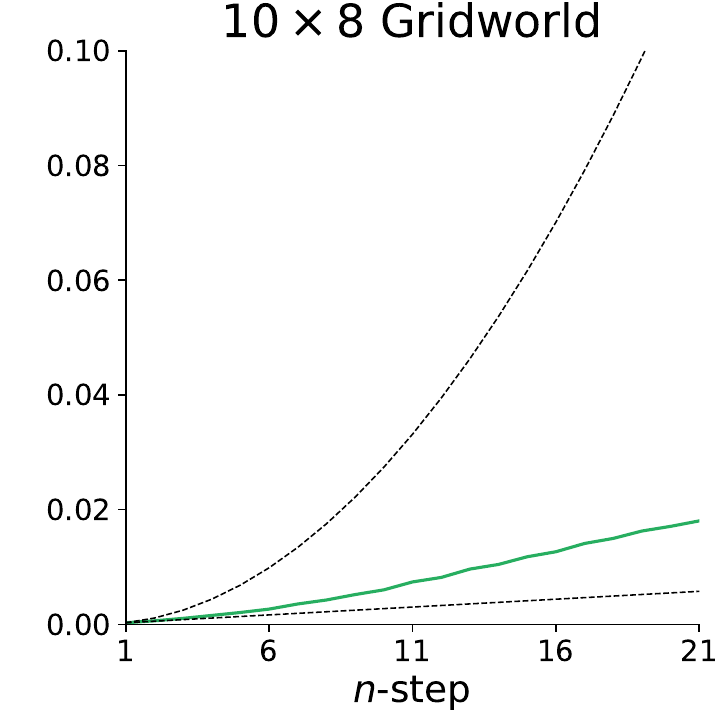}
    \captionvspace
    \caption{
        Variances of the $n$-step returns originating from the initial state in three environments.
        The solid green line indicates the true variance while the dashed black lines indicate the lower and upper bounds predicted by our $n$-step variance model (\Cref{prop:nstep_variance}).
    }
    \captionvspace
    \label{fig:nstep_variance}
\end{figure}

\clearpage
\section{Proofs}
\label{app:proofs}

In this section, we include all proofs that were omitted from the main paper due to space constraints.

\subsection{Proof of \Cref{prop:nstep_variance}}
\label{app:prop_nstep_variance}

\propnstepvariance*
\begin{proof}
The $n$-step error can be expressed as a finite summation of TD errors:
\begin{equation}
    \label{eq:nstep_error}
    \nstep{n}_t - v(S_t)
    = \sum_{i=0}^{n-1} \gamma^i \delta_{t+i}
    \,.
\end{equation}
Using this, we calculate the covariance between two $n$-step returns with lengths $n_1$, $n_2$:
\begin{align}
    \nonumber
    \Cov[\nstep{n_1}_t,\nstep{n_2}_t \mid S_t]
    &= \Cov\Bigg[\sum_{i=0}^{n_1-1} \gamma^i \delta_{t+i}, \sum_{j=0}^{n_2-1} \gamma^j \delta_{t+j} \Biggm| S_t\Bigg] \\
    \nonumber
    &= \sum_{i=0}^{n_1-1} \sum_{j=0}^{n_2-1} \Cov[\gamma^i \delta_{t+i}, \gamma^j \delta_{t+j} \mid S_t] \\
    \nonumber
    &= \sum_{i=0}^{n_1-1} \sum_{j=0}^{n_2-1} \gamma^{i+j} \Cov[\delta_{t+i}, \delta_{t+j} \mid S_t] \\
    \nonumber
    &= \sum_{i=0}^{n_1-1} \sum_{j=0}^{n_2-1} \gamma^{i+j} ((1-\rho) \vone_{i=j} + \rho) \kappa \\
    \nonumber
    &= (1-\rho) \smashoperator{\sum_{i=0}^{\min(n_1,n_2)-1}} \gamma^{2i} \kappa + \rho \sum_{i=0}^{n_1-1} \sum_{j=0}^{n_2-1} \gamma^{i+j} \kappa \\
    \nonumber
    &= (1-\rho) \smashoperator{\sum_{i=0}^{\min(n_1,n_2)-1}} \gamma^{2i} \kappa + \rho \sum_{i=0}^{n_1-1} \gamma^i \sum_{j=0}^{n_2-1} \gamma^j \kappa \\
    \label{eq:nstep_covariance}
    &= (1-\rho) \gammafunc{2}{\min(n_1,n_2)} \kappa + \rho \gammafunc{1}{n_1} \gammafunc{1}{n_2} \kappa
    \,.
\end{align}
Because $\Var[\nstep{n}_t \mid S_t] = \Cov[\nstep{n}_t,\nstep{n}_t \mid S_t]$,
then by letting $n_1 = n_2 = n$, we obtain the $n$-step variance formula and the proof is complete.
\end{proof}

\subsection{Proof of \Cref{lemma:compound_error}}
\label{app:lemma_compound_error}

\lemmacompounderror*
\begin{proof}
We decompose the compound error into a weighted average of $n$-step errors, and then decompose those $n$-step errors into weighted sums of TD errors using \Cref{eq:nstep_error}:
\begin{align*}
    G^\vc_t - v(S_t)
    &= \left(\sum_{n=1}^\infty c_n \nstep{n}_t \right) - v(S_t) \\
    &= \sum_{n=1}^\infty c_n \left(\nstep{n}_t - v(S_t)\right) \\
    &= \sum_{n=1}^\infty c_n \sum_{i=0}^{n-1} \gamma^i \delta_{t+i} \\
    &= c_1 \delta_t + c_2 (\delta_t + \gamma \delta_{t+1}) + c_3 (\delta_t + \gamma \delta_{t+1} + \gamma^2 \delta_{t+2}) + \dots \\
    &= (c_1 + c_2 + \dots) \delta_t + \gamma (c_2 + c_3 + \dots) \delta_{t+1} + \gamma^2 (c_3 + c_4 + \dots) \delta_{t+2} + \dots \\
    &= h_0 \delta_t + \gamma h_1 \delta_{t+1} + \gamma^2 h_2 \delta_{t+2} + \dots \\
    &= \sum_{i=0}^\infty \gamma^i h_i \delta_{t+i}
    \,,
\end{align*}
which completes the lemma.
\end{proof}

\subsection{Proof of \Cref{prop:compound_variance}}
\label{app:prop_compound_variance}

\propcompoundvariance*
\begin{proof}
From \Cref{lemma:compound_error}, the variance of the compound return is
\begin{align*}
    \Var\left[G^\vc_t \mid S_t\right]
    &= \sum_{i=0}^\infty \sum_{j=0}^\infty \Cov[\gamma^i h_i \delta_{t+i}, \gamma^j h_j\delta_{t+j} \mid S_t] \\
    &= \sum_{i=0}^\infty \sum_{j=0}^\infty \gamma^{i+j} h_i h_j \Cov[\delta_{t+i}, \delta_{t+j} \mid S_t] \\
    &= \sum_{i=0}^\infty \sum_{j=0}^\infty \gamma^{i+j} h_i h_j ((1-\rho) \vone_{i=j} + \rho) \kappa \\
    &= (1-\rho) \sum_{i=0}^\infty \gamma^{2i} h_i^2 \kappa + \rho \sum_{i=0}^\infty \sum_{j=0}^\infty \gamma^{i+j} h_i h_j \kappa
    \,,
\end{align*}
which establishes \Cref{eq:compound_variance}.
\end{proof}

\subsection{\texorpdfstring{$\lambda$}{λ}-return Variance}
\label{app:lambda_variance}

We calculate the variance of the $\lambda$-return under our assumptions.
In the main text, we showed that the $\lambda$-return assigns a cumulative weight of $h_i = \lambda^i$ to the TD error at time $t+i$, which is also known from the TD($\lambda$) algorithm.
We can therefore apply \Cref{prop:compound_variance} to obtain the following variance expression:
\begin{align}
    \nonumber
    \Var[G^\lambda_t \mid S_t]
    &= (1-\rho) \sum_{i=0}^\infty (\gamma\lambda)^{2i} \kappa + \rho \sum_{i=0}^\infty \sum_{j=0}^\infty (\gamma\lambda)^{i+j} \kappa \\
    \nonumber
    &= (1-\rho) \sum_{i=0}^\infty (\gamma\lambda)^{2i} \kappa + \rho \sum_{i=0}^\infty (\gamma\lambda)^i \sum_{j=0}^\infty (\gamma\lambda)^j \kappa \\
    \label{eq:lambda_variance}
    &= \frac{(1-\rho) \kappa}{1 - (\gamma\lambda)^2} + \frac{\rho\kappa}{(1-\gamma\lambda)^2}
    \,.
\end{align}

\subsection{Proof of \Cref{prop:effective_nstep}}
\label{app:prop_effective_nstep}

\propeffectivenstep*
\begin{proof}
When $\gamma < 1$, we can take the logarithm of both sides of $\gamma^n = \sum_{k=1}^\infty c_k \gamma^k$ to get
\begin{equation*}
    n
    = \log_\gamma \! \left( \sum_{k=1}^\infty c_k \gamma^k \right)
    = \frac{\log \left( \sum_{k=1}^\infty c_k \gamma^k \right)}{\log \gamma}
    \,.
\end{equation*}
For the undiscounted case, we would like to evaluate this expression at $\gamma=1$;
however, since $\sum_{k=1}^\infty c_k = 1$ from the definition of a compound return, we arrive at an indeterminate form, $\smash{\frac{0}{0}}$.
Instead, we can apply L'H{\^o}pital's rule to evaluate the limit as $\gamma \to 1$:
\begin{align*}
    \lim_{\gamma \to 1} \frac{\log \left( \sum_{k=1}^\infty c_k \gamma^k \right)}{\log \gamma}
    &= \lim_{\gamma \to 1} \frac{\frac{d}{d\gamma} \log \left( \sum_{k=1}^\infty c_k \gamma^k \right)}{\frac{d}{d\gamma} \log \gamma} \\
    &= \lim_{\gamma \to 1} \frac{(\sum_{k=1}^\infty c_k \gamma^{k-1} k) \mathbin{/} (\sum_{k=1}^\infty c_k \gamma^k)}{1 \mathbin{/} \gamma} \\
    &= \lim_{\gamma \to 1} \frac{\sum_{k=1}^\infty c_k \gamma^k k}{\sum_{k=1}^\infty c_k \gamma^k} \\
    &= \frac{\sum_{k=1}^\infty c_k k}{\sum_{k=1}^\infty c_k} \\
    &= \sum_{k=1}^\infty c_k k
    \,,
\end{align*}
where the last step follows again from the fact that $\sum_{k=1}^\infty c_k = 1$.
This establishes the case where $\gamma = 1$ and completes the proof.
\end{proof}

\subsection{Proof of Theorem~\ref{theorem:variance_reduction}}
\label{app:theorem_variance_reduction}

\theoremvariancereduction*
\begin{proof}
\Cref{eq:nstep_covariance} gives us an expression for the covariance between two $n$-step returns.
We use this to derive an alternative formula for the variance of a compound return:
\begin{align*}
    \Var[G^\vc_t \mid S_t]
    &= \sum_{i=1}^\infty \sum_{j=1}^\infty \Cov[c_i \nstep{i}_t, c_j \nstep{j}_t \mid S_t] \\
    &= \sum_{i=1}^\infty \sum_{j=1}^\infty c_i c_j \Cov[\nstep{i}_t, \nstep{j}_t \mid S_t] \\
    &= \sum_{i=1}^\infty \sum_{j=1}^\infty c_i c_j \big((1-\rho) \gammafunc{2}{\min(i,j)}) + \rho \gammafunc{1}{i} \gammafunc{1}{j} \! \big) \kappa \\
    &= (1-\rho) \sum_{i=1}^\infty \sum_{j=1}^\infty c_i c_j \gammafunc{2}{\min(i,j)} \kappa + \rho \sum_{i=1}^\infty \sum_{j=1}^\infty c_i c_j \gammafunc{1}{i} \gammafunc{1}{j} \kappa
    \,.
\end{align*}
We analyze both sums separately, starting with the first term.
If we assume $\gamma < 1$ for now, then
$\gamma^n = \sum_{i=1}^\infty c_i \gamma^i$
by \Cref{prop:effective_nstep}.
Further note that
$\min(i,j) \leq (i + j) \mathbin{/} 2$,
with the inequality strict if $i \neq j$.
Because $\Gamma_{\!2}$ is monotonically increasing, it follows that
\begin{align*}
    \sum_{i=1}^\infty \sum_{j=1}^\infty c_i c_j \gammafunc{2}{\minfunc{i,j}}
    &< \sum_{i=1}^\infty \sum_{j=1}^\infty c_i c_j \gammafunc{2}{\frac{i+j}{2}} \\
    &= \sum_{i=1}^\infty \sum_{j=1}^\infty c_i c_j \left(\frac{1-\gamma^{i+j}}{1-\gamma^2}\right) \\
    &= \frac{1 - \sum_{i=1}^\infty \sum_{j=1}^\infty c_i c_j \gamma^{i+j}}{1-\gamma^2} \\
    &= \frac{1 - \sum_{i=1}^\infty c_i \gamma^i \sum_{j=1}^\infty c_j \gamma^j}{1-\gamma^2} \\
    &= \frac{1 - \gamma^{2n}}{1-\gamma^2} \\
    &= \gammafunc{2}{n}
    \,.
\end{align*}
The inequality is strict because at least two weights in $\vc$ are nonzero by definition of a compound return, guaranteeing at least one element in the sum has $i \neq j$.
If instead $\gamma = 1$, then ${n = \sum_{i=1}^\infty c_i i}$ by \Cref{prop:effective_nstep} and also $\gammafunc{2}{\min(i,j)} = \min(i,j)$.
Therefore, by Jensen's inequality, we have
\begin{align*}
    \sum_{i=1}^\infty \sum_{j=1}^\infty c_i c_j \gammafunc{2}{\minfunc{i,j}}
    &= \sum_{i=1}^\infty \sum_{j=1}^\infty c_i c_j \min(i,j) \\
    &< \minfunc{\sum_{i=1}^\infty c_i i,~\sum_{j=1}^\infty c_j j} \\
    &= \minfunc{n,~n} \\
    &= \gammafunc{2}{n}
    \,.
\end{align*}
Again, the inequality is strict by definition of a compound return, so we conclude that
\begin{equation*}
    \sum_{i=1}^\infty \sum_{j=1}^\infty c_i c_j \gammafunc{2}{\minfunc{i,j}}
    < \gammafunc{2}{n}
    ,\quad
    \text{for } 0 < \gamma \leq 1
    \,.
\end{equation*}

We now address the second term.
We show that $\Gamma_{\!1}$ is invariant under a weighted average under our assumption that \Cref{eq:effective_nstep} holds.
If $\gamma < 1$, then
\begin{equation}
    \label{eq:Gamma_invariant}
    \sum_{i=1}^\infty c_i \gammafunc{1}{i}
    = \sum_{i=1}^\infty c_i \left(\frac{1-\gamma^i}{1-\gamma}\right)
    = \frac{1 - \sum_{i=1}^\infty c_i \gamma^i}{1-\gamma}
    = \frac{1 - \gamma^n}{1-\gamma}
    = \gammafunc{1}{n}
    \,.
\end{equation}
If $\gamma = 1$, then
\begin{equation*}
    \sum_{i=1}^\infty c_i \gammafunc{1}{i}
    = \sum_{i=1}^\infty c_i i
    = n
    = \gammafunc{1}{n}
    \,.
\end{equation*}
Thus, regardless of $\gamma$, the second term becomes
\begin{equation*}
    \sum_{i=1}^\infty \sum_{j=1}^\infty c_i c_j \gammafunc{1}{i} \gammafunc{1}{j}
    = \sum_{i=1}^\infty c_i \gammafunc{1}{i} \sum_{j=1}^\infty c_j \gammafunc{1}{j}
    = \gammafunc{1}{n}^2
    \,.
\end{equation*}

Putting everything together, we have so far shown that
\begin{equation*}
    \Var[G^\vc_t \mid S_t]
    \leq (1-\rho) \gammafunc{2}{n} \kappa + \rho \gammafunc{1}{n}^2 \kappa
    \,,
\end{equation*}
where the right-hand side is the $n$-step return variance given by \Cref{prop:nstep_variance}.
As we showed above, this inequality is strict whenever the first term is active, i.e., $\rho < 1$, which completes the proof.
\end{proof}

\subsection{Proof of \Cref{coro:lambda_variance_reduction}}
\label{app:lambda_variance_reduction}

\corolambdavariancereduction*

\begin{proof}
First, the left-hand side of the inequality follows immediately from \Cref{theorem:variance_reduction}.

For the right-hand side, we next make the observation that, for any compound return,
\begin{align*}
    \sum_{i=0}^\infty \gamma^i h_i
    = \sum_{i=0}^\infty \sum_{n=i+1}^\infty \gamma^i c_n
    = \sum_{n=1}^\infty \sum_{i=0}^{n-1} \gamma^i c_n
    = \sum_{n=1}^\infty c_n \frac{1-\gamma^n}{1-\gamma}
    = \frac{1 - \sum_{n=1}^\infty c_n \gamma^n}{1-\gamma}
    = \frac{1-\beta}{1-\gamma}
    \,.
\end{align*}
This allows us to rewrite the variance model in \Cref{prop:compound_variance} as
\begin{equation*}
    \Var[G^\vc_t \mid S_t]
    = (1-\rho) \sum_{i=0}^\infty \gamma^{2i} h_i^2 \kappa + \rho \sum_{i=0}^\infty \sum_{j=0}^\infty \gamma^{i+j} h_i h_j \kappa
    = (1-\rho) \sum_{i=0}^\infty \gamma^{2i} h_i^2 \kappa + \rho \left(\frac{1-\beta}{1-\gamma}\right)^2 \kappa
    \,.
\end{equation*}
The second term is identical for all returns with a fixed contraction modulus $\beta$, as is assumed in \Cref{theorem:variance_reduction}.
Thus, by subtracting the variance of both returns, we get the following expression for variance reduction (positive means more variance reduction):
\begin{equation*}
    \Var[G^n_t \mid S_t] - \Var[G^\vc_t \mid S_t]
    = (1-\rho) \left(\Gamma_2(n) - \sum_{i=0}^\infty \gamma^{2i} h_i^2\right) \kappa
    \,.
\end{equation*}
This reveals that the variance reduction of a compound return is proportional to $1-\rho$, affirming the result of \Cref{theorem:variance_reduction} that variance reduction occurs whenever $\rho < 1$.
Additionally, this variance reduction is proportional to the critical quantity
$\Gamma_2(n) - \sum_{i=0}^\infty \gamma^{2i} h_i^2$,
a function of the weights of the compound return.
Substituting the weights for a $\lambda$-return, this critical quantity becomes
\begin{equation*}
    \Gamma_2(n) - \frac{1}{1-(\gamma \lambda)^2}
    \,.
\end{equation*}
This gap is monotonically increasing with $\gamma$ and attains its maximum value as $\gamma \to 1$, i.e., the maximum-variance case of undiscounted rewards.\footnote{
    To see this, make the substitution $\lambda = (1 - \gamma^{n-1}) \mathbin{/} (1 - \gamma^n)$ from \Cref{prop:lambda_effective_nstep} because the effective $n$-step is the same, and analyze the behavior as $\gamma \to 1$.
}
Taking the limit as $\gamma \to 1$ makes the possible variance reduction proportional to
\begin{equation*}
    n - \frac{1}{1-\lambda^2}
    = \frac{1}{1-\lambda} - \frac{1}{1-\lambda^2}
    = \frac{\lambda}{1-\lambda^2}
    \,,
\end{equation*}
where the identity $n = 1 \mathbin{/} (1-\lambda)$ comes from \Cref{prop:lambda_effective_nstep},
the effective $n$-step of the $\lambda$-return.
This gives us the final upper bound:
\begin{equation*}
    \Var[G^n_t \mid S_t] - \Var[G^\lambda_t \mid S_t]
    \leq (1-\rho) \frac{\lambda}{1-\lambda^2} \kappa
    \,,
\end{equation*}
which completes the inequality.
\end{proof}

\subsection{Finite-Time Analysis}
\label{app:ft_analysis}

In this section, we prove \Cref{theorem:ft_analysis} to establish a finite-time bound on the performance of multistep TD learning.
We derive the bound in terms of the return's contraction modulus and variance, allowing us to invoke \Cref{theorem:variance_reduction} and show an improved convergence rate.

At each iteration, TD learning updates the current parameters $\theta_t \in \R^d$ according to \Cref{eq:td_lfa}.
A value-function estimate for any state $s$ is obtained by evaluating the dot product of the parameters and the state's corresponding feature vector:
$\smash{v_\theta(s) \defeq \theta\tran \phi(s)}$.
Following \citet{bhandari2018finite},
we assume that
$\smash{\sqnorm{\phi(s)}} \leq 1$,
$\forall~s \in \mathcal{S}$.
This can be guaranteed in practice by normalizing the features and is therefore not a strong assumption.

In the prediction setting, the agent's behavior policy is fixed such that the MDP can be cast as a Markov reward process (MRP), where $r(s,s')$ denotes the expected reward earned when transitioning from state $s$ to state $s'$.
We adopt the i.i.d.\ state model from \citet[][Sec.~3]{bhandari2018finite} and generalize it for multistep TD updates.
\begin{assumption}[i.i.d.\ state model]
    \label{assump:iid}
    Assume the MRP under the policy $\pi$ is ergodic.
    Let $\smash{\vd \in \R^{|\mathcal{S}|}}$ represent the MRP's unique stationary distribution.
    Each iteration of \Cref{eq:td_lfa} is calculated by first sampling a random initial state $S_{t,0} \sim \vd$ and then sampling a trajectory of subsequent states
    $S_{t,i+1} \sim \Pr(\cdot \mid S_{t,i})$,
    $\forall~i \geq 0$.
\end{assumption}
That is, a state $S_{t,0}$ sampled from the steady-state MRP forms the root node for the following trajectory
$(S_{t,1}, S_{t,2}, \dots)$
that is generated according to the MRP transition function.
Notably, this setting parallels the experience-replay setting utilized by many deep RL agents.

To facilitate our analysis, we decompose the compound TD updates into weighted averages of $n$-step TD updates, where each $n$-step update has the form
\begin{equation*}
    g^n_t(\theta) \defeq \left(\nstep{n}_t - \mathop{\phi(S_t)\tran} \theta\right) \mathop{\phi(S_t)}
    \,.
\end{equation*}
This allows us to conveniently express a compound TD update as
\begin{equation*}
    g^\vc_t(\theta) \defeq \sum_{n=1}^\infty c_n g^n_t(\theta)
    \,.
\end{equation*}
Our proofs also make use of the \emph{expected} $n$-step TD update:
\begin{equation*}
    \bar{g}^n(\theta)
    \defeq \sum_{s_0 \in \mathcal{S}} \sum_{\tau \in \mathcal{S}^{n-1}} d(s_0) \Pr(\tau \mid s_0) \left(\nstep{n}(s_0,\tau,\theta) - \phi(s_0)\tran \theta \right) \phi(s_0)
\end{equation*}
where
$(s_1,s_2,\dots) = \tau$
and
$\nstep{n}(s_0,\tau,\theta) \defeq r(s_0,s_1) + \dots + \gamma^{n-1} r(s_{n-1},s_n) + \gamma^n \phi(s_n)\tran \theta$
is the $n$-step return generated from $(s_0,\tau)$.

For brevity, let
$\smash{R_i \defeq r(S_{t,i}\,, S_{t,i+1})}$
and
$\smash{\phi_i \defeq \phi(S_{t,i})}$
be random variables sampled according to \Cref{assump:iid}.
We more conveniently write the expected $n$-step TD update as
\begin{equation}
    \label{eq:expected_compound_update}
    \bar{g}^n(\theta)
    = \expect{\phi_0 (R_0 + \gamma R_1 + \dots + \gamma^{n-1} R_{n-1})} + \expect{\phi_0 (\gamma^n \phi_n - \phi_0)\tran} \theta
    \,.
\end{equation}
The expected compound TD update easily follows as the weighted average
\begin{equation*}
    \bar{g}^\vc(\theta)
    \defeq \sum_{n=1}^\infty c_n \bar{g}^n(\theta)
    \,.
\end{equation*}
Finally, let $\theta^*$ be the fixed point of the compound TD update:
i.e., $\bar{g}^\vc(\theta^*) = 0$.
This fixed point always exists and is unique because the projected Bellman operator is a contraction mapping \citep{tsitsiklis1997analysis}, and therefore so is any weighted average of the $n$-iterated operators.

Before we prove \Cref{theorem:ft_analysis}, we must introduce two lemmas.
The first establishes a lower bound on the angle between the expected TD update and the true direction toward the fixed point.
\begin{lemma}
    \label{lemma:dot_lower_bound}
    Define the diagonal matrix
    $\smash{\mD \defeq \diag(\vd)}$.
    For any $\theta \in \mathbb{R}^d$,
    \begin{equation}
        (\theta^* - \theta)\tran \bar{g}^\vc(\theta)
        \geq (1 - \beta) \norm{v_{\theta^*} - v_\theta}_\mD^2
        \,.
    \end{equation}
\end{lemma}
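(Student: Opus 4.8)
The plan is to prove the bound by relating the expected compound TD update to the one-step Bellman-style error that appears in the standard TD analysis, and then exploiting the contraction property of the return operator. First I would write $\bar{g}^\vc(\theta)$ in a form that separates a ``Bellman target'' from $v_\theta$. Using \Cref{eq:expected_compound_update} and averaging with weights $c_n$, one obtains
\begin{equation*}
    \bar{g}^\vc(\theta) = \E\!\left[\phi_0\left(\bar{T}^\vc v_\theta(S_{t,0}) - v_\theta(S_{t,0})\right)\right]\,,
\end{equation*}
where $\bar{T}^\vc v_\theta(s) \defeq \E[G^\vc_t \mid S_{t,0}=s]$ with the value-function evaluated at $v_\theta$. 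Because $\theta^*$ is the TD fixed point, $\bar{g}^\vc(\theta^*)=0$, so I can subtract it for free and write $(\theta^*-\theta)\tran\bar{g}^\vc(\theta) = (\theta^*-\theta)\tran(\bar{g}^\vc(\theta)-\bar{g}^\vc(\theta^*))$; expanding both expectations over the stationary distribution $\vd$ turns this into an inner product of the form $\langle v_{\theta^*}-v_\theta,\, (\bar{T}^\vc v_\theta - v_\theta) - (\bar{T}^\vc v_{\theta^*} - v_{\theta^*})\rangle_\mD$.

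The next step is to split that inner product into a ``self'' term and a ``cross'' term: $(\bar{T}^\vc v_\theta - v_\theta) - (\bar{T}^\vc v_{\theta^*} - v_{\theta^*}) = (v_{\theta^*}-v_\theta) - (\bar{T}^\vc v_{\theta^*} - \bar{T}^\vc v_\theta)$, since the reward parts of $\bar T^\vc$ cancel in the difference and only the affine (bootstrapping) part remains. Thus
\begin{equation*}
    (\theta^*-\theta)\tran\bar{g}^\vc(\theta) = \norm{v_{\theta^*}-v_\theta}_\mD^2 - \langle v_{\theta^*}-v_\theta,\, \bar{T}^\vc v_{\theta^*} - \bar{T}^\vc v_\theta\rangle_\mD\,.
\end{equation*}
I would then bound the cross term by Cauchy–Schwarz in the $\mD$-weighted inner product: it is at most $\norm{v_{\theta^*}-v_\theta}_\mD \cdot \norm{\bar{T}^\vc v_{\theta^*} - \bar{T}^\vc v_\theta}_\mD$. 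The key fact is that the compound-return operator $\bar T^\vc$, restricted to its linear part, is a $\beta$-contraction in the $\mD$-weighted norm where $\beta=\sum_k c_k\gamma^k$ is exactly the contraction modulus from \Cref{eq:compound_erp}; this follows because each $n$-step operator $\bar T^{(n)}$ is a $\gamma^n$-contraction in $\norm{\cdot}_\mD$ (the standard result underlying TD($\lambda$) convergence, cf.\ \citet{tsitsiklis1997analysis}), and a convex combination of contractions with moduli $\gamma^n$ contracts with modulus $\sum_k c_k\gamma^k$ by the triangle inequality. Hence $\norm{\bar{T}^\vc v_{\theta^*} - \bar{T}^\vc v_\theta}_\mD \le \beta\,\norm{v_{\theta^*}-v_\theta}_\mD$, and substituting gives
\begin{equation*}
    (\theta^*-\theta)\tran\bar{g}^\vc(\theta) \ge \norm{v_{\theta^*}-v_\theta}_\mD^2 - \beta\,\norm{v_{\theta^*}-v_\theta}_\mD^2 = (1-\beta)\,\norm{v_{\theta^*}-v_\theta}_\mD^2\,,
\end{equation*}
which is the claimed bound.

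The main obstacle I anticipate is making the contraction step fully rigorous in the $\mD$-weighted norm rather than the sup norm: the error-reduction property in \Cref{eq:compound_erp} is stated with $\norm{\cdot}_\infty$, but the finite-time analysis needs it in $\norm{\cdot}_\mD$, so I must either cite or re-derive the $\mD$-norm contraction of the $n$-step policy-evaluation operator under the stationary distribution (this is the classical lemma that $\norm{P v - P w}_\mD \le \norm{v-w}_\mD$ for a transition matrix $P$ with stationary distribution $\vd$, iterated $n$ times and scaled by $\gamma^n$). The rest — the algebraic rearrangement of $\bar g^\vc$, the cancellation of the reward terms, and Cauchy–Schwarz — is routine. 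A secondary point to handle carefully is interchanging the infinite sum over $n$ with the expectation when writing $\bar g^\vc = \sum_n c_n \bar g^n$ as an expectation over $\bar T^\vc$, but under the standing assumptions (bounded rewards, $\gamma<1$ or episodic termination, $\sum_n c_n = 1$) this is justified by dominated convergence.
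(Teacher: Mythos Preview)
Your proposal is correct and follows essentially the same route as the paper: subtract the fixed-point condition $\bar{g}^\vc(\theta^*)=0$, expand to obtain $\norm{v_{\theta^*}-v_\theta}_\mD^2$ minus a cross term, and bound the cross term by $\beta\,\norm{v_{\theta^*}-v_\theta}_\mD^2$ via Cauchy--Schwarz together with stationarity. The only cosmetic difference is that the paper works directly with the scalar random variables $\xi_i \defeq (\theta^*-\theta)\tran\phi_i$ and bounds each $\E[\xi_0\xi_n]\le\E[\xi_0^2]$ term by term (using $\E[\xi_n^2]=\E[\xi_0^2]$ from stationarity), whereas you phrase the same step at the operator level as $\norm{\bar{T}^\vc v_{\theta^*}-\bar{T}^\vc v_\theta}_\mD\le\beta\,\norm{v_{\theta^*}-v_\theta}_\mD$; the two formulations are equivalent and yield the identical bound.
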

\begin{proof}
    Let $\xi_i \defeq v_{\theta^*}(S_{t,i}) - v_\theta(S_{t,i}) = (\theta^* - \theta)\tran \phi_i$ for $i \geq 0$.
    By stationarity, each $\xi_i$ is a correlated random variable with the same marginal distribution.
    Because $S_{t,0}$ is drawn from the stationary distribution, we have
    $\expect{\xi_i^2} = \norm{v_{\theta^*} - v_\theta}_\mD^2$.

    From \Cref{eq:expected_compound_update}, we show
    \begin{align*}
        \bar{g}^\vc(\theta)
        = \bar{g}^\vc(\theta) - \bar{g}^\vc(\theta^*)
        &= \sum_{n=1}^\infty c_n \expect{\phi_0 (\gamma^n \phi_n - \phi_0)\tran (\theta - \theta^*)} \\
        &= \sum_{n=1}^\infty c_n \expect{\phi_0 (\xi_0 - \gamma^n \xi_n)}
        \,.
    \end{align*}
    It follows that
    \begin{align*}
        (\theta^* - \theta)\tran \bar{g}^\vc(\theta)
        &= \sum_{n=1}^\infty c_n \expect{\xi_0 (\xi_0 - \gamma^n \xi_n)} \\
        &= \expect{\xi_0^2} - \sum_{n=1}^\infty c_n \gamma^n \expect{\xi_0 \xi_n} \\
        &\geq \left(1 - \sum_{n=1}^\infty c_n \gamma^n\right) \expect{\xi_0^2} \\
        &= (1 - \beta) \norm{v_{\theta^*} - v_\theta}_\mD^2
        \,.
    \end{align*}
    The inequality uses the Cauchy-Schwarz inequality along with the fact that every $\xi_i$ has the same marginal distribution:
    thus, $\expect{\xi_0 \xi_i} \leq \sqrt{\expect{\xi_0^2}} \sqrt{\expect{\xi_i^2}} = \expect{\xi_0^2}$.
\end{proof}
The next lemma establishes a bound on the second moment of the squared norm of the TD update in terms of the contraction modulus $\beta$ and the variance $\sigma^2$ of the compound return.
\begin{lemma}
    \label{lemma:grad_2nd_moment}
    Define $\Delta^* \defeq \norm{r}_\infty + (1+\gamma) \norm{\theta^*}_\infty$ and $C \defeq \Delta^* \mathbin{/} (1-\gamma)$.
    For any $\theta \in \mathbb{R}^d$,
    \begin{equation*}
        \E[\sqnorm{g_t(\theta)}]
        \leq 2 (1-\beta)^2 C^2 + 2 \sigma^2 + 4 (1 + \beta) \norm{v_{\theta^*} - v_\theta}_\mD^2
        \,.
    \end{equation*}
\end{lemma}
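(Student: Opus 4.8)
The plan is to bound $\E[\sqnorm{g^\vc_t(\theta)}]$ by first decomposing the compound TD update around the fixed point. Write $g^\vc_t(\theta) = \bigl(G^\vc_t - \phi(S_t)\tran\theta\bigr)\phi(S_t)$ and split the scalar return-error into a ``mean'' piece and a ``noise'' piece: add and subtract $\E[G^\vc_t \mid S_t]$ and also the fixed-point-related quantity $\phi(S_t)\tran\theta^*$. Concretely, I would write
\begin{equation*}
    G^\vc_t - \phi(S_t)\tran\theta
    = \bigl(G^\vc_t - \E[G^\vc_t \mid S_t]\bigr)
    + \bigl(\E[G^\vc_t \mid S_t] - \phi(S_t)\tran\theta^*\bigr)
    + \bigl(\phi(S_t)\tran\theta^* - \phi(S_t)\tran\theta\bigr).
\end{equation*}
Using $\sqnorm{\phi(S_t)} \le 1$ together with the elementary inequality $(a+b+c)^2 \le$ (a convenient constant times) the sum of squares — here I expect to group the last two terms and use $(x+y)^2 \le 2x^2 + 2y^2$ twice, or $(a+b+c)^2 \le 3(a^2+b^2+c^2)$ then recombine — gives a bound of the form
\begin{equation*}
    \E[\sqnorm{g^\vc_t(\theta)}]
    \le 2\,\Var[G^\vc_t \mid S_t]\ \text{-type term} + 2\,\E\bigl[(\E[G^\vc_t\mid S_t] - \phi(S_t)\tran\theta^*)^2\bigr] + \text{cross terms in } \xi_0.
\end{equation*}

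The first piece is exactly $\sigma^2$ by the definition in the theorem statement (the variance of the estimator), after noting that $\Var[G^\vc_t \mid S_t] = \Var[G^\vc_t - \phi(S_t)\tran\theta \mid S_t]$ since $\theta$ contributes no randomness given $S_t$. For the second piece I would invoke the error-reduction / contraction property: $\E[G^\vc_t \mid S_t]$ is the application of the compound Bellman operator, which is a $\beta$-contraction with fixed point corresponding to $\theta^*$, so $\bigl|\E[G^\vc_t\mid S_t] - \phi(S_t)\tran\theta^*\bigr| \le \beta \cdot |\text{something}| + \dots$; more carefully, I would bound this term by $(1-\beta)^2 C^2$ using the definition $C \defeq \Delta^*/(1-\gamma)$ and a crude triangle-inequality bound on the magnitude of returns and features (each reward is at most $\norm{r}_\infty$, the bootstrap term is at most $(1+\gamma)\norm{\theta^*}_\infty$, geometric series sums to $1/(1-\gamma)$). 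This is where $\Delta^*$ and $C$ enter. For the third piece, $\E[(\phi(S_t)\tran(\theta^* - \theta))^2] = \E[\xi_0^2] = \norm{v_{\theta^*} - v_\theta}_\mD^2$ by stationarity, exactly as in the proof of \Cref{lemma:dot_lower_bound}, and tracking the constants from the $(a+b+c)^2$ expansion (and any cross term handled by Cauchy–Schwarz against $\xi_0$, bounded again by $\norm{v_{\theta^*}-v_\theta}_\mD^2$) should produce the coefficient $4(1+\beta)$.

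The main obstacle is getting the constants to land exactly as $2(1-\beta)^2 C^2 + 2\sigma^2 + 4(1+\beta)\norm{v_{\theta^*}-v_\theta}_\mD^2$ rather than some looser constant multiple. This requires being careful about (i) which pair of the three terms to bundle under $2x^2 + 2y^2$ versus keeping separate, (ii) extracting the factor $(1-\beta)$ from the bias term — likely by writing $\E[G^\vc_t\mid S_t] - \phi(S_t)\tran\theta^* = (\E[G^\vc_t \mid S_t] - \phi(S_t)\tran\theta^*)$ and bounding its magnitude by $(1-\beta)$ times the worst-case value error, which is at most $2C$ or so (hence the $(1-\beta)^2 C^2$ with a leading $2$), and (iii) handling the cross term $\E[\xi_0 \cdot (\text{return noise})]$: either it vanishes (the noise $G^\vc_t - \E[G^\vc_t\mid S_t]$ has conditional mean zero given $S_t$, and $\xi_0$ is $S_t$-measurable, so by the tower rule the cross term is zero) — which is the clean route and avoids needing an extra Cauchy–Schwarz there. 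I would lean on that tower-rule observation to kill the return-noise/$\xi_0$ cross term, leaving only the bias/$\xi_0$ cross term to absorb into the $4(1+\beta)$ coefficient via Cauchy–Schwarz. Modulo bookkeeping, the argument is routine; the delicacy is purely in matching the stated constants.
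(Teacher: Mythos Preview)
Your decomposition has a genuine gap: you treat $G^\vc_t$ as if it were a fixed random variable, but it bootstraps with the \emph{current} parameter $\theta$, not with $\theta^*$. Concretely, the $n$-step piece inside $G^\vc_t$ is $\sum_{i<n}\gamma^i R_i + \gamma^n \phi_n\tran\theta$, so $G^\vc_t$ carries $\theta$-dependence through the term $\gamma^n \phi_n\tran\theta$. This breaks two of your three claims. First, the ``noise'' piece $G^\vc_t - \E[G^\vc_t\mid S_t]$ has variance that depends on $\theta$; it equals $\sigma^2$ only when $\theta=\theta^*$, so you cannot identify it with the $\sigma^2$ of the theorem statement. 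Second, your ``bias'' piece $\E[G^\vc_t\mid S_t]-\phi_0\tran\theta^*$ equals $T_\pi^{(\vc)}v_\theta(S_t)-v_{\theta^*}(S_t)$, which is not uniformly bounded by anything like $(1-\beta)C$ for arbitrary $\theta$ --- it grows with $\norm{\theta}$. Finally, your third piece $\xi_0=\phi_0\tran(\theta^*-\theta)$ captures only the \emph{subtracted} $v_\theta(S_t)$ and misses the bootstrap contribution $\gamma^n\phi_n\tran(\theta^*-\theta)=\gamma^n\xi_n$; but it is precisely the $\gamma^n\xi_n$ terms, summed against $c_n$, that produce the $\beta=\sum_n c_n\gamma^n$ in the coefficient $4(1+\beta)$. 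Your sketch never explains where $\beta$ would appear in that constant, and indeed it cannot appear from $\xi_0$ alone.

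The paper's proof avoids this by using a \emph{two}-term split at the level of the full update vector:
\[
  g_t(\theta) \;=\; g_t(\theta^*) \;+\; \bigl[g_t(\theta)-g_t(\theta^*)\bigr],
  \qquad \sqnorm{g_t(\theta)}\le 2\sqnorm{g_t(\theta^*)}+2\sqnorm{g_t(\theta)-g_t(\theta^*)}.
\]
For the first term, writing $\delta^\vc_t$ for the compound error at $\theta^*$, the bound $\smash{\E[\sqnorm{g_t(\theta^*)}]\le\E[(\delta^\vc_t)^2]=\E[\delta^\vc_t]^2+\sigma^2}$ holds with $|\E[\delta^\vc_t]|\le \sum_n c_n \Gamma_1(n)\Delta^*=(1-\beta)C$ by a pointwise triangle inequality on each TD error (each $|\delta^*_{t,i}|\le\Delta^*$) and the identity $\sum_n c_n\Gamma_1(n)=(1-\beta)/(1-\gamma)$. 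For the second term, the difference is exactly linear in $\theta-\theta^*$:
\[
  g_t(\theta)-g_t(\theta^*)=\sum_n c_n\,\phi_0(\gamma^n\phi_n-\phi_0)\tran(\theta-\theta^*)=\sum_n c_n\,\phi_0(\xi_0-\gamma^n\xi_n),
\]
and then Jensen, $\sqnorm{\phi_0}\le1$, $(a-b)^2\le 2a^2+2b^2$, and $\E[\xi_n^2]=\norm{v_{\theta^*}-v_\theta}_\mD^2$ give $2\sum_n c_n(1+\gamma^{2n})\norm{v_{\theta^*}-v_\theta}_\mD^2\le 2(1+\beta)\norm{v_{\theta^*}-v_\theta}_\mD^2$. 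Doubling both pieces yields the stated constants. The essential fix relative to your attempt is to evaluate the return at $\theta^*$ first, so that the variance really is $\sigma^2$ and the bias really is bounded by $(1-\beta)C$, and to let the \emph{entire} $\theta$-dependence (both the subtracted value and the bootstrap) sit in the difference $g_t(\theta)-g_t(\theta^*)$.
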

\begin{proof}
    Let
    $\delta^*_{t,i} \defeq R_i + \gamma \phi_{i+1}\tran \theta^* - \phi_i\tran \theta^*$
    and note that
    $\abs{\delta^*_{t,i}} \leq \Delta^*$
    for all
    $i \geq 0$
    by the triangle inequality and the bounded-feature assumption.
    Denote the $n$-step and compound errors constructed from $\theta^*$ by
    $\delta^{(n)}_t \defeq \sum_{i=0}^{n-1} \gamma^i \delta^*_{t,i}$
    and
    $\delta^\vc_t \defeq \sum_{n=1}^\infty c_n \delta^{(n)}_t$,
    respectively.
    We have
    \begin{equation}
        \label{eq:2nd_moment_bound_init}
        \expect{\sqnorm{g_t(\theta^*)}}
        = \expect{\sqnorm{\delta_t^\vc \phi_0}}
        \leq \expect{(\delta_t^\vc)^2}
        = \expect{\delta^\vc_t}^2 + \sigma^2
        \,,
    \end{equation}
    where the inequality follows from the assumption that $\sqnorm{\phi_0} \leq 1$.
    The absolute value of the expectation can be bounded using the triangle inequality:
    \begin{align}
        \label{eq:abs_expected_compound_error}
        \Big| \expect{\delta^\vc_t} \! \Big|
        = \abs{\expect{\sum_{n=1}^\infty c_n \delta_t^{(n)}}}
        \leq \sum_{n=1}^\infty c_n \gammafunc{1}{n} \Delta^*
        = \frac{1-\beta}{1-\gamma} \Delta^*
        = (1-\beta) C
        \,.
    \end{align}
    The identity
    $\sum_{n=1}^\infty c_n \Gamma_1(n) = (1-\beta) \mathbin{/} (1-\gamma)$
    comes from \Cref{eq:Gamma_invariant}.
    \Cref{eq:2nd_moment_bound_init,eq:abs_expected_compound_error} imply
    \begin{equation}
        \label{eq:2nd_moment_bound_final}
        \expect{\sqnorm{g_t(\theta^*)}}
        \leq (1-\beta)^2 C^2 + \sigma^2
        \,.
    \end{equation}
    Recall that
    $\expect{\xi_i^2} = \norm{v_{\theta^*} - v_\theta}_\mD^2$ for all $i \geq 0$.
    Next, we show
    \begin{align}
        \nonumber
        \expect{\sqnorm{g_t(\theta) - g_t(\theta^*)}}
        &= \expect{\sqnorm{\sum_{n=1}^\infty c_n \phi_0 (\gamma^n \phi_n - \phi_0)\tran (\theta - \theta^*)}} \\
        \nonumber
        &= \expect{\sqnorm{\sum_{n=1}^\infty c_n \phi_0 (\xi_0 - \gamma^n \xi_n)}} \\
        \nonumber
        &\leq \sum_{n=1}^\infty c_n \expect{\sqnorm{\phi_0 (\xi_0 - \gamma^n \xi_n)}} \\
        \nonumber
        &\leq \sum_{n=1}^\infty c_n \expect{(\xi_0 - \gamma^n \xi_n)^2} \\
        \nonumber
        &\leq 2 \sum_{n=1}^\infty c_n \left( \expect{\xi_0^2} + \gamma^{2n} \expect{\xi_n^2} \right) \\
        \nonumber
        &= 2 \sum_{n=1}^\infty c_n (1 + \gamma^{2n}) \norm{v_{\theta^*} - v_\theta}_\mD^2 \\
        \nonumber
        &\leq 2 \sum_{n=1}^\infty c_n (1 + \gamma^n) \norm{v_{\theta^*} - v_\theta}_\mD^2 \\
        \label{eq:diff_2nd_moment_bound}
        &= 2 (1 + \beta) \norm{v_{\theta^*} - v_\theta}_\mD^2
        \,.
    \end{align}
    The four inequalities respectively follow from Jensen's inequality, the bounded-feature assumption
    $\sqnorm{\phi} \leq 1$,
    the triangle inequality,
    and the fact that $\gamma^{2n} \leq \gamma^n$.
    The final equality comes from the definition of the contraction modulus, the coefficient of the left-hand side of \Cref{eq:compound_erp}.
    Combining \Cref{eq:2nd_moment_bound_final,eq:diff_2nd_moment_bound} gives the final result:
    \begin{align*}
        \expect{\sqnorm{g_t(\theta)}}
        &\leq \expect{\left(\norm{g_t(\theta^*)}_2 + \norm{g_t(\theta) - g_t(\theta^*)}_2\right)^2} \\
        &\leq 2 \expect{\sqnorm{g_t(\theta^*)}} + 2 \expect{\sqnorm{g_t(\theta) - g_t(\theta^*)}} \\
        &\leq 2 (1-\beta)^2 C^2 + 2 \sigma^2 + 4 (1+\beta) \norm{v_{\theta^*} - v_\theta}_\mD^2
        \,,
    \end{align*}
    where the second inequality uses the algebraic identity
    $(x + y)^2 \leq 2x^2 + 2y^2$.
\end{proof}

We are now ready to derive the finite-time bound.
We restate \Cref{theorem:ft_analysis} and then provide the proof.

\theoremftanalysis*

\begin{proof}
TD learning updates the parameters according to \Cref{eq:td_lfa}.
Therefore,
\begin{align*}
    \sqnorm{\theta^* - \theta_{t+1}}
    &= \sqnorm{\theta^* - \theta_t - \alpha \mathop{g_t(\theta_t)}} \\
    &= \sqnorm{\theta^* - \theta_t} - 2 \alpha \mathop{g_t(\theta_t)\tran} (\theta^* - \theta_t) + \alpha^2 \sqnorm{g_t(\theta)}
    \,.
\end{align*}
Taking the expectation and then applying \Cref{lemma:dot_lower_bound,lemma:grad_2nd_moment} gives
\begin{align*}
    \expect{\sqnorm{\theta^* - \theta_{t+1}}}
    &= \expect{\sqnorm{\theta^* - \theta_t}} - 2 \alpha \expect{\mathop{g_t(\theta_t)\tran} (\theta^* - \theta_t)} + \alpha^2 \expect{\sqnorm{g_t(\theta)}} \\
    &= \expect{\sqnorm{\theta^* - \theta_t}} - 2 \alpha \expect{\expect{\mathop{g_t(\theta_t)\tran} (\theta^* - \theta_t)} \mid \theta_t} + \alpha^2 \expect{\expect{\sqnorm{g_t(\theta)}} \mid \theta_t} \\
    \nonumber
    &\leq \expect{\sqnorm{\theta^* - \theta_t}} - \left(2 \alpha (1-\beta) - 4 \alpha^2 (1+\beta)\right) \norm{v_{\theta^*} - v_\theta}_\mD^2 + 2 \alpha^2 \left((1-\beta)^2 C^2 + \sigma^2\right) \\
    &\leq \expect{\sqnorm{\theta^* - \theta_t}} - \alpha (1-\beta) \norm{v_{\theta^*} - v_\theta}_\mD^2 + 2 \alpha^2 \left((1-\beta)^2 C^2 + \sigma^2\right)
    \,.
\end{align*}
The first inequality is due to \Cref{lemma:dot_lower_bound,lemma:grad_2nd_moment}, which are applicable due to the i.i.d.\ setting (because the trajectory influencing $g_t$ is independent of $\theta_t$).
The second inequality follows from the assumption that $\alpha \leq (1-\beta) \mathbin{/} 4$.
Rearranging the above inequality gives us
\begin{align*}
    \expect{\norm{v_{\theta^*} - v_{\theta_t}}_\mD^2}
    \leq \frac{\sqnorm{\theta^* - \theta_t} - \sqnorm{\theta^* - \theta_{t+1}} + 2 \alpha^2 \left((1-\beta)^2 C^2 + \sigma^2\right)}{\alpha (1-\beta)}
    \,.
\end{align*}
Summing over $T$ iterations and then invoking the assumption that $\alpha = 1 \mathbin{/} \sqrt{T}$:
\begin{align*}
    \sum_{t=0}^{T-1} \expect{\norm{v_{\theta^*} - v_{\theta_t}}_\mD^2}
    &\leq \frac{\sqnorm{\theta^* - \theta_0} - \sqnorm{\theta^* - \theta_T}  + 2 \alpha^2 \left((1-\beta)^2 C^2 + \sigma^2\right) T}{\alpha (1-\beta)} \\
    &\leq \frac{\sqnorm{\theta^* - \theta_0} + 2 \alpha^2 \left((1-\beta)^2 C^2 + \sigma^2\right) T}{\alpha (1-\beta)} \\
    &= \frac{\sqnorm{\theta^* - \theta_0} \sqrt{T} + 2 \left((1-\beta)^2 C^2 + \sigma^2\right) \sqrt{T}}{1-\beta}
    \,.
\end{align*}
We therefore conclude that
\begin{equation*}
    \expect{\norm{v_{\theta^*} - v_{\bar{\theta}_T}}_\mD^2}
    \leq \frac{1}{T} \sum_{t=0}^{T-1} \expect{\norm{v_{\theta^*} - v_{\theta_t}}_\mD^2}
    \leq \frac{\sqnorm{\theta^* - \theta_0} + 2 (1-\beta)^2 C^2 + 2 \sigma^2}{(1-\beta) \sqrt{T}}
    \,,
\end{equation*}
which completes the bound.
\end{proof}

\subsection{Proof of \Cref{prop:td_solution_quality}}
\label{app:prop_td_solution_quality}

\proptdsolutionquality*

\begin{proof}
Our proof generalizes the bound for the $\lambda$-return operator given by \citet[][Lemma~6]{tsitsiklis1997analysis}.
Let $T_\pi \colon v \mapsto r + \gamma \mP_\pi v$ be the Bellman operator, where $\mP_\pi$ is the stochastic transition matrix of the MDP under policy $\pi$.
The $n$-step Bellman operator is denoted by $T_\pi^n$, where $T_\pi^n v \defeq T_\pi (T_\pi^{n-1} v)$ and $T_\pi^0 v \defeq v$.

A compound return corresponds to the operator $T_\pi^{(\vc)} \colon v \mapsto \sum_{n=1}^\infty c_n T_\pi^n v$.
This operator is a contraction mapping since it is a convex combination of contraction mappings.
Let $\beta$ be the contraction modulus of $\smash{T_\pi^{(\vc)}}$ for the given weights, $\vc$.
The compound TD procedure in \Cref{eq:backup} converges to a fixed point $\theta^*$ which is the unique solution of the following projected Bellman equation:
\begin{equation*}
    \mPi T_\pi^{(\vc)} (\mPhi \theta^*) = \mPhi \theta^*
    \,,
\end{equation*}
where $\mPi = \mPhi (\mPhi\tran \mD \mPhi)^{-1} \mPhi\tran \mD$ is the linear projection operator.
In their proof of Lemma~6, \citeauthor{tsitsiklis1997analysis} show that $\mPi$ is nonexpansive with respect to the norm $\norm{\cdot}_\mD$.
Therefore, for any compound return, it follows that
\begin{align*}
    \norm{\mPhi \theta^* - v_\pi}_\mD
    &\leq \norm{\mPhi \theta^* - \mPi v_\pi}_\mD + \norm{\mPi v_\pi - v_\pi}_\mD \\ 
    &= \norm{\mPi T_\pi^{(\vc)} (\mPhi \theta^*) - \mPi v_\pi}_\mD + \norm{\mPi v_\pi - v_\pi}_\mD \\
    &\leq \norm{T_\pi^{(\vc)}(\mPhi \theta^*) - v_\pi}_\mD + \norm{\mPi v_\pi - v_\pi}_\mD \\
    &\leq \beta \norm{\mPhi \theta^* - v_\pi}_\mD + \norm{\mPi v_\pi - v_\pi}_\mD
\end{align*}
Solving the inequality for $\norm{\mPhi \theta^* - v_\pi}_\mD$ gives the final bound.
\end{proof}

\subsection{Proof of \Cref{prop:lambda_effective_nstep}}
\label{app:prop_lambda_effective_nstep}

\propeffectivelambda*
\begin{proof}
\textbf{Case $\gamma < 1$:}
We substitute
$c_n = (1-\lambda) \lambda^{n-1}$
into the weighted average in \Cref{eq:compound_erp} to compute the contraction modulus of the $\lambda$-return:
\begin{equation*}
    \sum_{n=1}^\infty c_n \gamma^n
    = \sum_{n=1}^\infty (1-\lambda) \lambda^{n-1} \gamma^{n}
    = \gamma (1-\lambda) \sum_{n=1}^\infty (\gamma\lambda)^{n-1}
    = \frac{\gamma (1-\lambda)}{1-\gamma\lambda}
    \,.
\end{equation*}
We therefore seek $\lambda$ such that
\begin{equation*}
    \frac{\gamma (1-\lambda)}{1-\gamma\lambda} = \gamma^n
\end{equation*}
in order to equate the $\lambda$-return's contraction modulus to that of the given $n$-step return.
We multiply both sides of the equation by $1-\gamma\lambda$ and isolate $\lambda$ to complete the case:
\begin{align*}
    \gamma (1-\lambda) &= \gamma^n (1-\gamma\lambda) \\
    1-\lambda &= \gamma^{n-1} (1-\gamma\lambda) \\
    1-\lambda &= \gamma^{n-1} - \gamma^n \lambda \\
    \gamma^n \lambda - \lambda &= \gamma^{n-1} - 1 \\
    \lambda(\gamma^n - 1) &= \gamma^{n-1} - 1 \\
    \lambda &= (1 - \gamma^{n-1}) \mathbin{/} (1 - \gamma^n)
    \,.
\end{align*}

\textbf{Case $\gamma = 1$:}
We use \Cref{prop:effective_nstep} with
$c_n = (1-\lambda) \lambda^{n-1}$
to compute the effective $n$-step of the $\lambda$-return:
\begin{equation*}
    n
    = \sum_{k=1}^\infty (1-\lambda) \lambda^{k-1} k
    = (1-\lambda) \sum_{k=1}^\infty \lambda^{k-1} k
    = (1-\lambda) \frac{1}{(1-\lambda)^2}
    = \frac{1}{1-\lambda}
    \,.
\end{equation*}
Rearranging the equation $n = 1 \mathbin{/} (1-\lambda)$ for $\lambda$ gives the final result of $\lambda = (n-1) \mathbin{/} n$.
\end{proof}

\clearpage
\section{Pilar: Piecewise \texorpdfstring{$\lambda$}{λ}-Return}
\label{app:pilar}

\begin{wraptable}{r}{0.4\textwidth}
    \caption{$n$-step returns and Pilars with equal contraction moduli when $\gamma = 0.99$.}
    \label{tab:pilar}
    \begin{center}
    \begin{tabular}{l|rrr}
        \toprule
        effective\\$n$-step & $n_1$ & $n_2$ & $c$ \\
        \midrule
        2 & 1 & 4 & 0.337 \\
        3 & 1 & 6 & 0.406 \\
        4 & 2 & 7 & 0.406 \\
        5 & 2 & 9 & 0.437 \\
        10 & 4 & 16 & 0.515 \\
        20 & 6 & 35 & 0.519 \\
        25 & 8 & 43 & 0.530 \\
        50 & 13 & 79 & 0.640 \\
        100 & 22 & 147 & 0.760 \\
        \bottomrule
    \end{tabular}
    \end{center}
\end{wraptable}

We present a basic search algorithm for finding the corresponding Pilar for a given $n$-step return (see \Cref{algo:pilar}).
The algorithm accepts the desired effective $n$-step (which does not need to be an integer necessarily) as its only argument and returns the values $(n_1,n_2,c)$ such that the two-bootstrap return
${(1-c) \nstep{n_1}_t} + {c \nstep{n_2}_t}$
minimizes the maximum absolute difference between its cumulative weights and those of the $\lambda$-return with the same effective $n$-step.
The algorithm proceeds as follows.
For each $n_1 \in \{1, \dots, \lfloor n \rfloor\}$, scan through $n_2 \in \{n_1 + 1, n_1 + 2, \dots,\}$ until the error stops decreasing.
Every time a better $(n_1,n_2)$-pair is found, record the values, and return the last recorded values upon termination.
The resulting Pilar has the same contraction modulus as the targeted $n$-step return;
thus, their error-reduction properties are the same, but the Pilar's variance is lower by \Cref{theorem:variance_reduction}.
To modify the search algorithm for undiscounted rewards, we just need to change $\lambda$ and $c$ such that they equate the COMs---rather than the contraction moduli---of the two returns.
We also include this case in \Cref{algo:pilar}.

We populate \Cref{tab:pilar} with corresponding Pilar values for several common $n$-step returns when $\gamma=0.99$.
A discount factor of $\gamma=0.99$ is extremely common in deep RL, and so it is hoped that this table serves as a convenient reference that helps practitioners avoid redundant searches with \Cref{algo:pilar}.

\begin{minipage}{0.5\textwidth}
\begin{algorithm}[H]
\caption{Pilar($n$)}
\label{algo:pilar}
\begin{algorithmic}[1]
    \State {\bfseries require} $n \geq 1$,\enskip $\gamma \in (0,1)$
    \State $\lambda = \begin{cases}
        (1-\gamma^{n-1}) \mathbin{/} (1-\gamma^n) \text{ if } \gamma < 1 \\
        (n-1) \mathbin{/} n \text{ if } \gamma = 1
    \end{cases}$
    \State $\text{best\_error} \gets \infty$
    \For{$n_1 = 1, \dots, \lfloor n\rfloor$}
        \State $n_2 \gets \lfloor n\rfloor$
        \State $\text{error} \gets \infty$
        \Repeat
            \State $n_2 \gets n_2 + 1$
            \State $c = \begin{cases}
                (\gamma^n - \gamma^{n_1}) \mathbin{/} (\gamma^{n_2} - \gamma^{n_1}) \text{ if } \gamma < 1 \\
                (n - n_1) \mathbin{/} (n_2 - n_1) \text{ if } \gamma = 1
            \end{cases}$
            \State $\text{prev\_error} \gets \text{error}$
            \State $\text{error} \gets \Call{error}{\lambda,n_1,n_2,c}$
            \If{$\text{error} < \text{best\_error}$}
                \State $\text{values} \gets (n_1,n_2,c)$
                \State $\text{best\_error} \gets \text{error}$
            \EndIf
        \Until{$\text{error} \geq \text{prev\_error}$}
    \EndFor
    \State \Return $\text{values}$
    \Statex
    \Function{error}{$\lambda,n_1,n_2,c$}
        \State Let $h_i = \begin{cases}
            1 & \text{if } i < n_1 \\
            c & \text{else if } i < n_2 \\
            0 & \text{else} \\
        \end{cases}$
        \State \Return $\max_{i \geq 0} \abs{\gamma^i h_i - (\gamma\lambda)^i}$
    \EndFunction
\end{algorithmic}
\end{algorithm}
\end{minipage}

\clearpage
\section{Experiment Setup and Additional Results}
\label{app:experiment_setup}

Our DQN experiment procedure closely matches that of \citet{young2019minatar}.
The only differences in our methodology are the mean-squared loss (rather than Huber loss), the Adam optimizer \citep{kingma2015adam}, and multistep returns.
MinAtar represents states as $10 \times 10 \times 7$ binary images.
The agents process these with a convolutional network;
the first layer is a 16-filter $3 \times 3$ convolutional layer, the output of which is flattened and then followed by a dense layer with 128 units.
Both layers use ReLU activations.

The agents were trained for 5 million time steps each.
They executed a random policy for the first 5k time steps to prepopulate the replay buffer (capacity: 100k transitions), and then switched to an $\epsilon$-greedy policy for the remainder of training, with $\epsilon$ annealed linearly from $1$ to $0.1$ over the next 100k steps.
Every step, the main network was updated using a minibatch of 32 return estimates to minimize the loss in \Cref{eq:dqn_multistep}.
The target network's parameters were copied from the main network every 1k time steps.

To obtain the $n$-step returns, the replay buffer is modified to return a minibatch of sequences of $n+1$ experiences for each return estimate (instead of the usual two experiences for DQN).
The return is computed by summing the first $n$ rewards and then adding the value-function bootstrap from the final experience, with discounting if $\gamma < 1$.
If the episode terminates at any point within this trajectory, then the return is truncated and no bootstrapping is necessary, since the value of a terminal state is defined to be zero.
For Pilars, the idea is the same, but the trajectories must have length $n_2 + 1$ to accommodate the lengths of both $n$-step returns.
The two returns are computed as above, and then combined by averaging them:
$(1-c) \nstep{n_1} + c \nstep{n_2}$.

We show the learning curves for each return estimator using its best step size in \Cref{fig:lcurves_dqn_all}, where the step size was chosen from
$\{10^{-5}, 3 \times 10^{-5}, 10^{-4}, 3 \times 10^{-4}, 10^{-3}\}$.
We observe that Pilars perform no worse than $n$-step returns in the MinAtar domain, and sometimes significantly better.
The relative improvement between Pilars and $n$-step returns tends to widen for $n=5$, suggesting that Pilars are better at tolerating the higher variance in this setting.

In \Cref{fig:lcurves_ppo_all}, we also show similar learning curves for the PPO experiments.
Other than the return estimators themselves, we use the default hyperparameters from the CleanRL implementation, which are similar to those of \citet{schulman2017proximal}.
The network architectures for the actor and critic are 2-layer, 64-unit dense networks with tanh activations.

\begin{figure}[ht]
    \centering
    \includegraphics[width=0.24\textwidth]{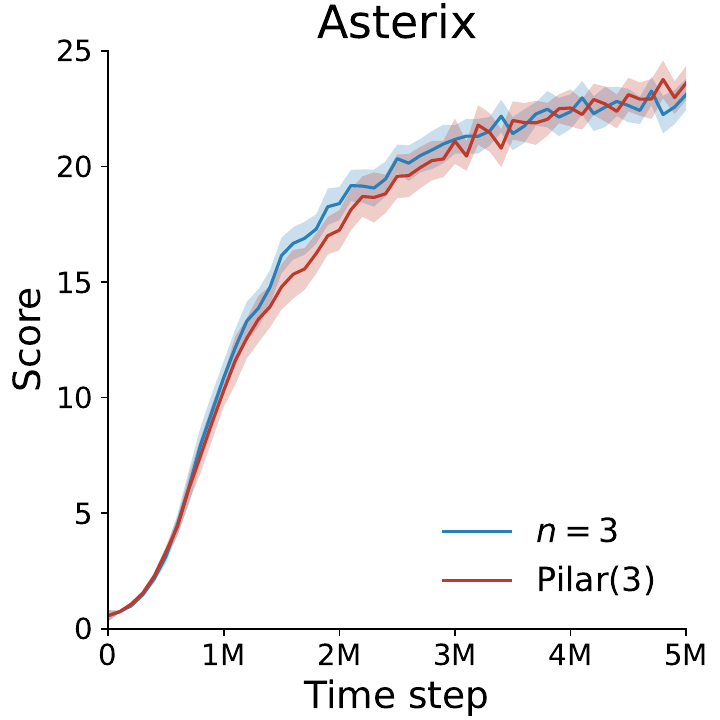}
    \hfill
    \includegraphics[width=0.24\textwidth]{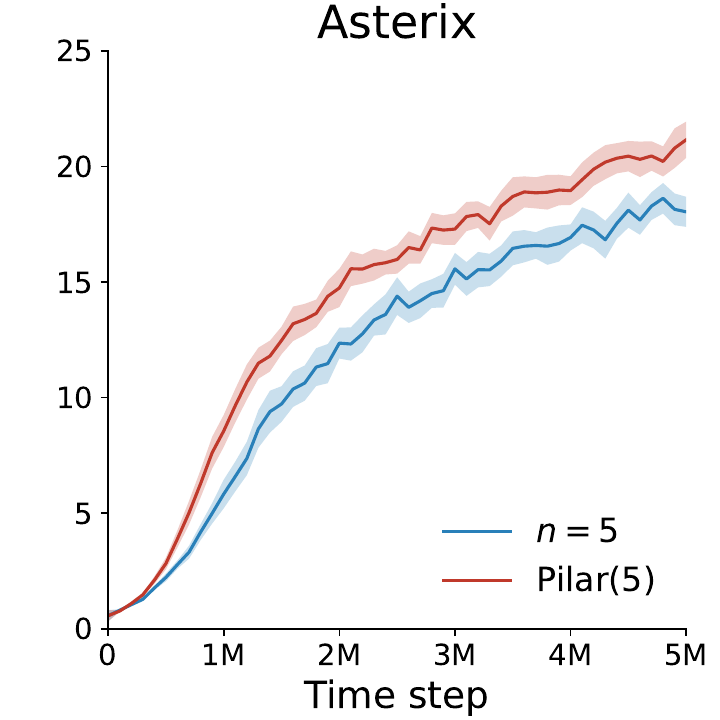}
    \hfill
    \includegraphics[width=0.24\textwidth]{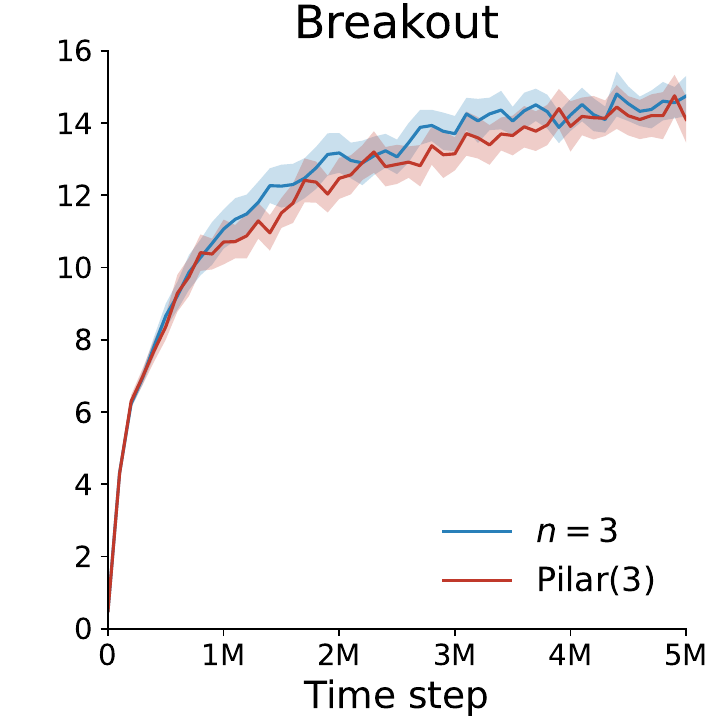}
    \hfill
    \includegraphics[width=0.24\textwidth]{dqn_breakout_n5_no_ylabel.pdf}

    \vspace{0.15in}

    \includegraphics[width=0.24\textwidth]{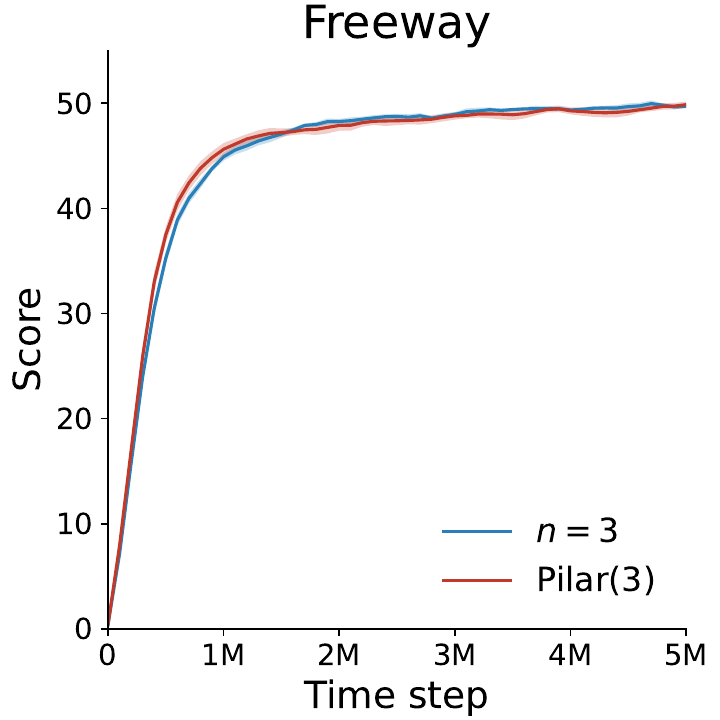}
    \hfill
    \includegraphics[width=0.24\textwidth]{dqn_freeway_n5_no_ylabel.pdf}
    \hfill
    \includegraphics[width=0.24\textwidth]{dqn_seaquest_n3_no_ylabel.pdf}
    \hfill
    \includegraphics[width=0.24\textwidth]{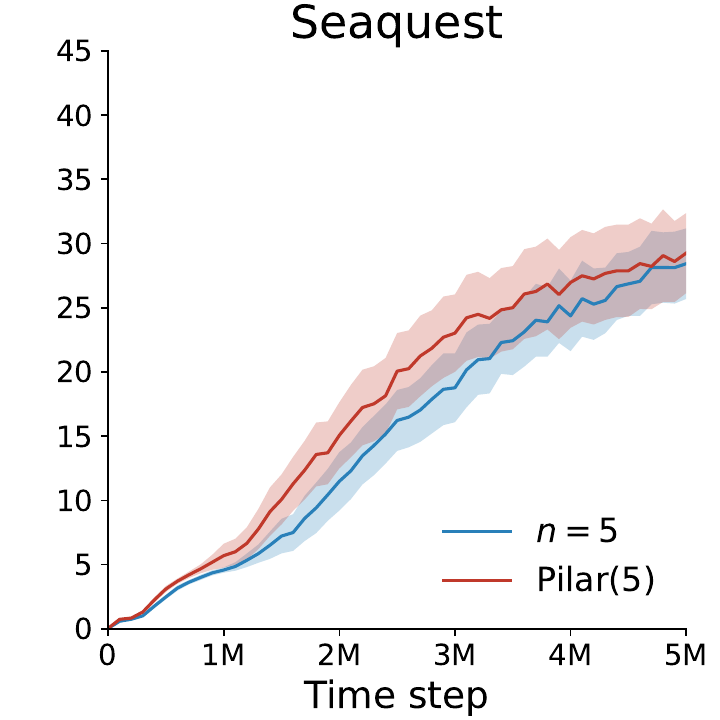}

    \vspace{0.15in}

    \hfill
    \includegraphics[width=0.24\textwidth]{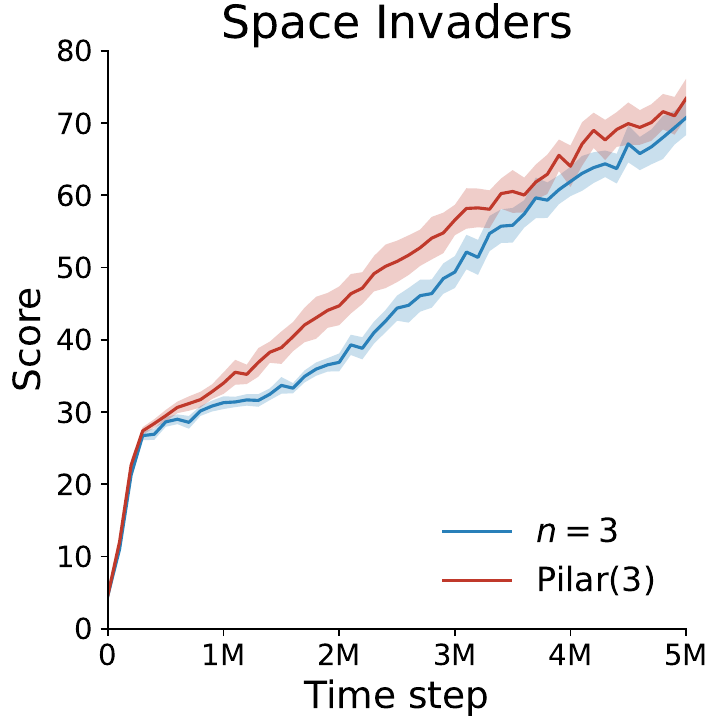}
    \hfill
    \includegraphics[width=0.24\textwidth]{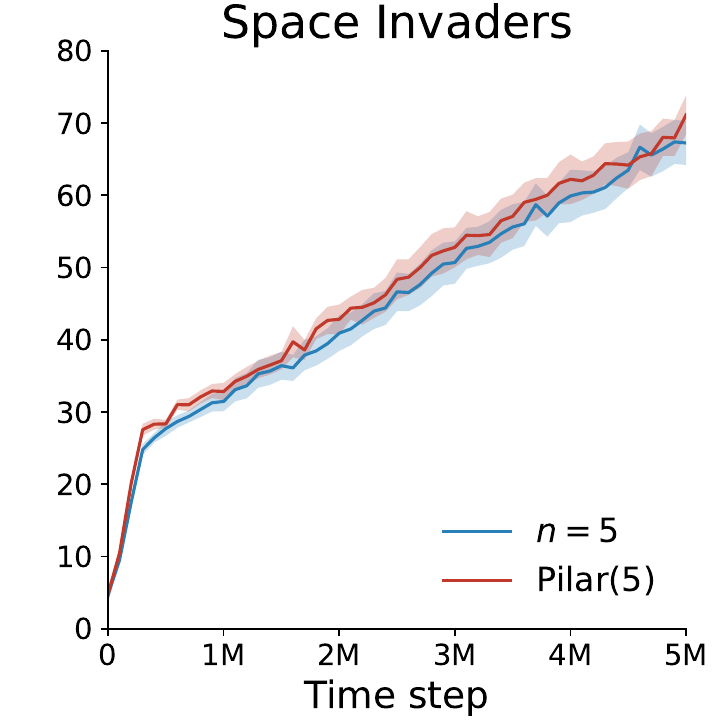}
    \hfill
    \caption{
        Learning curves for DQN with $n$-step returns and Pilars in five MinAtar games.
    }
    \label{fig:lcurves_dqn_all}
\end{figure}

\begin{figure}[ht]
    \centering
    \includegraphics[width=0.32\textwidth]{ppo_halfcheetah_n5.pdf}
    \hfill
    \includegraphics[width=0.32\textwidth]{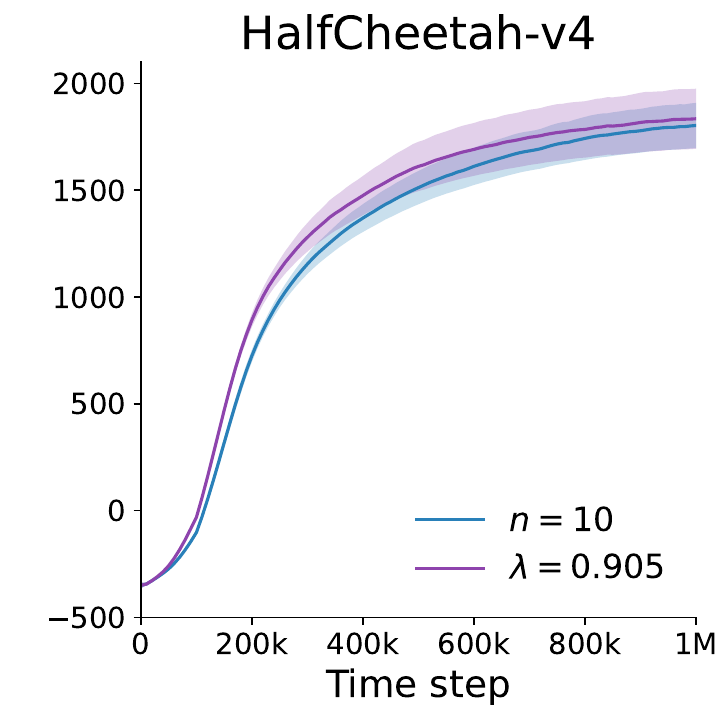}
    \hfill
    \includegraphics[width=0.32\textwidth]{ppo_halfcheetah_n20_no_ylabel.pdf}

    \vspace{0.15in}

    \includegraphics[width=0.32\textwidth]{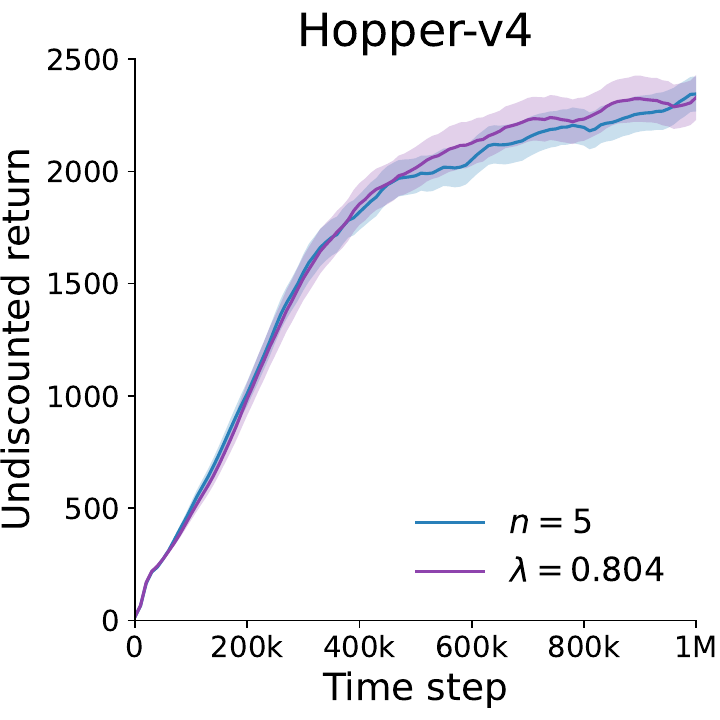}
    \hfill
    \includegraphics[width=0.32\textwidth]{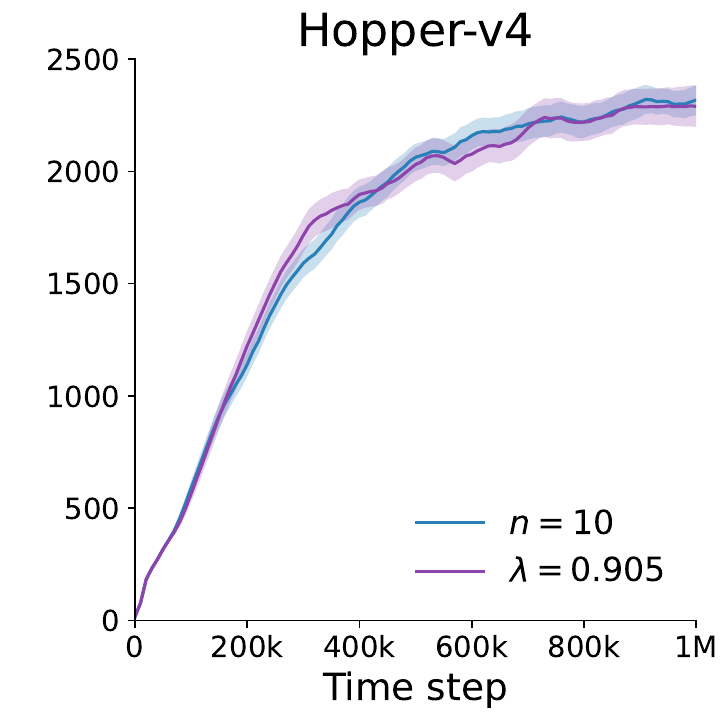}
    \hfill
    \includegraphics[width=0.32\textwidth]{ppo_hopper_n20_no_ylabel.pdf}

    \vspace{0.15in}

    \includegraphics[width=0.32\textwidth]{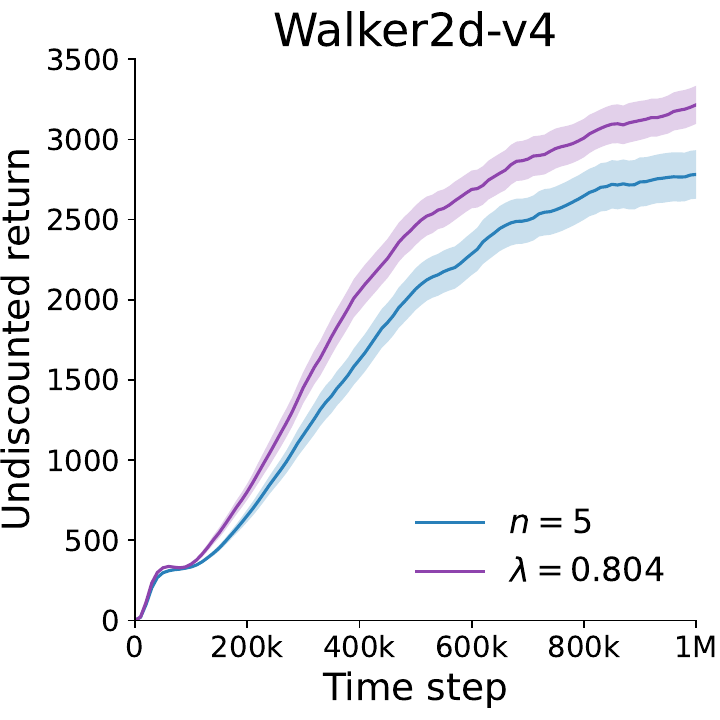}
    \hfill
    \includegraphics[width=0.32\textwidth]{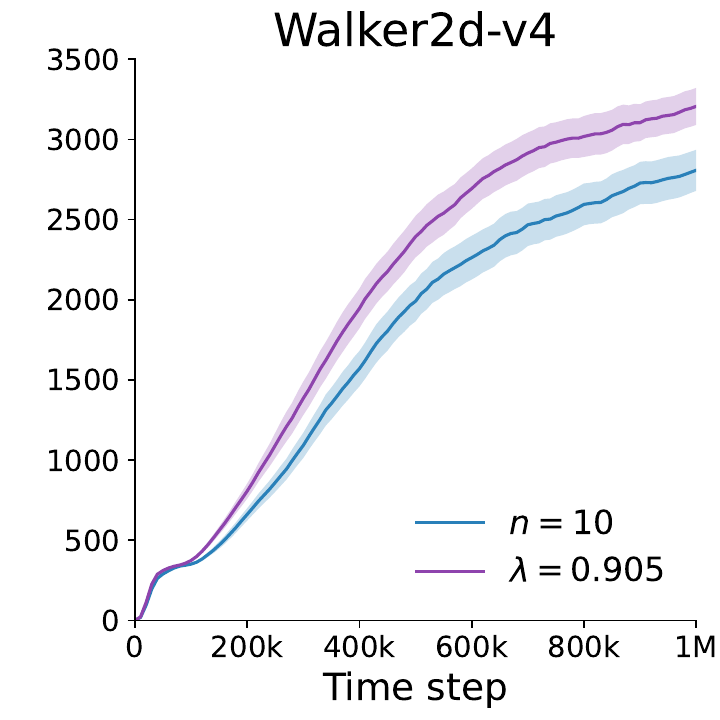}
    \hfill
    \includegraphics[width=0.32\textwidth]{ppo_walker2d_n20_no_ylabel.pdf}
    \caption{
        Learning curves for PPO with $n$-step returns and $\lambda$-returns in three MuJoCo environments.
    }
    \label{fig:lcurves_ppo_all}
\end{figure}


\end{document}